\documentclass[10pt]{article}

\usepackage{comment,url,algorithm,algorithmic,graphicx,subcaption,relsize}
\usepackage{amssymb,amsfonts,amsmath,amsthm,amscd,dsfont,mathrsfs,mathtools,microtype,nicefrac,pifont}
\usepackage{float,psfrag,epsfig,color,url,hyperref}
\usepackage{upgreek}
\usepackage[dvipsnames]{xcolor}
\usepackage{epstopdf,bbm,mathtools,enumitem}
\usepackage[toc,page]{appendix}
\usepackage[mathscr]{euscript}
\usepackage[giveninits=true, maxnames=5, style=alphabetic]{biblatex}





\def\balign#1\ealign{\begin{align}#1\end{align}}
\def\baligns#1\ealigns{\begin{align*}#1\end{align*}}
\def\balignat#1\ealign{\begin{alignat}#1\end{alignat}}
\def\balignats#1\ealigns{\begin{alignat*}#1\end{alignat*}}
\def\bitemize#1\eitemize{\begin{itemize}#1\end{itemize}}
\def\benumerate#1\eenumerate{\begin{enumerate}#1\end{enumerate}}

\newenvironment{talign*}
 {\csname align*\endcsname}
 {\endalign}
\newenvironment{talign}
 {\csname align\endcsname}
 {\endalign}

\def\balignst#1\ealignst{\begin{talign*}#1\end{talign*}}
\def\balignt#1\ealignt{\begin{talign}#1\end{talign}}



\let\originalleft\left
\let\originalright\right
\renewcommand{\left}{\mathopen{}\mathclose\bgroup\originalleft}
\renewcommand{\right}{\aftergroup\egroup\originalright}


\def\tinycitep*#1{{\tiny\citep*{#1}}}
\def\tinycitealt*#1{{\tiny\citealt*{#1}}}
\def\tinycite*#1{{\tiny\cite*{#1}}}
\def\smallcitep*#1{{\scriptsize\citep*{#1}}}
\def\smallcitealt*#1{{\scriptsize\citealt*{#1}}}
\def\smallcite*#1{{\scriptsize\cite*{#1}}}

\newcommand{\LL}{\mathcal{L}}

\ifdefined\nonewproofenvironments\else
\ifdefined\ispres\else
\newtheorem{theorem}{Theorem}
\newtheorem{lemma}[theorem]{Lemma}
\newtheorem{corollary}[theorem]{Corollary}

\renewenvironment{proof}{\noindent\textbf{Proof.}\hspace*{.3em}}{\qed \vspace{.1in}}
\newenvironment{proof-sketch}{\noindent\textbf{Proof Sketch}
  \hspace*{1em}}{\qed\bigskip\\}
\newenvironment{proof-idea}{\noindent\textbf{Proof Idea}
  \hspace*{1em}}{\qed\bigskip\\}
\newenvironment{proof-of-lemma}[1][{}]{\noindent\textbf{Proof of Lemma {#1}}
  \hspace*{1em}}{\qed\\}
\newenvironment{proof-of-theorem}[1][{}]{\noindent\textbf{Proof of Theorem {#1}}
  \hspace*{1em}}{\qed\\}
\newenvironment{proof-attempt}{\noindent\textbf{Proof Attempt}
  \hspace*{1em}}{\qed\bigskip\\}

\newenvironment{remark}{\noindent\textbf{Remark.}
  \hspace*{0em}}{\smallskip}

\fi

\newtheorem{assumption}{Assumption}
\theoremstyle{definition}

\fi
\makeatletter
\@addtoreset{equation}{section}
\makeatother

\hypersetup{
  colorlinks,
  linkcolor={red!50!black},
  citecolor={blue!50!black},
  urlcolor={blue!80!black}
}


\addbibresource{ref.bib}

\usepackage{ragged2e}

\usepackage{custom}
\usepackage[normalem]{ulem}
\usepackage[top=1in, bottom=1in, left=1in, right=1in]{geometry}
\makeatletter
\renewcommand{\paragraph}{%
  \@startsection{paragraph}{4}%
  {\z@}{1.25ex \@plus 1ex \@minus .2ex}{-1em}%
  {\normalfont\normalsize\bfseries}%
}
\makeatother

\mathtoolsset{showonlyrefs}

\renewcommand{\pinfqT}{p^{q_T}}
\renewcommand{\Pinf}{P^{q_T}}
\renewcommand{\bPinf}{\bs P^{q_T}}
\renewcommand{\XinfqT}{X^{\leftarrow,q_T}}

\begin{document}

\title{Sampling is as easy as learning the score: theory for diffusion models with minimal data assumptions}

 \author{\!\!\!\!\!
  Sitan Chen\thanks{
  Department of EECS at University of California, Berkeley, \texttt{sitan@seas.harvard.edu}.
 }
  \ \ \
 Sinho Chewi\thanks{
  Department of Mathematics at
  Massachusetts Institute of Technology, \texttt{schewi@mit.edu}. Part of this work was done while SC was a research intern at Microsoft Research.
 }
 \ \ \
 Jerry Li\thanks{Microsoft Research, \texttt{jerrl@microsoft.com}.
 }
  \ \ \
 Yuanzhi Li\thanks{Microsoft Research and Machine Learning Department at Carnegie Mellon University, \texttt{yuanzhil@andrew.cmu.edu}.
 }
  \ \ \
 Adil Salim\thanks{Microsoft Research, \texttt{adilsalim@microsoft.com}.
 }
  \ \ \
 Anru R. Zhang\thanks{Departments of Biostatistics \& Bioinformatics, Computer Science, Mathematics, and Statistical Science at Duke University, \texttt{anru.zhang@duke.edu}.}
}

\maketitle

\begin{abstract}
        We provide theoretical convergence guarantees for score-based generative models (SGMs) such as denoising diffusion probabilistic models (DDPMs), which constitute the backbone of large-scale real-world generative models such as DALL$\cdot$E 2. Our main result is that, assuming accurate score estimates, such SGMs can efficiently sample from essentially any realistic data distribution. In contrast to prior works, our results (1) hold for an $L^2$-accurate score estimate (rather than $L^\infty$-accurate); (2) do not require restrictive functional inequality conditions that preclude substantial non-log-concavity; (3) scale polynomially in all relevant problem parameters; and (4) match state-of-the-art complexity guarantees for discretization of the Langevin diffusion, provided that the score error is sufficiently small. We view this as strong theoretical justification for the empirical success of SGMs. We also examine SGMs based on the critically damped Langevin diffusion (CLD).
    Contrary to conventional wisdom, we provide evidence that the use of the CLD does \emph{not} reduce the complexity of SGMs.
\end{abstract}

\section{Introduction}

Score-based generative models (SGMs) are a family of generative models which achieve state-of-the-art performance for generating audio and image data~\cite{sohletal2015nonequilibrium, ho2020denoising, dhanic2021diffusionbeatsgans, kingetal2021variationaldiffusion, songetal2021mlescorebased, songetal2021scorebased, vahkrekau2021scorebased}; see, e.g., the recent surveys~\cite{caoetal2022diffusionsurvey, croetal2022diffusionmodels, yangetal2022diffusionsurvey}. One notable example of an SGM are denoising diffusion probabilistic models (DDPMs)~\cite{sohletal2015nonequilibrium,ho2020denoising}, which are a key component in large-scale generative models such as DALL$\cdot$E 2 \cite{ramesh2022hierarchical}. As the importance of SGMs continues to grow due to newfound applications in commercial domains, it is a pressing question of both practical and theoretical concern to understand the mathematical underpinnings which explain their startling empirical successes.

As we explain in more detail in Section~\ref{scn:background}, at their mathematical core, SGMs consist of two stochastic processes, which we call the forward process and the reverse process.
The forward process transforms samples from a data distribution $q$ (e.g., natural images) into pure noise, whereas the reverse process transforms pure noise into samples from $q$, hence performing generative modeling. Implementation of the reverse process requires estimation of the \emph{score function} of the law of the forward process, which is typically accomplished by training neural networks on a score matching objective~\cite{hyv2005scorematching, vin2011scorematching, sonerm2019estimatinggradients}.

Providing precise guarantees for estimation of the score function is difficult, as it requires an understanding of the non-convex training dynamics of neural network optimization that is currently out of reach. However, given the empirical success of neural networks on the score estimation task, a natural and important question is whether or not accurate score estimation implies that SGMs provably converge to the true data distribution in realistic settings. 
This is a surprisingly delicate question, as even with accurate score estimates, as we explain in Section~\ref{scn:background_ddpm}, there are several other sources of error which could cause the SGM to fail to converge.
Indeed, despite a flurry of recent work on this question~\cite{debetal2021scorebased, blomrorak2020generative, deb2022manifold, liu2022let, leelutan2022generative, pid2022manifolds}, prior analyses fall short of answering this question, for (at least) one of three main reasons:
\begin{enumerate}[leftmargin=*,itemsep=0pt]
    \item {\bf Super-polynomial convergence.} The bounds obtained are not quantitative (e.g.,~\cite{debetal2021scorebased, liu2022let, pid2022manifolds}), or scale exponentially in the dimension and other problem parameters~\cite{blomrorak2020generative,deb2022manifold}, and hence are typically vacuous for the high-dimensional settings of interest in practice.
    \item {\bf Strong assumptions on the data distribution.} The bounds require strong assumptions on the true data distribution, such as a log-Sobelev inequality (LSI) (see, e.g.,~\cite{leelutan2022generative}). While the LSI is slightly weaker than log-concavity, it ultimately precludes the presence of substantial non-convexity, which impedes the application of these results to complex and highly multi-modal real-world data distributions.
    Indeed, obtaining a polynomial-time convergence analysis for SGMs that holds for multi-modal distributions was posed as an open question in~\cite{leelutan2022generative}.
    \item {\bf Strong assumptions on the score estimation error.} The bounds require that the score estimate is $L^\infty$-accurate (i.e., \emph{uniformly} accurate), as opposed to $L^2$-accurate (see, e.g., \cite{debetal2021scorebased}). This is particularly problematic because the score matching objective is an $L^2$ loss (see Section~\ref{scn:background} for details), and there are empirical studies suggesting that in practice, the score estimate is not in fact $L^\infty$-accurate (e.g.,~\cite{zhache2022exponentialint}). 
    Intuitively, this is because we cannot expect that the score estimate we obtain in practice will be accurate in regions of space where the true density is very low, simply because we do not expect to see many (or indeed, any) samples from such regions.
\end{enumerate}
Providing an analysis which goes beyond these limitations is a pressing first step towards theoretically understanding why SGMs actually work in practice.

\paragraph{Concurrent work.} The concurrent and independent work of~\cite{leelutannew} also obtains similar guarantees to our Corollary~\ref{cor:compact_data}.

\subsection{Our contributions} 
In this work, we take a step towards bridging theory and practice by providing a convergence guarantee for SGMs, under realistic (in fact, quite minimal) assumptions, which scales polynomially in all relevant problem parameters. Namely, our main result (Theorem~\ref{thm:ddpm}) only requires the following assumptions on the data distribution $q$, which we make more quantitative in Section~\ref{scn:results}:
\begin{description}
    \item[A1] The score function of the forward process is $L$-Lipschitz.
    \item[A2] The second moment of $q$ is finite.
    \item[A3] The data distribution $q$ has finite KL divergence w.r.t.\ the standard Gaussian.
\end{description}
We note that all of these assumptions are either standard or, in the case of {\bf A2}, far weaker than what is needed in prior work.
Crucially, unlike prior works, we do \emph{not} assume log-concavity, an LSI, or dissipativity; hence, our assumptions cover \emph{arbitrarily non-log-concave} data distributions. Our main result is summarized informally as follows.
 \begin{theorem}[informal, see Theorem~\ref{thm:ddpm}]
 \label{thm:ddpm-informal}
     Under assumptions {\bf A1-A3}, and if the score estimation error in $L^2$ is at most $\widetilde O(\varepsilon)$, then with an appropriate choice of step size, the SGM outputs a measure which is $\varepsilon$-close in total variation (TV) distance to $q$ in $\widetilde O(L^2 d/\varepsilon^2)$ iterations.
 \end{theorem}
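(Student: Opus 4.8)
The plan is to bound $\mathrm{TV}(q,\widehat q)$, where $\widehat q$ is the algorithm's output, by splitting the error into three sources — the exponential convergence of the forward process to its stationary Gaussian, the $L^2$ score estimation error, and the discretization error of the exponential integrator — and controlling the last two together through a careful application of Girsanov's theorem to the reverse SDE. Write the forward process as $\dee X_t = -X_t\,\dt + \sqrt 2\,\dee B_t$ with $X_0\sim q$ and marginals $q_t$, so that (by the standard time-reversal formula) the exact reverse SDE $\dee X^\leftarrow_t = \{X^\leftarrow_t + 2\,\nabla\log q_{T-t}(X^\leftarrow_t)\}\,\dt + \sqrt 2\,\dee B_t$ started from $q_T$ has time-$T$ marginal equal to $q$. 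Let $P$ be the path measure of this exact reverse process on $[0,T]$, let $Q$ be the path measure of the algorithm (the exponential-integrator scheme that on each step $[t_k,t_{k+1}]$ integrates the linear drift exactly while freezing the score at $\hat s_{T-t_k}(X_{t_k})$), and let $Q^\gamma$ be the same scheme started from the standard Gaussian $\gamma_d$, whose terminal marginal is $\widehat q$. By the data-processing inequality and the triangle inequality, $\mathrm{TV}(q,\widehat q)\le \mathrm{TV}(P,Q) + \mathrm{TV}(Q,Q^\gamma)$. Since $Q$ and $Q^\gamma$ run identical dynamics, $\mathrm{TV}(Q,Q^\gamma)\le\mathrm{TV}(q_T,\gamma_d)$, and because the Ornstein--Uhlenbeck semigroup contracts relative entropy to $\gamma_d$ at rate $2$, this is at most $\sqrt{\tfrac12 e^{-2T}\,\mathrm{KL}(q\,\|\,\gamma_d)}$; assumption {\bf A3} makes this finite, and it is $\le\varepsilon$ once $T\asymp\log(\mathrm{KL}(q\|\gamma_d)/\varepsilon^2)$.

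The bulk of the work is $\mathrm{TV}(P,Q)\le\sqrt{\tfrac12\,\mathrm{KL}(P\|Q)}$ (Pinsker). Since $P$ and $Q$ share the driving Brownian motion and the initialization $q_T$, Girsanov's theorem gives, up to an absolute constant,
\[
\mathrm{KL}(P\|Q)\;\lesssim\;\sum_k \int_{t_k}^{t_{k+1}} \mathbb E_P\bigl\|\nabla\log q_{T-t}(X_t) - \hat s_{T-t_k}(X_{t_k})\bigr\|^2\,\dt\,,
\]
where the marginal of $X_t$ under $P$ is exactly $q_{T-t}$. Writing $\nabla\log q_{T-t}(X_t) - \hat s_{T-t_k}(X_{t_k}) = \bigl[\nabla\log q_{T-t_k}(X_{t_k}) - \hat s_{T-t_k}(X_{t_k})\bigr] + \bigl[\nabla\log q_{T-t}(X_t) - \nabla\log q_{T-t_k}(X_{t_k})\bigr]$ and using $\|a+b\|^2\le 2\|a\|^2+2\|b\|^2$, the first bracket integrates (against the exact marginals $q_{T-t_k}$) to precisely the assumed $L^2$ score error $\sum_k h_k\,\mathbb E_{q_{T-t_k}}\|\nabla\log q_{T-t_k} - \hat s_{T-t_k}\|^2 \le \widetilde O(\varepsilon)^2$.

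For the second (discretization) bracket I would use {\bf A1} to bound $\|\nabla\log q_{T-t_k}(X_t) - \nabla\log q_{T-t_k}(X_{t_k})\| \le L\,\|X_t - X_{t_k}\|$, bound $\mathbb E\|X_t - X_{t_k}\|^2 \lesssim h_k^2\bigl(\mathbb E\|X_{t_k}\|^2 + \mathbb E\|\nabla\log q_{T-t_k}(X_{t_k})\|^2\bigr) + d\,h_k$ from the SDE increment, and control the two moment factors via {\bf A2} (second moments stay $O(d + m_{2+\eta}^{2/(2+\eta)})$ along the Ornstein--Uhlenbeck flow) and via the score-norm bound $\mathbb E_{q_s}\|\nabla\log q_s\|^2\le Ld$ that follows from {\bf A1} by integration by parts; the residual time-increment term $\mathbb E\|\nabla\log q_{T-t}(X_t) - \nabla\log q_{T-t_k}(X_t)\|^2$ is handled by an analogous regularity estimate for $s\mapsto\nabla\log q_s$ derived from the Fokker--Planck equation together with {\bf A1}. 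Summing (under the mild step-size restriction $h_{\max}\lesssim 1/L$), the discretization contribution is $\lesssim L^2 d\,T\,h_{\max}$, which is $\le\varepsilon^2$ once $h_{\max}\asymp\varepsilon^2/(L^2 d T)$, i.e.\ once $N = T/h_{\max} = \widetilde O(L^2 d/\varepsilon^2)$ (recalling that $T$ is logarithmic). Combining the three bounds and rescaling $\varepsilon$ by an absolute constant yields the claim; if the density of $q$ is too irregular for {\bf A1} to extend down to $t=0$ (as in the compactly supported case of the corollary), one additionally inserts an early-stopping time $\delta$, runs the reverse process on $[0,T-\delta]$, and bounds $\mathrm{TV}(q_\delta,q)$ separately.

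The step I expect to be the main obstacle is making the Girsanov bound rigorous: with only an $L^2$ (not $L^\infty$) bound on the drift difference, Novikov's condition is unavailable, so the relative-entropy identity cannot be invoked off the shelf. I would instead approximate the two drifts by bounded functions — for which Girsanov and the martingale property of the stochastic exponential hold without qualification — and then pass to the limit, using that the relevant $L^2$ quantities are finite (guaranteed by {\bf A1} and {\bf A2}) to pass the Radon--Nikodym derivatives to the limit; the $(2+\eta)$-th moment in {\bf A2} supplies exactly the slack needed for the uniform-integrability step. A secondary but more routine technical burden is establishing the space- and time-regularity estimates for $\nabla\log q_s$ used in the discretization bound.
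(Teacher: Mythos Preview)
Your proposal is correct and follows essentially the same route as the paper: the same triangle inequality splitting into initialization error (handled by OU contraction in KL) plus path-space error (handled by Girsanov and Pinsker), the same three-term decomposition of the Girsanov integrand into score error, spatial discretization via the Lipschitz assumption, and time-regularity of $t\mapsto\nabla\ln q_t$, together with the moment bounds $\E_{q_t}\|\cdot\|^2\lesssim d\vee \mathfrak m_2^2$ and $\E_{q_t}\|\nabla\ln q_t\|^2\le Ld$, and finally a truncation-based workaround for Novikov's condition. The only places where the paper is more concrete than your sketch are (i) the time-regularity of the score, which the paper handles not via Fokker--Planck directly but via the explicit convolution identity $q_{T-kh}=(e^{-(t-kh)}\cdot)_\#q_{T-t}*\mathcal N(0,1-e^{-2(t-kh)})$ together with a score-perturbation lemma, and (ii) the Novikov argument, where the paper needs an additional layer beyond truncating the drifts: because no growth or regularity is assumed on the score estimate $s$, one must also temporarily replace $s$ by an $L^\infty$-accurate version to control the algorithm's path measure during the limiting argument, with the $(2+\eta)$-th moment used exactly as you anticipate.
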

 We remark that our iteration complexity is actually quite tight: in fact, this matches state-of-the-art discretization guarantees for the Langevin diffusion~\cite{vempala2019ulaisoperimetry, chewietal2021lmcpoincare}.

We find Theorem~\ref{thm:ddpm-informal} to be quite surprising, because it shows that SGMs can sample from the data distribution $q$ with polynomial complexity, even when $q$ is highly non-log-concave (a task that is usually intractable), \emph{provided that one has access to an accurate score estimator}. This answers the open question of \cite{leelutan2022generative} regarding whether or not SGMs can sample from multimodal distributions, e.g., mixtures of distributions with bounded log-Sobolev constant. In the context of neural networks, our result implies that so long as the neural network succeeds at the learning task, the remaining part of the SGM algorithm based on the diffusion model is principled, in that it admits a strong theoretical justification.

In general, learning the score function is also a difficult task. Nevertheless, our result opens the door to further investigations, such as: do score functions for real-life data have intrinsic (e.g., low-dimensional) structure which can be exploited by neural networks? A positive answer to this question, combined with our sampling result, would then provide an end-to-end guarantee for SGMs.

More generally, our result can be viewed as a black-box reduction of the task of sampling to the task of learning the score function of the forward process, at least for distributions satisfying our mild assumptions. As a simple consequence, existing computational hardness results for learning natural high-dimensional distributions like mixtures of Gaussians \cite{diakonikolas2017statistical,bruna2021continuous,gupte2022continuous} and pushforwards of Gaussians by shallow ReLU networks \cite{daniely2021local, chen2022hardness, chen2022learning} immediately imply hardness of score estimation for these distributions. To our knowledge this yields the first known information-computation gaps for this task.

\paragraph{Arbitrary distributions with bounded support.} The assumption that the score function is Lipschitz entails in particular that the data distribution has a density w.r.t.\ Lebesgue measure; in particular, our theorem fails when $q$ satisfies the manifold hypothesis, i.e., is supported on a lower-dimensional submanifold of $\R^d$. But this is for good reason: it is not possible to obtain non-trivial TV guarantees, because the output distribution of the SGM has full support. Instead, we show in Section~\ref{sec:boundedsupport} that we can obtain polynomial convergence guarantees in the Wasserstein metric by stopping the SGM algorithm early, under the \emph{sole} assumption that that data distribution $q$ has bounded support. Since any data distribution encountered in real life satisfies this assumption, our results yield the following compelling takeaway: 
\begin{center}
    \emph{Given an $L^2$-accurate score estimate, SGMs can sample from (essentially) any data distribution}.
\end{center} 
This constitutes a powerful theoretical justification for the use of SGMs in practice.

\paragraph{Critically damped Langevin diffusion (CLD).}
Using our techniques, we also investigate the use of the critically damped Langevin diffusion (CLD) for SGMs, which was proposed in~\cite{dockhornetal2022critical}. Although numerical experiments and intuition from the log-concave sampling literature suggest that the CLD could potentially speed up sampling via SGMs, we provide theoretical evidence to the contrary: in Section~\ref{scn:results_cld}, we conjecture that SGMs based on the CLD do not exhibit improved dimension dependence compared to the original DDPM algorithm.

\subsection{Prior work} 

We now provide a more detailed comparison to prior work, in addition to the previous discussion above.

By now, there is a vast literature on providing precise complexity estimates for log-concave sampling; see, e.g., the book draft~\cite{chewisamplingbook} for an exposition to recent developments. The proofs in this work build upon the techniques developed in this literature. However, our work addresses the significantly more challenging setting of \emph{non-log-concave} sampling.

The work of~\cite{debetal2021scorebased} provides guarantees for the diffusion Schr\"odinger bridge~\cite{songetal2021scorebased}. However, as previously mentioned their result is not quantitative, and they require an $L^\infty$-accurate score estimate. 
The works~\cite{blomrorak2020generative, leelutan2022generative} instead analyze SGMs under the more realistic assumption of an $L^2$-accurate score estimate. However, the bounds of~\cite{blomrorak2020generative} suffer from the curse of dimensionality, whereas the bounds of~\cite{leelutan2022generative} require $q$ to satisfy an LSI.

The recent work of~\cite{deb2022manifold}, motivated by the \emph{manifold hypothesis}, considers a different pointwise assumption on the score estimation error which allows the error to blow up at time $0$ and at spatial $\infty$. We discuss the manifold setting in more detail in Section~\ref{sec:boundedsupport}. Unfortunately, the bounds of~\cite{deb2022manifold} also scale exponentially in problem parameters such as the manifold diameter.

After the first version of this work appeared online, we became aware of two concurrent and independent works~\cite{liu2022let, leelutannew} which share similarities with our work. Namely,~\cite{liu2022let} uses a similar proof technique as our Theorem~\ref{thm:ddpm} (albeit without explicit quantitative bounds), whereas~\cite{leelutannew} obtains similar guarantees to our Corollary~\ref{cor:compact_data} below.
The follow-up work of~\cite{CheLeeLu23improvedsgm} further improves upon the results in this paper.

We also mention that the use of reversed SDEs for sampling is also implicit in the interpretation of the proximal sampler algorithm~\cite{leeshentian2021rgo} given in~\cite{chenetal2022proximalsampler}, and the present work can be viewed as expanding upon the theory of~\cite{chenetal2022proximalsampler} using a different forward channel (the OU process).

\section{Background on SGMs}\label{scn:background}

Throughout this paper, given a probability measure $p$ which admits a density w.r.t.\ Lebesgue measure, we abuse notation and identify it with its density function.
Additionally, we will let $q$ denote the data distribution from which we want to generate new samples. We assume that $q$ is a probability measure on $\R^d$ with full support, and that it admits a smooth density. (See, however, Section~\ref{sec:boundedsupport} on applications of our results to the case when $q$ does not admit a density, such as the case when $q$ is supported on a lower-dimensional submanifold of $\R^d$.) In this case, we can write the density of $q$ in the form $q = \exp(-U)$, where $U : \R^d\to\R$ is the \emph{potential}.

In this section, we provide a brief exposition to SGMs, following~\cite{songetal2021scorebased}.

\subsection{Background on denoising diffusion probabilistic modeling (DDPM)}\label{scn:background_ddpm}

\paragraph{Forward process.} In denoising diffusion probabilistic modeling (DDPM), we start with a forward process, which is a stochastic differential equation (SDE). For clarity, we consider the simplest possible choice, which is the Ornstein{--}Uhlenbeck (OU) process
\begin{align}\label{eq:forward}
    \D \Z_t
    &= -\Z_t \, \D t + \sqrt 2 \, \D B_t\,, \qquad \Z_0 \sim q\,,
\end{align}
where ${(B_t)}_{t\ge 0}$ is a standard Brownian motion in $\R^d$.
The OU process is the unique time-homogeneous Markov process which is also a Gaussian process, with stationary distribution equal to the standard Gaussian distribution $\gamd$ on $\R^d$. In practice, it is also common to introduce a positive smooth function $g : \R_+ \to \R$ and consider the time-rescaled OU process
\begin{align}\label{eq:time_changed_forward}
    \D \Z_t = -{g(t)}^2 \, \Z_t \, \D t + \sqrt 2 \, g(t) \, \D B_t\,, \qquad X_0 \sim q\,,
\end{align}
but in this work we stick with the choice $g \equiv 1$.

The forward process has the interpretation of transforming samples from the data distribution $q$ into pure noise. From the well-developed theory of Markov diffusions, it is known that if $q_t \deq \law(X_t)$ denotes the law of the OU process at time $t$, then $q_t\to \gamd$ exponentially fast in various divergences and metrics such as the $2$-Wasserstein metric $W_2$; see~\cite{bakrygentilledoux2014}.

\paragraph{Reverse process.} If we reverse the forward process~\eqref{eq:forward} in time, then we obtain a process that transforms noise into samples from $q$, which is the aim of generative modeling. In general, suppose that we have an SDE of the form
\begin{align*}
    \D \Z_t
    &= b_t(\Z_t) \, \D t + \sigma_t \, \D B_t\,,
\end{align*}
where ${(\sigma_t)}_{t\ge 0}$ is a deterministic matrix-valued process. Then, under mild conditions on the process (e.g.,~\cite{follmer1985reversal, catetal2022timereversal}), which are satisfied for all processes under consideration in this work, the reverse process also admits an SDE description. Namely, if we fix the terminal time $T > 0$ and set
\begin{align}
    \Zl_t
    &\deq \Z_{T-t}\,, \qquad\text{for}~t \in [0,T]\,,
\end{align}
then the process ${(\Zl_t)}_{t\in [0,T]}$ satisfies the SDE
\begin{align*}
    \D \Zl_t
    &= b_t^\leftarrow(\Zl_t) \, \D t + \sigma_{T-t} \, \D B_t\,,
\end{align*}
where the backwards drift satisfies the relation
\begin{align}\label{eq:backwards_drift}
    b_t + b_{T-t}^\leftarrow
    &= \sigma_t \sigma_t^\T \nabla \ln q_t\,, \qquad q_t \deq \law(\Z_t)\,.
\end{align}

Applying this to the forward process~\eqref{eq:forward}, we obtain the reverse process
\begin{align}\label{eq:reverse}
    \D \Zl_t
    &= \{\Zl_t + 2 \, \nabla \ln q_{T-t}(\Zl_t)\} \, \D t + \sqrt 2 \, \D B_t\,, \qquad \Zl_0 \sim q_T\,,
\end{align}
where now ${(B_t)}_{t\in [0,T]}$ is the reversed Brownian motion.\footnote{For ease of notation, we do not distinguish between the forward and the reverse Brownian motions.} Here, $\nabla \ln q_t$ is called the \emph{score function} for $q_t$. Since $q$ (and hence $q_t$ for $t\ge 0$) is not explicitly known, in order to implement the reverse process the score function must be estimated on the basis of samples.

\paragraph{Score matching.} In order to estimate the score function $\nabla \ln q_t$, consider minimizing the $L^2(q_t)$ loss over a function class $\ms F$,
\begin{align}\label{eq:score_l2_loss}
    \minimize_{s_t \in \ms F}\quad\E_{q_t}[\norm{s_t - \nabla \ln q_t}^2]\,,
\end{align}
where $\ms F$ could be, e.g., a class of neural networks. The idea of score matching, which goes back to~\cite{hyv2005scorematching, vin2011scorematching}, is that after applying integration by parts for the Gaussian measure, the problem~\eqref{eq:score_l2_loss} is \emph{equivalent} to the following problem:
\begin{align}\label{eq:score_matching_obj}
    \minimize_{s_t\in \ms F}\quad \E\Bigl[ \Bigl\lVert s_t(\Z_t) + \frac{1}{\sqrt{1-\exp(-2t)}} \, Z_t \Bigr\rVert^2\Bigr]\,,
\end{align}
where $Z_t \sim \normal(0, I_d)$ is independent of $\Z_0$ and $\Z_t = \exp(-t) \, \Z_0 + \sqrt{1-\exp(-2t)}\, Z_t$, in the sense that~\eqref{eq:score_l2_loss} and~\eqref{eq:score_matching_obj} share the same minimizers. We give a self-contained derivation in Appendix~\ref{app:score_matching_derivation} for the sake of completeness. Unlike~\eqref{eq:score_l2_loss}, however, the objective in~\eqref{eq:score_matching_obj} can be replaced with an empirical version and estimated on the basis of samples $\Z_0^{(1)},\dotsc,\Z_0^{(n)}$ from $q$, leading to the finite-sample problem
\begin{align}\label{eq:empirical_score_matching}
    \minimize_{s_t\in \ms F}\quad \frac{1}{n} \sum_{i=1}^n {\Bigl\lVert s_t(\Z_t^{(i)}) + \frac{1}{\sqrt{1-\exp(-2t)}} \, Z_t^{(i)} \Bigr\rVert^2}\,,
\end{align}
where ${(Z_t^{(i)})}_{i\in [n]}$ are i.i.d.\ standard Gaussians independent of the data ${(\Z_0^{(i)})}_{i\in [n]}$.
Moreover, if we parameterize the score function as $s_t = -\frac{1}{\sqrt{1-\exp(-2t)}} \, \widehat z_t$, then the empirical problem is equivalent to
\begin{align*}
    \minimize_{\widehat z_t\in -\sqrt{1-\exp(-2t)} \,\ms F}\quad \frac{1}{n} \sum_{i=1}^n {\bigl\lVert \widehat z_t(\Z_t^{(i)}) - Z_t^{(i)}\bigr\rVert^2}\,,
\end{align*}
which has the illuminating interpretation of predicting the added noise $Z_t^{(i)}$ from the noised data $\Z_t^{(i)}$.

We remark that given the objective function~\eqref{eq:score_l2_loss}, it is most natural to assume an $L^2(q_t)$ error bound $\E_{q_t}[\norm{s_t - \nabla \ln q_t}^2] \le \epssc^2$ for the score estimator. If $s_t$ is taken to be the empirical risk minimizer for an appropriate function class, then guarantees for the $L^2(q_t)$ error can be obtained via standard statistical analysis, as was done in~\cite{blomrorak2020generative}.

\paragraph{Discretization and implementation.} We now discuss the final steps required to obtain an implementable algorithm. First, in the learning phase, given samples $\Z_0^{(1)},\dotsc,\Z_0^{(n)}$ from $q$ (e.g., a database of natural images), we train a neural network on the empirical score matching objective~\eqref{eq:empirical_score_matching}, see~\cite{sonerm2019estimatinggradients}. Let $h > 0$ be the step size of the discretization; we assume that we have obtained a score estimate $s_{kh}$ of $\nabla \ln q_{kh}$ for each time $k=0,1,\dotsc,N$, where $T = Nh$.

In order to approximately implement the reverse SDE~\eqref{eq:reverse}, we first replace the score function $\nabla \ln q_{T-t}$ with the estimate $s_{T-t}$.
Then, for $t \in [kh, (k+1)h]$ we freeze the value of this coefficient in the SDE at time $kh$. It yields the new SDE
\begin{align}\label{eq:sgm_alg}
    \D \X_t
    &= \{\X_t + 2 \, s_{T-kh}(\X_{kh})\} \, \D t + \sqrt 2 \, \D B_t\,, \qquad t\in [kh,(k+1)h]\,.
\end{align}
Since this is a linear SDE, it can be integrated in closed form; in particular, conditionally on $\X_{kh}$, the next iterate $\X_{(k+1)h}$ has an explicit Gaussian distribution.

There is one final detail: although the reverse SDE~\eqref{eq:reverse} should be started at $q_T$, we do not have access to $q_T$ directly. Instead, taking advantage of the fact that $q_T \approx \gamd$, we instead initialize the algorithm at $\X_0 \sim \gamd$, i.e., from pure noise.

Let $p_t \deq \law(\X_t)$ denote the law of the algorithm at time $t$. The goal of this work is to bound $\TV(p_T, q)$, taking into account three sources of error: (1) the estimation of the score function; (2) the discretization of the SDE with step size $h > 0$; and (3) the initialization of the algorithm at $\gamd$ rather than at $q_T$.

\subsection{Background on the critically damped Langevin diffusion (CLD)}\label{scn:background_cld}

The critically damped Langevin diffusion (CLD) is based on the forward process
\begin{align}\label{eq:cld_forward}
    \begin{aligned}
        \D \Z_t
        &= -\V_t \, \D t\,, \\
        \D \V_t
        &= -(\Z_t + 2\,\V_t) \, \D t + 2 \, \D B_t\,.
    \end{aligned}
\end{align}
Compared to the OU process~\eqref{eq:forward}, this is now a coupled system of SDEs, where we have introduced a new variable $\V$ representing the velocity process. The stationary distribution of the process is $\bgam$, the standard Gaussian measure on phase space $\R^d\times \R^d$, and we initialize at $\Z_0 \sim q$ and $\V_0 \sim \gamd$.

More generally, the CLD~\eqref{eq:cld_forward} is an instance of what is referred to as the \emph{kinetic Langevin} or the \emph{underdamped Langevin} process in the sampling literature. In the context of log-concave sampling, the smoother paths of $\Z$ leads to smaller discretization error, thereby furnishing an algorithm with $\widetilde O(\sqrt d/\varepsilon)$ gradient complexity (as opposed to sampling based on the overdamped Langevin process, which has complexity $\widetilde O(d/\varepsilon^2)$), see~\cite{chengetal2018underdamped, shenlee2019randomizedmidpoint, dalalyanrioudurand2020underdamped, maetal2021nesterovmcmc}.
In the recent paper~\cite{dockhornetal2022critical}, Dockhorn, Vahdat, and Kreis proposed to use the CLD as the basis for an SGM and they empirically observed improvements over DDPM\@.

Applying~\eqref{eq:backwards_drift}, the corresponding reverse process is
\begin{align}\label{eq:reverse_cld}
    \begin{aligned}
        \D \Zl_t
        &= -\Vl_t \, \D t\,, \\
        \D \Vl_t
        &= \bigl(\Zl_t + 2\,\Vl_t + 4\, \nabla_v \ln \bs q_{T-t}(\Zl_t, \Vl_t)\bigr) \, \D t + 2 \, \D B_t\,,
    \end{aligned}
\end{align}
where $\bs q_t \deq \law(\Z_t, \V_t)$ is the law of the forward process at time $t$. Note that the gradient in the score function is only taken w.r.t.\ the velocity coordinate. Upon replacing the score function with an estimate $\bss$, we arrive at the algorithm
\begin{align*}
    \D \X_t
    &= -\Va_t \, \D t\,, \\
    \D \Va_t
    &= \bigl(\X_t + 2\,\Va_t + 4\, \bss_{T-kh}(\X_{kh}, \Va_{kh})\bigr) \, \D t + 2 \, \D B_t\,,
\end{align*}
for $t \in [kh,(k+1)h]$.
We provide further background on the CLD in Section~\ref{scn:cld_more_background}.

\section{Results}
\label{scn:results}

We now state our assumptions and our main results.

\subsection{Results for DDPM}

For DDPM, we make the following mild assumptions on the data distribution $q$.

\begin{assumption}[Lipschitz score]\label{ass:lip_score}
    For all $t\ge 0$, the score $\nabla \ln q_t$ is $L$-Lipschitz.
\end{assumption}

\begin{assumption}[second moment bound]\label{ass:second_moment}
    We assume that $\mf m_2^2 \deq \E_q[\norm \cdot^2] < \infty$.
\end{assumption}

Assumption~\ref{ass:lip_score} is standard and has been used in the prior works~\cite{blomrorak2020generative, leelutan2022generative}. However,  unlike~\cite{leelutan2022generative}, we do not assume Lipschitzness of the score estimate. Moreover, unlike~\cite{debetal2021scorebased, blomrorak2020generative}, we do not assume any convexity or dissipativity assumptions on the potential $U$, and unlike~\cite{leelutan2022generative} we do not assume that $q$ satisfies a log-Sobolev inequality. Hence, our assumptions cover a wide range of highly non-log-concave data distributions. Our proof technique is fairly robust and even Assumption~\ref{ass:lip_score} could be relaxed (as well as other extensions, such as considering the time-changed forward process~\eqref{eq:time_changed_forward}), although we focus on the simplest setting in order to better illustrate the conceptual significance of our results.

We also assume a bound on the score estimation error.

\begin{assumption}[score estimation error]\label{ass:score_error}
    For all $k=1,\dotsc,N$,
    \begin{align*}
        \E_{q_{kh}}[\norm{s_{kh} - \nabla \ln q_{kh}}^2] \le \epssc^2\,.
    \end{align*}
\end{assumption}

This is the same assumption as in~\cite{leelutan2022generative}, and as discussed in Section~\ref{scn:background_ddpm}, it is a natural and realistic assumption in light of the derivation of the score matching objective. 

Our main result for DDPM is the following theorem.

\begin{theorem}[DDPM]\label{thm:ddpm}
    Suppose that Assumptions~\ref{ass:lip_score},~\ref{ass:second_moment}, and~\ref{ass:score_error} hold.
    Let $p_T$ be the output of the DDPM algorithm (Section~\ref{scn:background_ddpm}) at time $T$, and suppose that the step size $h \deq T/N$ satisfies $h\lesssim 1/L$, where $L \ge 1$.
    Then, it holds that
    \begin{align*}
        \TV(p_T, q)
        &\lesssim {\underbrace{\sqrt{\KL(q \mmid \gamd)} \exp(-T)}_{\text{convergence of forward process}}} + \, \, \, \,  {\underbrace{(L \sqrt{dh} + L \mf m_2 h)\,\sqrt T}_{\text{discretization error}}}
        \, \, \, \, \, + {\underbrace{\epssc \sqrt{T}}_{\text{score estimation error}}}\,.
    \end{align*}
\end{theorem}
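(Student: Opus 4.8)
The plan is to decompose the total variation error into the three advertised sources and control each via a separate argument, ultimately reducing everything to a single application of Girsanov's theorem plus the data-processing inequality. Concretely, introduce the auxiliary continuous reverse process $(\Zl_t)_{t\in[0,T]}$ from~\eqref{eq:reverse} but \emph{initialized} at $\gamd$ rather than at $q_T$; call its law at time $t$ the reference $\bs P^{q_T}$-type path measure, and let $\bs Q$ denote the law of the true time-reversal (started at $q_T$), and $\bs P$ the law of the algorithm's interpolation~\eqref{eq:sgm_alg} (started at $\gamd$). By the triangle inequality for $\TV$ and the data-processing inequality applied to the endpoint map,
\begin{align*}
    \TV(p_T, q) \le \TV\bigl(\law(\Zl_T \text{ from } q_T), q\bigr) + \TV(\bs P, \bs Q)\,,
\end{align*}
where the first term is exactly $0$ since the exact reverse SDE started at $q_T$ has $\Zl_T \sim q_0 = q$; the real content is bounding $\TV(\bs P, \bs Q)$ on path space, and then separately paying for the fact that the algorithm starts at $\gamd$ instead of $q_T$.

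Next I would handle the initialization error: couple the algorithm run from $\gamd$ with the algorithm run from $q_T$, so that by data processing $\TV(p_T, q) \le \TV(\gamd, q_T) + \TV(\bs P_{q_T}, \bs Q)$, and bound $\TV(\gamd, q_T) \le \sqrt{\tfrac12 \KL(q_T \mmid \gamd)} \le \sqrt{\tfrac12\KL(q\mmid\gamd)}\,\exp(-T)$ using the exponential decay of KL along the OU semigroup (this is where Assumption~\ref{ass:second_moment} and~\textbf{A3} enter — finiteness of $\KL(q\mmid\gamd)$ and the standard hypercontractivity/entropy-decay estimate for OU). For the remaining term $\TV(\bs P_{q_T}, \bs Q)$, both path measures now share the same initial distribution $q_T$ and the same diffusion coefficient $\sqrt2$, differing only in drift: the true drift is $\Zl_t + 2\nabla\ln q_{T-t}(\Zl_t)$ while the algorithm's is $\Zl_t + 2 s_{T-kh}(\Zl_{kh})$ on $[kh,(k+1)h]$. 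Girsanov's theorem then gives
\begin{align*}
    \KL(\bs Q \mmid \bs P_{q_T}) \le \E_{\bs Q}\Bigl[\int_0^T \bigl\lVert \nabla\ln q_{T-t}(\Zl_t) - s_{T-kh}(\Zl_{kh})\bigr\rVert^2 \dt\Bigr]\,,
\end{align*}
and Pinsker's inequality converts this to the desired $\TV$ bound (I would be slightly careful to justify Girsanov — the novel-exponential/Novikov condition or an approximation argument — since the drift need not be globally bounded; an approximation by stopped processes suffices and is where Assumption~\ref{ass:lip_score} is used to guarantee the regularity/non-explosion of all processes involved).

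The core estimate is then to bound the integrated squared drift difference. I would split it as $2\lVert \nabla\ln q_{T-t}(\Zl_t) - \nabla\ln q_{T-kh}(\Zl_{kh})\rVert^2 + 2\lVert \nabla\ln q_{T-kh}(\Zl_{kh}) - s_{T-kh}(\Zl_{kh})\rVert^2$. The second piece, summed over the $N$ steps and using that under $\bs Q$ the marginal of $\Zl_{kh}$ is exactly $q_{T-kh}$, contributes $h \sum_k \E_{q_{kh}}[\lVert s_{kh}-\nabla\ln q_{kh}\rVert^2] \le Nh\,\epssc^2 = T\epssc^2$ by Assumption~\ref{ass:score_error} — this is precisely the $\epssc\sqrt T$ term after Pinsker. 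The first piece is the discretization error: I would bound $\lVert \nabla\ln q_{T-t}(\Zl_t) - \nabla\ln q_{T-kh}(\Zl_{kh})\rVert$ by an $L$-Lipschitz-in-space term $L\lVert \Zl_t - \Zl_{kh}\rVert$ plus a time-regularity term $\lVert \nabla\ln q_{T-t}(\Zl_{kh}) - \nabla\ln q_{T-kh}(\Zl_{kh})\rVert$. For the spatial term, $\E\lVert \Zl_t - \Zl_{kh}\rVert^2 \lesssim h^2\,\E\lVert\text{drift}\rVert^2 + dh$ over a step of length $h\lesssim 1/L$, and the drift's second moment is controlled by $\E\lVert\Zl_{kh}\rVert^2$ plus $\E\lVert\nabla\ln q_{T-kh}(\Zl_{kh})\rVert^2$; the latter needs a uniform-in-$t$ bound on $\E_{q_t}[\lVert\nabla\ln q_t\rVert^2]$, which follows from $L$-Lipschitzness of the score together with the second moment $\mf m_2^2$ (e.g.\ $\E_{q_t}\lVert\nabla\ln q_t\rVert^2 \lesssim Ld$ by a standard integration-by-parts / heat-flow argument, and tracking $\mf m_2$ through the OU contraction gives the $L\mf m_2 h$ contribution). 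For the time-regularity of the score along the heat flow I would differentiate $\nabla\ln q_t$ in $t$ using the Fokker--Planck equation and again invoke the Lipschitz bound to get a $\lesssim L$ bound on the rate, so that term contributes $\lesssim L^2 h^2$ per unit time. Assembling: the integrated discretization error is $\lesssim T(L^2 dh + L^2 \mf m_2^2 h^2)$, and Pinsker yields $(L\sqrt{dh} + L\mf m_2 h)\sqrt T$.

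The main obstacle I anticipate is making the discretization analysis fully rigorous without any log-concavity or functional-inequality assumption: controlling $\E_{\bs Q}\lVert \Zl_t - \Zl_{kh}\rVert^2$ requires a priori second-moment bounds on the reverse process, which one gets not by contraction but simply because under $\bs Q$ the time-$t$ marginal is $q_{T-t}$ and $\mf m_2(q_s)\le \mf m_2(q)$ by OU contraction — so the estimate must be set up along $\bs Q$ (where marginals are known exactly) rather than along $\bs P$, and then transferred. Similarly, the uniform bound on $\E_{q_t}[\lVert\nabla\ln q_t\rVert^2]$ and the time-derivative bound on the score are the technical heart; everything else is Girsanov + Pinsker + triangle inequality bookkeeping.
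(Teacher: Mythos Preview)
Your overall architecture matches the paper's: decompose via data processing into an initialization error $\TV(\gamd,q_T)$ (handled by exponential KL decay along OU) and a path-space term $\TV(\bs P_{q_T},\bs Q)$, then bound the latter by Girsanov plus Pinsker, splitting the integrand into a score-estimation piece (giving $T\epssc^2$) and a discretization piece. Your identification of the Novikov issue and the need for a truncation/stopping argument is also correct; the paper carries this out in Appendix~C.

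There is one genuine gap. For the time-regularity term $\lVert\nabla\ln q_{T-t}(x)-\nabla\ln q_{T-kh}(x)\rVert$ you propose to differentiate $\nabla\ln q_t$ in $t$ via the Fokker--Planck equation and use the Lipschitz score to get an $O(L)$ pointwise bound on the rate. This does not work under Assumption~\ref{ass:lip_score} alone: writing out $\partial_t\ln q_t = d + \langle x,\nabla\ln q_t\rangle + \Delta\ln q_t + \lVert\nabla\ln q_t\rVert^2$ and taking a gradient produces a $\nabla\Delta\ln q_t$ term, i.e.\ a third derivative of $\ln q_t$, which a Lipschitz bound on $\nabla\ln q_t$ does not control (pointwise or in $L^2(q_t)$). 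The paper circumvents this by exploiting the explicit OU semigroup structure: $q_{T-kh} = S_\# q_{T-t} * \normal(0,\sigma^2)$ with $S(x)=e^{-(t-kh)}x$ and $\sigma^2 = 1-e^{-2(t-kh)}=O(h)$, and then applies a score perturbation lemma (Lemma~\ref{lem:scorechange}, generalizing \cite[Lemma~C.12]{leelutan2022generative}) which bounds $\lVert\nabla\ln\frac{q_{T-kh}}{q_{T-t}}(x)\rVert^2$ by $L^2 dh + L^2 h^2\lVert x\rVert^2 + L^2 h^2\lVert\nabla\ln q_{T-t}(x)\rVert^2$ using only the $L$-Lipschitzness of the score. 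Note the leading term is $L^2 dh$, not $L^2 h^2$ as you claimed; the $\sqrt{d}$ factor arises from the Gaussian convolution and is unavoidable here, though it is harmless since the spatial term already contributes $L^2 dh$.

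A minor remark: for the movement bound $\E\lVert\Zl_t-\Zl_{kh}\rVert^2$, you bound the reverse drift; the paper instead observes that under $\bs Q$ this equals $\E\lVert \bar X_{T-t}-\bar X_{T-kh}\rVert^2$ for the \emph{forward} OU process, which is simpler to control directly (Lemma~\ref{lem:ddpm_movement}) and avoids having to bound $\E\lVert\nabla\ln q_{T-t}\rVert^2$ inside the movement estimate. Your route also works, but this shortcut is worth noting.
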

\begin{proof}
    See Section~\ref{scn:ddpm_pfs}.
\end{proof}

To interpret this result, suppose that $\KL(q \mmid \gamd) \le \poly(d)$ and $\mf m_2 \le d$.
Choosing $T \asymp \log(\KL(q\mmid \gamd)/\varepsilon)$ and $h\asymp \frac{\varepsilon^2}{L^2 d}$, and hiding logarithmic factors,
\begin{align*}
    \TV(p_T, q)
    &\le \widetilde O(\varepsilon + \epssc)\,, \qquad \text{for}~N = \widetilde \Theta\Bigl( \frac{L^2 d}{\varepsilon^2}\Bigr)\,.
\end{align*}
In particular, in order to have $\TV(p_T, q) \le \varepsilon$, it suffices to have score error $\epssc \le \widetilde{O}(\varepsilon)$.

We remark that the iteration complexity of $N = \widetilde \Theta(\frac{L^2 d}{\varepsilon^2})$ matches state-of-the-art complexity bounds for the Langevin Monte Carlo (LMC) algorithm for sampling under a log-Sobolev inequality (LSI), see~\cite{vempala2019ulaisoperimetry, chewietal2021lmcpoincare}. This provides some evidence that our discretization bounds are of the correct order, at least with respect to the dimension and accuracy parameters, and without higher-order smoothness assumptions.

\subsection{Consequences for arbitrary data distributions with bounded support}
\label{sec:boundedsupport}

We now elaborate upon the implications of our results under the \emph{sole} assumption that the data distribution $q$ is compactly supported, $\supp q \subseteq \msf B(0,R)$. In particular, we do not assume that $q$ has a smooth density w.r.t.\ Lebesgue measure, which allows for studying the case when $q$ is supported on a lower-dimensional submanifold of $\R^d$ as in the \emph{manifold hypothesis}. This setting was investigated recently in~\cite{deb2022manifold}.

For this setting, our results do not apply directly because the score function of $q$ is not well-defined and hence Assumption~\ref{ass:lip_score} fails to hold. Also, the bound in Theorem~\ref{thm:ddpm} has a term involving $\KL(q \mmid \gamd)$ which is infinite if $q$ is not absolutely continuous w.r.t.\ $\gamd$. As pointed out by~\cite{deb2022manifold}, in general we cannot obtain non-trivial guarantees for $\TV(p_T, q)$, because $p_T$ has full support and therefore $\TV(p_T, q) = 1$ under the manifold hypothesis. Nevertheless, we show that we can apply our results using an early stopping technique.

Namely, consider $q_t$ the law of the OU process at a time $t > 0$, initialized at $q$. Then, we show in Lemma~\ref{lem:cpt_supp_conv} that, if $t \asymp \varepsilon_{W_2}^2/(\sqrt d \, (R\vee \sqrt d))$ where $0 < \varepsilon_{W_2} \ll \sqrt d$, then $q_t$ satisfies Assumption~\ref{ass:lip_score} with $L \lesssim dR^2 \, {(R\vee \sqrt d)}^2/\varepsilon_{W_2}^4$, $\KL(q_t \mmid \gamd) \le \poly(R, d, 1/\varepsilon)$, and $W_2(q_t,q) \leq \varepsilon_{W_2}$.
By substituting $q$ by $q_t$ into the result of Theorem~\ref{thm:ddpm}, we obtain Corollary~\ref{cor:compact_data} below.

Taking $q_t$ as the new target corresponds to stopping the algorithm early: instead of running the algorithm backward for a time $T$, we run the algorithm backward for a time $T-t$ (note that $T-t$ should be a multiple of the step size $h$).

\begin{corollary}[compactly supported data]\label{cor:compact_data}
    Suppose that $q$ is supported on the ball of radius $R \ge 1$. Let $t \asymp \varepsilon_{W_2}^2/(\sqrt d \, (R\vee \sqrt d))$. Then, the output $p_{T-t}$ of DDPM is $\varepsilon_{\rm TV}$-close in TV to the distribution $q_t$, which is $\varepsilon_{W_2}$-close in $W_2$ to $q$, provided that
    the step size $h$ is chosen appropriately according to Theorem~\ref{thm:ddpm} and
    \begin{align*}
        N = \widetilde \Theta\Bigl( \frac{d^3 R^4\, {(R\vee \sqrt d)}^4}{\varepsilon_{\rm TV}^2\, \varepsilon_{W_2}^8} \Bigr) \qquad\text{and}\qquad \epssc \le \widetilde{O}(\varepsilon_{\rm TV})\,.
    \end{align*}
\end{corollary}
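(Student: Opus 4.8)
The plan is to derive Corollary~\ref{cor:compact_data} as an immediate consequence of Theorem~\ref{thm:ddpm} applied to the OU-smoothed distribution $q_t$, using Lemma~\ref{lem:cpt_supp_conv} as a black box. First I would observe that, although $q$ itself may be singular, the time-$t$ marginal $q_t$ is the convolution of a compactly supported measure with a nondegenerate Gaussian, so it has a smooth full-support density and finite moments of every order; hence $q_t$ satisfies Assumptions~\ref{ass:lip_score} and~\ref{ass:second_moment}, with the quantitative constants supplied by Lemma~\ref{lem:cpt_supp_conv}: the score $\nabla \ln (q_t)_s$ is $L$-Lipschitz for all $s \ge 0$ with $L \lesssim dR^2\,{(R\vee\sqrt d)}^2/\varepsilon_{W_2}^4$, $\KL(q_t \mmid \gamd) \le \poly(R,d,1/\varepsilon_{W_2})$, and $\mf m_2^2 \deq \E_{q_t}[\norm\cdot^2] \lesssim R^2 \vee d$. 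I would also record the semigroup identity for the OU process — running the forward process from $q_t$ for time $s$ produces $q_{t+s}$ — so that "targeting $q_t$" is literally the early-stopped algorithm that runs the reverse SDE for time $T-t$ rather than $T$, where $T-t$ is taken to be a multiple of $h$.

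Next I would simply quote Theorem~\ref{thm:ddpm} with $q$ replaced by $q_t$ and terminal time $T' \deq T - t$, obtaining $\TV(p_{T-t}, q_t) \lesssim \sqrt{\KL(q_t\mmid\gamd)}\,\exp(-T') + (L\sqrt{dh} + L\,\mf m_2 h)\,\sqrt{T'} + \epssc \sqrt{T'}$, and then tune the parameters exactly as in the discussion following Theorem~\ref{thm:ddpm}. I would take $T' \asymp \log(\KL(q_t\mmid\gamd)/\varepsilon_{\rm TV}) = \widetilde\Theta(1)$ to make the first term $\lesssim \varepsilon_{\rm TV}$; take $h \asymp \varepsilon_{\rm TV}^2/(L^2 d\,T')$, which automatically respects $h \lesssim 1/L$ and makes the $L\,\mf m_2 h$ contribution dominated by $L\sqrt{dh}$, so that the discretization term is $\lesssim \varepsilon_{\rm TV}$; and require $\epssc \le \widetilde O(\varepsilon_{\rm TV}/\sqrt{T'}) = \widetilde O(\varepsilon_{\rm TV})$ for the score term. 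Counting iterations then gives $N = T'/h \asymp L^2 d\,T'^2/\varepsilon_{\rm TV}^2$, and substituting the bound on $L$ together with $T' = \widetilde\Theta(1)$ yields $N = \widetilde\Theta\bigl(d^3 R^4\,{(R\vee\sqrt d)}^4/(\varepsilon_{\rm TV}^2\,\varepsilon_{W_2}^8)\bigr)$, as claimed, while the $W_2$ estimate $W_2(q_t,q) \le \varepsilon_{W_2}$ is immediate from Lemma~\ref{lem:cpt_supp_conv}.

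The main obstacle is not in the corollary itself but in Lemma~\ref{lem:cpt_supp_conv}, which I treat as given: the real work is the quantitative Lipschitz bound for the score of a Gaussian convolution of an arbitrary compactly supported measure — i.e.\ a uniform bound on $\nabla^2 \ln q_t$ in terms of $R$, $d$, $t$, presumably via control of the covariance of the "posterior" $q(\cdot \mid X_t)$ and a small-time estimate — together with the transport bound $W_2(q_t,q) \le \varepsilon_{W_2}$ relating $q$ to its OU evolution. Within the corollary the only points needing a line of verification are that the $L\,\mf m_2 h\,\sqrt{T'}$ term is dominated by $L\sqrt{dh}\,\sqrt{T'}$ under the chosen step size (equivalently $\mf m_2^2 h \le d$, which holds comfortably since $h$ is polynomially small in $1/R,1/d,\varepsilon_{\rm TV},\varepsilon_{W_2}$ while $\mf m_2^2 \lesssim R^2 \vee d$), and that forcing $T-t$ to be an exact multiple of $h$ perturbs $T'$ by at most $h$, which is negligible.
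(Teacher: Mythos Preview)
Your proposal is correct and follows essentially the same approach as the paper: apply Lemma~\ref{lem:cpt_supp_conv} as a black box to verify the hypotheses of Theorem~\ref{thm:ddpm} for the smoothed target $q_t$, then tune $T'$, $h$, and $\epssc$ exactly as in the discussion following Theorem~\ref{thm:ddpm} and substitute the resulting $L$ into $N = \widetilde\Theta(L^2 d/\varepsilon_{\rm TV}^2)$. Your additional checks (the semigroup identity justifying early stopping, the dominance of $L\,\mf m_2 h$ by $L\sqrt{dh}$, and the negligible rounding of $T-t$ to a multiple of $h$) are details the paper leaves implicit.
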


Observing that both the TV and $W_1$ metrics are upper bounds for the bounded Lipschitz metric $\msf d_{\rm BL}(\mu,\nu) \deq \sup\{ \int f \, \D \mu - \int f \, \D \nu \bigm \vert f : \R^d\to [-1,1]~\text{is}~1\text{-Lipschitz}\}$, we immediately obtain the following corollary.

\begin{corollary}[compactly supported data, BL metric]\label{cor:compact_data_BL}
    Suppose that $q$ is supported on the ball of radius $R \ge 1$. Let $t \asymp \varepsilon^2/(\sqrt d \, (R\vee \sqrt d))$. Then, the output $p_{T-t}$ of the DDPM algorithm satisfies $\msf d_{\rm BL}(p_{T-t}, q) \leq \varepsilon$, provided that
	the step size $h$ is chosen appropriately according to Theorem~\ref{thm:ddpm} and
     $N = \widetilde\Theta(d^3 R^4 \, (R \vee \sqrt d)^4/\varepsilon^{10})$ and $\epssc \le \widetilde O(\varepsilon)$.
\end{corollary}

Finally, if the output $p_{T-t}$ of DDPM at time $T-t$ is projected onto $\msf B(0,R_0)$ for an appropriate choice of $R_0$, then we can also translate our guarantees to the standard $W_2$ metric, which we state as the following corollary.

\begin{corollary}[compactly supported data, $W_2$ metric; see Section~\ref{scn:compact_w2_pf}]\label{cor:compact_w2}
    Suppose that $q$ is supported on the ball of radius $R \ge 1$. Let $t \asymp \varepsilon^2/(\sqrt d \, (R\vee \sqrt d))$, and let $p_{T-t,R_0}$ denote the output of DDPM at time $T-t$ projected onto $\msf B(0, R_0)$ for $R_0 = \widetilde \Theta(R)$.
    Then, it holds that $W_2(p_{T-t,R_0}, q) \le \varepsilon$, provided that the step size $h$ is chosen appropriately according to Theorem~\ref{thm:ddpm}, $N = \widetilde \Theta( d^3 R^8\, (R\vee \sqrt d)^4/\varepsilon^{12})$, and $\epssc \le \widetilde O(\varepsilon)$.
\end{corollary}

Note that the dependencies in the three corollaries above are polynomial in all of the relevant problem parameters.
In particular, since the last corollary holds in the $W_2$ metric, it is directly comparable to~\cite{deb2022manifold} and vastly improves upon the exponential dependencies therein.

\subsection{Results for CLD}\label{scn:results_cld}

In order to state our results for score-based generative modeling based on the CLD, we must first modify Assumptions~\ref{ass:lip_score} and~\ref{ass:score_error} accordingly.

\begin{assumption}\label{ass:lip_score_cld}
    For all $t\ge 0$, the score $\nabla_v \ln \bs q_t$ is $L$-Lipschitz.
\end{assumption}

\begin{assumption}\label{ass:score_error_cld}
    For all $k=1,\dotsc,N$,
    \begin{align*}
        \E_{\bs q_{kh}}[\norm{\bss_{kh} - \nabla_v \ln \bs q_{kh}}^2]
        \le \epssc^2\,.
    \end{align*}
\end{assumption}

If we ignore the dependence on $L$ and assume that the score estimate is sufficiently accurate, then the iteration complexity guarantee of Theorem~\ref{thm:ddpm} is $N = \widetilde \Theta(d/\varepsilon^2)$. On the other hand, recall from Section~\ref{scn:background_cld} that based on intuition from the literature on log-concave sampling and from empirical findings in~\cite{dockhornetal2022critical}, we might expect that SGMs based on the CLD have a smaller iteration complexity than DDPM\@. We prove the following theorem.

\begin{theorem}[CLD]\label{thm:cld}
    Suppose that Assumptions~\ref{ass:second_moment},~\ref{ass:lip_score_cld}, and~\ref{ass:score_error_cld} hold.
    Let $\bs p_T$ be the output of the SGM algorithm based on the CLD (Section~\ref{scn:background_cld}) at time $T$, and suppose that the step size $h \deq T/N$ satisfies $h\lesssim 1/L$, where $L \ge 1$.
    Then, there is a universal constant $c > 0$ such that
    \begin{align*}
        \TV(\bs p_T, q \otimes \gamd)
        &\lesssim {\underbrace{\sqrt{\KL(q \mmid \gamd) + \FI(q \mmid \gamd)} \exp(-cT)}_{\text{convergence of forward process}}} \, \, \, \,  + \, \, \, \, {\underbrace{(L \sqrt{dh} + L \mf m_2 h)\,\sqrt T}_{\text{discretization error}}}
        \, \, \, \, \, + {\underbrace{\epssc \sqrt{T}}_{\text{score estimation error}}}
    \end{align*}
    where $\FI(q \mmid \gamd)$ is the relative Fisher information $\FI(q \mmid \gamd) \deq \E_q[\norm{\nabla \ln(q/\gamd)}^2]$.
\end{theorem}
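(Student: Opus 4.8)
The plan is to re-run the proof of Theorem~\ref{thm:ddpm} (Section~\ref{scn:ddpm_pfs}) with each of its three ingredients replaced by its CLD analogue. Let $\bs Q$ be the path measure of the true reverse process~\eqref{eq:reverse_cld} initialized at $\bs q_T$ (so that its time-$T$ marginal is $\bs q_0 = q\otimes\gamd$), and let $\bs P$ be the path measure of the SGM algorithm initialized at $\bgam$. Then $\TV(\bs p_T, q\otimes\gamd)\le\TV(\bs P,\bs Q)$ by data processing, $\TV(\bs P,\bs Q)^2\lesssim\KL(\bs Q\mmid\bs P)$ by Pinsker, and the chain rule for KL decomposes $\KL(\bs Q\mmid\bs P)$ into the initialization error $\KL(\bs q_T\mmid\bgam)$ plus an expected path-space term, which is handled by Girsanov's theorem. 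The crucial structural point is that $\bs Q$ and $\bs P$ share the \emph{same} deterministic position equation $\D\Zl_t=-\Vl_t\,\D t$ and differ only in the drift of the velocity coordinate, which is also the only coordinate carrying Brownian motion; hence Girsanov reduces to the velocity coordinate, the linear terms $\Zl_t+2\Vl_t$ are common and cancel in the integrand, and the path-space term becomes $\lesssim\int_0^T\E_{\bs Q}[\norm{\nabla_v\ln\bs q_{T-t}(\Zl_t,\Vl_t)-\bss_{T-kh}(\Zl_{kh},\Vl_{kh})}^2]\,\D t$ with $k=\lfloor t/h\rfloor$, after the standard technical justification of the path-space KL formula (e.g.\ a localization argument using Assumption~\ref{ass:second_moment}). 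Adding and subtracting $\nabla_v\ln\bs q_{T-kh}(\Zl_{kh},\Vl_{kh})$ splits this into a score-estimation piece, at most $\epssc^2 T$ by Assumption~\ref{ass:score_error_cld}, and a discretization piece.

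The forward-convergence term $\KL(\bs q_T\mmid\bgam)$ is where CLD genuinely departs from DDPM: the generator of~\eqref{eq:cld_forward} is degenerate, so relative entropy need not decay exponentially on its own and one must invoke hypocoercivity. I would use the modified (twisted) entropy functional $\mathcal E_t$ for the kinetic Langevin diffusion developed in Section~\ref{scn:cld_more_background}, of the form $\mathcal E_t=\KL(\bs q_t\mmid\bgam)$ plus a positive-definite quadratic form in $(\nabla_x,\nabla_v)\ln(\bs q_t/\bgam)$ with a carefully chosen position--velocity cross term; the properties needed are that $\mathcal E_t$ dominates $\KL(\bs q_t\mmid\bgam)$ up to a universal constant and that $\mathcal E_t\le e^{-c' t}\,\mathcal E_0$ for a universal $c'>0$. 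Evaluating at $t=0$, where $\bs q_0=q\otimes\gamd$ so that $\ln(\bs q_0/\bgam)=\ln(q/\gamd)$ depends on the position variable alone, gives $\mathcal E_0\lesssim\KL(q\mmid\gamd)+\FI(q\mmid\gamd)$ --- which is exactly where the relative Fisher information enters. Hence $\KL(\bs q_T\mmid\bgam)\lesssim e^{-c'T}\,(\KL(q\mmid\gamd)+\FI(q\mmid\gamd))$, and the square root supplies the first term with $c=c'/2$.

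For the discretization piece I would argue exactly as in Section~\ref{scn:ddpm_pfs}: on a step $[kh,(k+1)h]$, bound $\E\norm{\nabla_v\ln\bs q_{T-t}(\Zl_t,\Vl_t)-\nabla_v\ln\bs q_{T-kh}(\Zl_{kh},\Vl_{kh})}^2$ by combining the $L$-Lipschitzness of $\nabla_v\ln\bs q_{T-t}$ in space (Assumption~\ref{ass:lip_score_cld}) with a bound on the time-regularity of the score, against the one-step displacement $\E[\norm{\Zl_t-\Zl_{kh}}^2+\norm{\Vl_t-\Vl_{kh}}^2]$. Since the position coordinate has no Brownian motion, $\E\norm{\Zl_t-\Zl_{kh}}^2\lesssim h^2\,\E\norm{\Vl}^2$ is of lower order, whereas the velocity increment still contributes the usual $\lesssim hd+h^2(\cdots)$; the uniform-in-time second-moment bounds $\E_{\bs q_t}[\norm{\cdot}^2]\lesssim d+\mf m_2^2$ for the CLD follow from Assumption~\ref{ass:second_moment} and $\V_0\sim\gamd$. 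Summing over the $N=T/h$ steps produces $(L\sqrt{dh}+L\,\mf m_2\,h)\sqrt T$ --- and it is worth noting that the $L\sqrt{dh}$ term survives precisely because the score is a velocity gradient, hence sensitive to the Brownian roughness of $\V$, which is the technical reason the CLD yields no improvement. Combining the three pieces via $\sqrt{a+b}\le\sqrt a+\sqrt b$ finishes the proof.

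I expect the main obstacle to be the forward-convergence step: one needs a hypocoercive entropy inequality for the CLD that is simultaneously a genuine exponential decay, dominates $\KL(\bs q_t\mmid\bgam)$ so that it can be fed into the Girsanov chain-rule bound, and is controlled at $t=0$ by $\KL(q\mmid\gamd)+\FI(q\mmid\gamd)$ rather than something worse --- which pins down the admissible choices of cross term in $\mathcal E_t$ --- and all of this must go through under Assumption~\ref{ass:second_moment} alone, with no convexity or functional-inequality hypothesis on $q$. A secondary but nontrivial point is discharging the integrability hypotheses of Girsanov's theorem in this degenerate-noise setting, which is where the strengthened $(2+\eta)$-th moment assumption is used.
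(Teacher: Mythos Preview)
Your proposal is correct and follows essentially the same approach as the paper. The only organizational difference is that you decompose at the KL level via the chain rule $\KL(\bs Q\mmid\bs P)=\KL(\bs q_T\mmid\bgam)+(\text{Girsanov term})$, whereas the paper first splits at the TV level via an intermediate path measure $\bPinf_T$ (the algorithm initialized at $\bs q_T$) and applies Pinsker separately to each piece; the hypocoercive entropy--Fisher Lyapunov functional you describe for the forward-convergence term is exactly the one the paper invokes from~\cite{maetal2021nesterovmcmc}, and your discretization argument matches the paper's Theorem~\ref{thm:cld_discretization} and its auxiliary Lemmas~\ref{lem:scorechange}--\ref{lem:underdamped_movement}.
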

\begin{proof}
    See Section~\ref{scn:cld_pfs}.
\end{proof}

Note that the result of Theorem~\ref{thm:cld} is in fact no better than our guarantee for DDPM in Theorem~\ref{thm:ddpm}. Although it is possible that this is an artefact of our analysis, we believe that it is in fact fundamental.
As we discuss in Remark~\ref{rmk:cld_sucks}, from the form of the reverse process~\eqref{eq:reverse_cld}, the SGM based on CLD lacks a certain property (that the discretization error should only depend on the size of the increment of the $X$ process, not the increments of both the $X$ and $V$ processes) which is crucial for the improved dimension dependence of the CLD over the Langevin diffusion in log-concave sampling. Hence, in general, we conjecture that under our assumptions, SGMs based on the CLD do not achieve a better dimension dependence than DDPM\@.

We provide evidence for our conjecture via a lower bound. In our proofs of Theorems~\ref{thm:ddpm} and~\ref{thm:cld}, we rely on bounding the KL divergence between certain measures on the path space $\mc C([0,T]; \R^d)$ via Girsanov's theorem.
The following result lower bounds this KL divergence, even for the setting in which the score estimate is perfect ($\epssc = 0$) and the data distribution $q$ is the standard Gaussian.

\begin{theorem}\label{thm:cld_lower_bd}
    Let $\bs p_T$ be the output of the SGM algorithm based on the CLD (Section~\ref{scn:background_cld}) at time $T$, where the data distribution $q$ is the standard Gaussian $\gamd$, and the score estimate is exact ($\epssc = 0$). Suppose that the step size $h$ satisfies $h \le \frac{1}{10}$. Then, for the path measures $\bs P_T$ and $\bQ_T$ of the algorithm and the continuous-time process~\eqref{eq:reverse_cld} respectively (see Section~\ref{scn:cld_pfs} for details), it holds that
    \begin{align*}
        \KL(\bQ_T \mmid \bs P_T)
        &\ge dhT\,.
    \end{align*}
\end{theorem}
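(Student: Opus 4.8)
The plan is to compute the KL divergence on path space directly via Girsanov's theorem, exploiting the fact that both path measures solve the same SDE for the reverse CLD except that the algorithm freezes the score term at the left endpoint of each discretization interval. Since $q = \gamd$ and the score estimate is exact, the backwards drift uses the true velocity score $\nabla_v \ln \bs q_{T-t}$, and because the stationary distribution of the CLD is $\bgam$ with $\bs q_t \equiv \bgam$ for all $t$ (as $q \otimes \gamd = \bgam$ is stationary), we have $\nabla_v \ln \bs q_t(z,v) = -v$ everywhere. Thus the continuous reverse process has drift $(-v,\; z + 2v + 4(-v)) = (-v,\; z - 2v)$ in the $(\Zl,\Vl)$ coordinates, while the algorithm, on $[kh,(k+1)h]$, uses $(-\Va_t,\; \X_t + 2\Va_t + 4(-\Va_{kh})) = (-\Va_t,\; \X_t + 2\Va_t - 4\Va_{kh})$. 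Both SDEs share the same diffusion coefficient $2\,\D B_t$ on the velocity coordinate and no noise on the position coordinate, so Girsanov applies with the drift discrepancy living entirely in the velocity component.

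The key steps, in order: (1) identify the drift difference along the path — on each interval it is $\tfrac12 \cdot$ (difference of velocity drifts, divided by the diffusion coefficient $2$), which equals $\tfrac12 \cdot \tfrac{1}{2}\big( -4\Va_{kh} - (-4\Vl_t)\big) = \Vl_t - \Va_{kh}$ up to coupling the two processes; more carefully, under $\bQ_T$ the Girsanov density involves $\int_0^T \norm{(\text{drift}_{\bQ} - \text{drift}_{\bs P})(\Zl_t,\Vl_t,\text{frozen part})}^2\,\dt$ scaled by the inverse diffusion; (2) apply the Girsanov formula for $\KL(\bQ_T \mmid \bs P_T)$, which gives $\tfrac18 \E_{\bQ_T}\big[\int_0^T \norm{4\Vl_t - 4\Vl_{kh}}^2\,\dt\big] = 2\sum_{k=0}^{N-1}\int_{kh}^{(k+1)h} \E\norm{\Vl_t - \Vl_{kh}}^2\,\dt$ where $kh$ is the left endpoint of the interval containing $t$; (3) lower bound $\E\norm{\Vl_t - \Vl_{kh}}^2$ from below. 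For the last step, since $q = \gamd$ the reverse process is simply the stationary CLD run in reverse, so $(\Zl_t,\Vl_t) \sim \bgam$ for all $t$ and $\Vl_t \sim \normal(0,I_d)$; the increment $\Vl_t - \Vl_{kh}$ picks up a fresh Brownian contribution of size $\sqrt 2 (B_t - B_{kh})$ on top of an $O(t-kh)$ drift term, so $\E\norm{\Vl_t - \Vl_{kh}}^2 \gtrsim d(t-kh)$ — more precisely, one can use $\E\norm{\Vl_t - \Vl_{kh}}^2 \ge \E\norm{\sqrt2(B_t - B_{kh})}^2 - (\text{drift correction})$, or compute the stationary autocovariance of the velocity coordinate of the CLD explicitly (it decays on an $O(1)$ timescale, so for $t - kh \le h \le 1/10$ the increment variance is $\asymp d(t-kh)$); (4) sum up: $\sum_{k=0}^{N-1}\int_{kh}^{(k+1)h} d(t-kh)\,\dt = d \cdot N \cdot \tfrac{h^2}{2} = \tfrac{dhT}{2}$, and absorb the constant $2$ from step (2) to obtain the stated bound $\KL(\bQ_T \mmid \bs P_T) \ge dhT$ (with the constants chosen so that the clean inequality holds, possibly after using $h \le 1/10$ to control the lower-order drift terms).

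The main obstacle I anticipate is step (3): obtaining a clean \emph{lower} bound on $\E\norm{\Vl_t - \Vl_{kh}}^2$ that is uniform in $t - kh \in [0,h]$ and gives exactly the constant needed. The subtlety is that $\Vl_t - \Vl_{kh}$ is not simply a Brownian increment — it also contains the integrated drift $\int_{kh}^t (\Zl_s - 2\Vl_s)\,\ds$, which is \emph{anticorrelated} with the Brownian part and could in principle partially cancel it. To handle this, I would either (a) work with the explicit Gaussian law of $(\Zl_{kh},\Vl_{kh},\Vl_t)$ — everything here is a linear Gaussian system since $q = \gamd$ — and compute $\mathrm{Var}(\Vl_t - \Vl_{kh})$ exactly via the known matrix exponentials governing the CLD, verifying it is $\ge$ (const) $\cdot d(t-kh)$ for $t - kh$ small, or (b) use a more robust argument: condition on $\mc F_{kh}$, write $\Vl_t - \Vl_{kh} = \sqrt2(B_t - B_{kh}) + \int_{kh}^t b_v\,\ds$ where $\E[\norm{\int_{kh}^t b_v\,\ds}^2 \mid \mc F_{kh}] = O(d h^3)$ by stationarity and $h \le 1/10$, then use $\E\norm{a + b}^2 \ge \tfrac12 \E\norm a^2 - \E\norm b^2$ to get $\E\norm{\Vl_t - \Vl_{kh}}^2 \ge \E\norm{\sqrt2(B_t-B_{kh})}^2 - O(dh^3) = 2d(t-kh) - O(dh^3) \gtrsim d(t-kh)$. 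Either way this is routine Gaussian computation; the remaining steps (Girsanov, summation) are mechanical, and the only real care needed is tracking constants so the final inequality comes out as the stated $dhT$ rather than $c\,dhT$ for an unspecified $c$ — which is why the hypothesis $h \le \tfrac1{10}$ is imposed.
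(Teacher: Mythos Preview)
Your approach is essentially identical to the paper's: apply Girsanov to obtain $\KL(\bQ_T \mmid \bs P_T) = 2\sum_k \int_{kh}^{(k+1)h} \E\norm{V_t - V_{kh}}^2\,\D t$, then lower-bound the velocity increment via $\norm{a+b}^2 \ge \tfrac12\norm{a}^2 - \norm{b}^2$ with $a$ the Brownian part and $b$ the drift integral---this is exactly your option (b), and it is what the paper does (working on the forward process, which at stationarity is equivalent).

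Two small numerical slips to fix so that the exact constant $dhT$ comes out. First, the CLD in this paper has velocity diffusion coefficient $2$, not $\sqrt 2$, so the Brownian contribution to $V_t - V_{kh}$ is $2(B_t - B_{kh})$ with squared norm expectation $4d(t-kh)$; your $\sqrt 2$ would lose a factor of $2$ and yield only $\ge \tfrac12 dhT$. Second, the drift integral satisfies $\E\norm{\int_{kh}^t (\cdot)\,\D s}^2 \le 10\,d\,(t-kh)^2$ (by Cauchy--Schwarz and stationarity $\Z_s,\V_s \sim \gamd$), not $O(dh^3)$. With these corrections one gets $\E\norm{V_t - V_{kh}}^2 \ge 2d(t-kh) - 10d(t-kh)^2 \ge d(t-kh)$ for $h \le 1/10$, and summing gives exactly $2\sum_k \int_{kh}^{(k+1)h} d(t-kh)\,\D t = dh^2 N = dhT$.
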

\begin{proof}
    See Section~\ref{scn:cld_lower_bd}.
\end{proof}

Theorem~\ref{thm:cld_lower_bd} shows that in order to make the KL divergence between the path measures small, we must take $h \lesssim 1/d$, which leads to an iteration complexity that scales linearly in the dimension $d$. Theorem~\ref{thm:cld_lower_bd} is not a proof that SGMs based on the CLD cannot achieve better than linear dimension dependence, as it is possible that the output $\bs p_T$ of the SGM is close to $q \otimes \gamd$ even if the path measures are not close, but it rules out the possibility of obtaining a better dimension dependence via our Girsanov-based proof technique. We believe that it provides compelling evidence for our conjecture, i.e., that under our assumptions, the CLD does not improve the complexity of SGMs over DDPM\@.

We remark that in this section, we have only considered the error arising from discretization of the SDE\@. It is possible that the score function for the SGM with the CLD is easier to estimate than the score function for DDPM, providing a \emph{statistical} benefit of using the CLD\@. Indeed, under the manifold hypothesis, the score $\nabla \ln q_t$ for DDPM blows up at $t = 0$, but the score $\nabla_v \ln \bs q_t$ for CLD is well-defined at $t=0$, and hence may lead to improvements over DDPM\@.
We do not investigate this question here and leave it as future work.

\section{Technical overview}\label{scn:technical_overview}

We now give a detailed technical overview for the proof for DDPM (Theorem~\ref{thm:ddpm}). The proof for CLD (Theorem~\ref{thm:cld}) follows along similar lines.

Recall that we must deal with three sources of error: (1) the estimation of the score function; (2) the discretization of the SDE; and (3) the initialization of the reverse process at $\gamd$ rather than at $q_T$.

First, we ignore the errors (1) and (2), and focus on the error (3). Hence, we consider the continuous-time reverse SDE~\eqref{eq:reverse}, initialized from either $\gamd$ or from $q_T$. Let the law of the two processes at time $t$ be denoted $\tilde p_t$ and $q_{T-t}$ respectively; how fast do these laws diverge away from each other?

The two main ways to study Markov diffusions is via the $2$-Wasserstein distance $W_2$, or via information divergences such as the KL divergence or the $\chi^2$ divergence. In order for the reverse process to be contractive in the $W_2$ distance, one typically needs some form of log-concavity assumption for the data distribution $q$. For example, if $\nabla \ln q(x) = -x/\sigma^2$ (i.e., $q \sim \normal(0,\sigma^2 I_d)$), then for the reverse process~\eqref{eq:reverse} we have
\begin{align*}
    \D \Zl_T
    &= \{\Zl_T + 2\,\nabla \ln q(\Zl_T)\} \, \D t + \sqrt 2 \, \D B_t
    = \bigl(1 - \frac{2}{\sigma^2}\bigr) \, \Zl_T \, \D t + \sqrt 2 \, \D B_t\,.
\end{align*}
For $\sigma^2 \gg 1$, the coefficient in front of $\Zl_T$ is positive; this shows that for times near $T$, the reverse process is actually \emph{expansive}, rather than contractive. This poses an obstacle for an analysis in $W_2$. Although it is possible to perform a $W_2$ analysis using a weaker condition, such as a dissipativity condition, it typically leads to exponential dependence on the problem parameters (e.g.,~\cite{deb2022manifold}).

On the other hand, the situation is different for an information divergence $\msf d$. By the data-processing inequality, we always have
\begin{align*}
    \msf d(q_{T-t}, \tilde p_t)
    &\le \msf d(q_T, \tilde p_0)
    = \msf d(q_T, \gamd)\,.
\end{align*}
This motivates studying the processes via information divergences. We remark that the convergence of reversed SDEs has been studied in the context of log-concave sampling in~\cite{chenetal2022proximalsampler} for the proximal sampler algorithm~\cite{leeshentian2021rgo}, providing the intuition behind these observations.

Next, we consider the score estimation error (1) and the discretization error (2).
In order to perform a discretization analysis in KL or $\chi^2$, there are two salient proof techniques. The first is the interpolation method of~\cite{vempala2019ulaisoperimetry} (originally for KL divergence, but extended to $\chi^2$ divergence in~\cite{chewietal2021lmcpoincare}), which is the method used in~\cite{leelutan2022generative}. The interpolation method writes down a differential inequality for $\partial_t \msf d(q_{T-t}, p_t)$, which is used to bound $\msf d(q_{T-(k+1)h}, p_{(k+1)h})$ in terms of $\msf d(q_{T-kh}, p_{kh})$ and an additional error term. Unfortunately, the analysis of~\cite{leelutan2022generative} required taking $\msf d$ to be the $\chi^2$ divergence, for which the interpolation method is quite delicate. In particular, the error term is bounded using a log-Sobolev assumption on $q$, see~\cite{chewietal2021lmcpoincare} for further discussion. Instead, we pursue the second approach, which is to apply Girsanov's theorem from stochastic calculus and to instead bound the divergence between measures on path space; this turns out to be doable using standard techniques. This is because, as noted in~\cite{chewietal2021lmcpoincare}, the Girsanov approach is more flexible as it requires less stringent assumptions.\footnote{After the first draft of this work was made available online, we became aware of the concurrent and independent work of~\cite{liu2022let} which also uses an approach based on Girsanov's theorem.}

To elaborate, the main difficulty of using the interpolation method with an $L^2$-accurate score estimate (Assumption~\ref{ass:score_error}) is that the score estimation error is controlled by assumption under the law of the \emph{true} process~\eqref{eq:reverse}, but the interpolation analysis requires a control of the score estimation error under the law of the \emph{algorithm}~\eqref{eq:sgm_alg}. Consequently, the work of~\cite{leelutan2022generative} required an involved change of measure argument in order to relate the errors under the two processes. In contrast, the Girsanov approach allows us to directly work with the score estimation error under the true process~\eqref{eq:reverse}.

\subsection*{Notation}

\paragraph{Stochastic processes and their laws.}
\begin{itemize}
    \item The data distribution is $q = q_0$.
    \item The forward process~\eqref{eq:forward} is denoted ${(\Z_t)}_{t\in [0,T]}$, and $\Z_t \sim q_t$.
    \item The reverse process~\eqref{eq:reverse} is denoted ${(\Zl_t)}_{t\in [0,T]}$, where $\Zl_t \deq \Z_{T-t} \sim q_{T-t}$.
    \item The SGM algorithm~\eqref{eq:sgm_alg} is denoted ${(\X_t)}_{t\in [0,T]}$, and $\X_t \sim p_t$.
    Recall that we initialize at $p_0 = \gamd$, the standard Gaussian measure.
    \item The process ${(\XinfqT_t)}_{t\in [0,T]}$ is the same as ${(\X_t)}_{t\in [0,T]}$, except that we initialize this process at $q_T$ rather than at $\gamd$.
    We write $\XinfqT_t \sim \pinfqT_t$.
\end{itemize}

\paragraph{Conventions for Girsanov's theorem.}
When we apply Girsanov's theorem, it is convenient to instead think about a single stochastic process, which for ease of notation we denote simply via ${(\x_t)}_{t\in [0,T]}$, and we consider different measures over the path space $\mc C([0,T]; \R^d)$.

The three measures we consider over path space are:
\begin{itemize}
    \item $\Q_T$, under which ${(\x_t)}_{t\in [0,T]}$ has the law of the reverse process~\eqref{eq:reverse};
    \item $\Pinf_T$, under which ${(\x_t)}_{t\in [0,T]}$ has the law of the SGM algorithm initialized at $q_T$ (corresponding to the process ${(\XinfqT_t)}_{t\in [0,T]}$ defined above).
\end{itemize}
We also use the following notion from stochastic calculus~\cite[Definition 4.6]{legall2016stochasticcalc}:
\begin{itemize}
    \item A local martingale $(L_t)_{t \in [0,T]}$ is a stochastic process s.t.\ there exists a sequence of nondecreasing stopping times $T_n \to T$ s.t.\ $L^{n} = (L_{t \wedge T_n})_{t \in [0,T]}$ is a  martingale.
\end{itemize}
\paragraph{Other parameters.} We recall that $T > 0$ denotes the total time for which we run the forward process; $h > 0$ is the step size of the discretization; $L \ge 1$ is the Lipschitz constant of the score function; $\mf m_2^2 \deq \E_q[\norm \cdot^2]$ is the second moment under the data distribution; and $\epssc$ is the $L^2$ score estimation error.

\paragraph{Notation for CLD.} The notational conventions for the CLD are similar; however, we must also consider a velocity variable $V$. When discussing quantities which involve both position and velocity (e.g., the joint distribution $\bs q_t$ of $(\Z_t, \V_t)$), we typically use boldface fonts.

\section{Proofs for DDPM}\label{scn:ddpm_pfs}

\subsection{Preliminaries on Girsanov's theorem and a first attempt at applying Girsanov's theorem}

First, we recall a consequence of Girsanov's theorem that can be obtained by combining Pages 136--139, Theorem 5.22, and Theorem 4.13 of \cite{legall2016stochasticcalc}.

\begin{theorem}\label{thm:girsanov}
    For $t \in [0,T]$, let $\LL_t = \int_0^t b_s\, \D B_s$ where $B$ is a $Q$-Brownian motion. Assume $\E_Q \int_{0}^T \|b_s\|^2\, \D s < \infty$. Then, $\LL$ is a $Q$-martingale in $L^2(Q)$. Moreover, if 
    \begin{equation}
        \label{eq:weakNovikov}
        \E_Q \cE(\LL)_T = 1\,, \quad \text{where} \quad\cE(\LL)_t \coloneqq \exp\Bigl( \int_0^t b_s\, \D B_s - \frac{1}{2}\int_0^t \|b_s\|^2\, \D s\Bigr)\,,
    \end{equation}
     then $\cE(\LL)$ is also a $Q$-martingale and the process
     \begin{equation}
    t\mapsto B_t - \int_{0}^t b_s\, \D s  
    \end{equation}
    is a Brownian motion under $P \coloneqq \cE(\LL)_T \, Q$, the probability distribution with density $\cE(\LL)_T$ w.r.t.\ $Q$.
\end{theorem}
\textit{If the assumptions of Girsanov's theorem are satisfied} (i.e., the condition~\eqref{eq:weakNovikov}), we can apply Girsanov's theorem to $Q = \Q_T$ and
\begin{equation}
    b_t = \sqrt{2}\,( s_{T-kh}(\x_{kh})-\nabla \ln q_{T-t}(\x_t))\,, 
\end{equation}
where $t \in [kh, (k+1)h].$ This tells us that under $P = \cE(\LL)_T \, \Q_T$, there exists a Brownian motion ${(\beta_t)}_{t\in [0,T]}$ s.t.
\begin{equation}
\label{eq:A}
    \D B_t = \sqrt{2}\,( s_{T-kh}(\x_{kh})-\nabla \ln q_{T-t}(\x_t))\,\D t + \D \beta_t\,.
\end{equation}
Recall that under $\Q_T$ we have a.s.
\begin{equation}
    \label{eq:B}
    \D \x_t
    = \{\x_t + 2 \, \nabla \ln q_{T-t}(\x_t)\} \, \D t + \sqrt 2 \, \D B_t\,, \qquad \x_0 \sim q_T\,.
\end{equation}
The equation above still holds $P$-a.s.\ since $P \ll \Q_T$ (even if $B$ is no longer a $P$-Brownian motion). Plugging \eqref{eq:A} into~\eqref{eq:B} we have $P$-a.s.,\footnote{We still have $\x_0 \sim q_T$ under $P$ because the marginal at time $t=0$ of $P$ is equal to the marginal at time $t=0$ of $\Q_T$. That is a consequence of the fact that $\cE(\LL)$ is a (true) $\Q_T$-martingale.}
\begin{equation}
    \label{eq:C}
    \D \x_t
    = \{\x_t + 2 \, s_{T-kh}(\x_{kh})\} \, \D t + \sqrt 2 \, \D \beta_t\,, \qquad \x_0 \sim q_T\,.
\end{equation}
In other words, under $P$, the distribution of $\x$ is the SGM algorithm started at $q_T$, i.e., $P = \Pinf_T = \cE(\LL)_T \, \Q_T$.
Therefore, 
\begin{align}
\label{eq:KLGirsanov}
        \KL(\Q_T \mmid \Pinf_T)
        = \E_{\Q_T} \ln \frac{\D \Q_T}{\D \Pinf_T} &= \E_{\Q_T} \ln\cE(\LL)_T^{-1}\\
        &= \sum_{k=0}^{N-1} \E_{\Q_T}\int_{kh}^{(k+1)h} \norm{s_{T-kh}(\x_{kh}) - \nabla \ln q_{T-t}(\x_t)}^2 \, \D t\,,
    \end{align}
    where we used $\E_{\Q_T} \LL_t = 0$ because $\LL$ is a martingale. 
    
The equality~\eqref{eq:KLGirsanov} allows 
 us to bound the discrepancy between the SGM algorithm and the reverse process.

\subsection{Checking the assumptions of Girsanov's theorem and the Girsanov discretization argument}\label{scn:checking_girsanov}

    In most applications of Girsanov's theorem in sampling, a sufficient condition for~\eqref{eq:weakNovikov} to hold, known as \textit{Novikov's condition}, is satisfied. Here, Novikov's condition writes 
    \begin{align}\label{eq:novikov}
		\E_{\Q_T} \exp \Bigl(\sum_{k=0}^{N-1} \int_{kh}^{(k+1)h} \norm{s_{T-kh}(\x_{kh}) - \nabla \ln q_{T-t}(\x_t)}^2 \, \D t\Bigr) < \infty\,,
	\end{align}
 and if Novikov's condition holds, we can apply Girsanov's theorem directly. However, under Assumptions~\ref{ass:lip_score}, \ref{ass:second_moment}, and \ref{ass:score_error} alone, Novikov's condition need not hold. Indeed, in order to check Novikov's condition, we would want $X_0$ to have sub-Gaussian tails for instance.

 Furthermore, we also could not check that the condition~\eqref{eq:weakNovikov}, which is weaker than Novikov's condition, holds. Therefore, in the proof of the next Theorem, we use a approximation technique to show that 
\begin{align}
\label{eq:KLGirsanov2}
        \KL(\Q_{T} \mmid \Pinf_{T})
        = \E_{\Q_T} \ln \frac{\D \Q_T}{\D \Pinf_T} &\leq \E_{\Q_T} \ln\cE(\LL)_T^{-1}\\
        &= \sum_{k=0}^{N-1} \E_{\Q_T}\int_{kh}^{(k+1)h} \norm{s_{T-kh}(\x_{kh}) - \nabla \ln q_{T-t}(\x_t)}^2 \, \D t\,.
    \end{align}
We then use a discretization argument based on stochastic calculus to further bound this quantity. The result is the following theorem.

\begin{theorem}[discretization error for DDPM]\label{thm:ddpm_discretization}
    Suppose that Assumptions~\ref{ass:lip_score},~\ref{ass:second_moment}, and~\ref{ass:score_error} hold.
    Let $\Q_T$ and $\Pinf_T$ denote the measures on path space corresponding to the reverse process~\eqref{eq:reverse} and the SGM algorithm with $L^2$-accurate score estimate initialized at $q_T$.
    Assume that $L\ge 1$ and $h \lesssim 1/L$.
    Then,
    \begin{align*}
       \TV(\Pinf_{T},\Q_{T})^2
       \le \KL(\Q_T \mmid \Pinf_T)
        &\lesssim (\epssc^2 + L^2 dh + L^2 \mf m_2^2 h^2) \, T\,.
    \end{align*}
\end{theorem}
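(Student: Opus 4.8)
The plan is to apply Corollary~\ref{cor:kl_ddpm}, which expresses $\KL(\Q_T \mmid \Pinf_T)$ as a sum over steps of the expected squared difference $\norm{s_{T-kh}(\x_{kh}) - \nabla \ln q_{T-t}(\x_t)}^2$ under $\Q_T$, and then control each such term by a standard split into (i) the score estimation error, (ii) the error from freezing the score at the grid point $\x_{kh}$ rather than $\x_t$, and (iii) the error from evaluating the true score at the frozen spatial location $\x_{kh}$ versus the moving location $\x_t$. Finally, Pinsker's inequality converts the KL bound into the claimed TV bound.

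First I would introduce the triangle-inequality decomposition, for $t \in [kh,(k+1)h]$,
\begin{align*}
    \norm{s_{T-kh}(\x_{kh}) - \nabla \ln q_{T-t}(\x_t)}
    &\le \norm{s_{T-kh}(\x_{kh}) - \nabla \ln q_{T-kh}(\x_{kh})} \\
    &\quad + \norm{\nabla \ln q_{T-kh}(\x_{kh}) - \nabla \ln q_{T-t}(\x_{kh})} \\
    &\quad + \norm{\nabla \ln q_{T-t}(\x_{kh}) - \nabla \ln q_{T-t}(\x_t)}\,.
\end{align*}
Since under $\Q_T$ the marginal of $\x_{kh}$ is exactly $q_{T-kh}$, the first term has expected square at most $\epssc^2$ by Assumption~\ref{ass:score_error}, and summing over the $N$ steps each of length $h$ contributes $\lesssim \epssc^2 T$. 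The third term is handled by Assumption~\ref{ass:lip_score} (Lipschitzness of the score in space with constant $L$): its square is at most $L^2 \, \norm{\x_t - \x_{kh}}^2$, and I would bound $\E_{\Q_T}\norm{\x_t - \x_{kh}}^2 \lesssim dh + h^2 \, \E_{\Q_T}\norm{\x_{kh}}^2$ by writing out the SDE~\eqref{eq:reverse} increment over $[kh,t]$, using that the drift is $\x_s + 2\nabla \ln q_{T-s}(\x_s)$ and that $\nabla \ln q_{T-s}$ is $L$-Lipschitz hence of linear growth, together with the Brownian contribution of order $dh$. A second moment bound on $q_{T-s}$ (which follows from $\mf m_2$ and the OU contraction, so $\E\norm{\x_{kh}}^2 \lesssim d + \mf m_2^2$) then yields a per-step bound of order $L^2(dh^2 + \mf m_2^2 h^2 + \dotsb)$; integrating over $t\in[kh,(k+1)h]$ and summing gives $\lesssim L^2(dh + \mf m_2^2 h^2)\,T$ up to the $h \lesssim 1/L$ absorption. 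The middle term requires a bound on how fast the score changes \emph{in time}; I would argue that $\partial_t \nabla \ln q_t$ is controlled — e.g. via the Fokker--Planck equation $\partial_t \ln q_t = \Delta \ln q_t + \norm{\nabla \ln q_t}^2 + \dotsb$ and the $L$-Lipschitz (hence Hessian-bounded) property of $\ln q_t$, giving $\norm{\partial_t \nabla \ln q_t(x)} \lesssim L\,(1 + L\norm{x} + \dotsb)$ — so that $\norm{\nabla \ln q_{T-kh}(\x_{kh}) - \nabla \ln q_{T-t}(\x_{kh})}^2 \lesssim L^2 h^2\,(\text{moment terms})$, again summing to a term of the same order as the discretization error already obtained. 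Collecting all three contributions gives $\KL(\Q_T \mmid \Pinf_T) \lesssim (\epssc^2 + L^2 dh + L^2\mf m_2^2 h^2)\,T$, and Pinsker gives the stated TV bound.

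The main obstacle I anticipate is the time-regularity estimate for the score, i.e.\ controlling $\norm{\nabla \ln q_{T-kh}(x) - \nabla \ln q_{T-t}(x)}$ for the middle term — this is where one must use structural properties of the OU semigroup rather than just the hypotheses as stated, and getting the moment-dependence clean (so that it collapses into the $dh$ and $\mf m_2^2 h^2$ terms after multiplying by $h$ and using $h \lesssim 1/L$) needs care. A secondary technical nuisance, flagged already in the excerpt, is that Corollary~\ref{cor:kl_ddpm} presupposes Novikov's condition, which fails under Assumptions~\ref{ass:lip_score}--\ref{ass:score_error} alone; for the purposes of this theorem I would invoke the truncation workaround of Appendix~\ref{app:novikov_ddpm} (or, as the section text does, simply proceed under the Novikov assumption and defer the justification), so that the Girsanov identity and hence Corollary~\ref{cor:kl_ddpm} are available. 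Beyond these, the uniform-in-$k$ second moment bound on $q_{T-kh}$ is routine from Assumption~\ref{ass:second_moment} and OU contractivity, and the remaining manipulations are the standard one-step discretization bookkeeping.
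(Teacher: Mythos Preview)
Your overall architecture---apply Corollary~\ref{cor:kl_ddpm}, split into the three terms (score error, time change of score, spatial change), control moments, finish with Pinsker---matches the paper exactly, and your handling of the first and third terms is correct. The genuine gap is in the middle term, the time-regularity of the score. Your proposed route is to differentiate the Fokker--Planck identity $\partial_t \ln q_t = \Delta \ln q_t + \norm{\nabla \ln q_t}^2 + \langle x, \nabla \ln q_t\rangle + d$ in space to control $\partial_t \nabla \ln q_t$. But taking $\nabla$ of the right-hand side produces the term $\nabla \Delta \ln q_t$, which involves \emph{third} derivatives of $\ln q_t$; Assumption~\ref{ass:lip_score} only bounds the Hessian, so this approach cannot be closed under the stated hypotheses. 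You correctly flag this term as the main obstacle, but the specific mechanism you propose does not work.

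The paper instead exploits the explicit OU semigroup structure: for $t \in [kh,(k+1)h]$ one has $q_{T-kh} = S_\# q_{T-t} * \normal(0, 1-\exp(-2(t-kh)))$ with $S(x) = \exp(-(t-kh))\,x$, and then applies a pointwise score-perturbation lemma (\cite[Lemma~C.12]{leelutan2022generative}, generalized as Lemma~\ref{lem:scorechange}) that bounds $\norm{\nabla \ln \frac{q_{T-kh}}{q_{T-t}}(x)}$ in terms of $L$, $h$, $\norm{x}$, and $\norm{\nabla \ln q_{T-t}(x)}$, using only the Lipschitz property of the score (no third-order smoothness). This yields~\eqref{eq:change_along_forward}, after which the residual $\norm{\nabla \ln q_{T-t}(\x_{kh})}^2$ is split via~\eqref{eq:change_along_forward_2} and absorbed into the other terms. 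A minor secondary point: for the movement bound $\E_{\Q_T}\norm{\x_t - \x_{kh}}^2$ the paper does not analyze the reverse SDE as you suggest; it simply observes that under $\Q_T$ the process is the time reversal of the forward OU process and invokes Lemma~\ref{lem:ddpm_movement} on the forward process directly, which avoids having to bound the score along the path.
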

\begin{proof}
    We start by proving
    \begin{align*}
        \sum_{k=0}^{N-1} \E_{\Q_T}\int_{kh}^{(k+1)h} \norm{s_{T-kh}(\x_{kh}) - \nabla \ln q_{T-t}(\x_t)}^2 \, \D t
        &\lesssim (\epssc^2 + L^2 dh + L^2 \mf m_2^2 h^2) \, T\,.
    \end{align*}
    Then, we give the approximation argument to prove the inequality~\eqref{eq:KLGirsanov2}.
    
    \textbf{Bound on the discretization error.}
For $t \in [kh,(k+1)h]$, we can decompose
\begin{align}
    &\E_{\Q_T}[\norm{s_{T-kh}(\x_{kh}) - \nabla \ln q_{T-t}(\x_t)}^2] \\
    &\qquad{} \lesssim \E_{\Q_T}[\norm{s_{T-kh}(\x_{kh}) - \nabla \ln q_{T-kh}(\x_{kh})}^2] \\
    &\quad\qquad\qquad{} + \E_{\Q_T}[\norm{\nabla \ln q_{T-kh}(\x_{kh}) - \nabla \ln q_{T-t}(\x_{kh})}^2] \\
    &\quad\qquad\qquad{} + \E_{\Q_T}[\norm{\nabla \ln q_{T-t}(\x_{kh}) - \nabla \ln q_{T-t}(\x_t)}^2] \\
    &\qquad{} \lesssim \epssc^2 + \E_{\Q_T}\Bigl[\Bigl\lVert \nabla \ln \frac{q_{T-kh}}{q_{T-t}}(\x_{kh}) \Bigr\rVert^2\Bigr] + L^2 \E_{\Q_T}[ \norm{\x_{kh} - \x_t}^2]\,. \label{eq:sqtriangle}
\end{align}

We must bound the change in the score function along the forward process. If $S : \R^d\to\R^d$ is the mapping $S(x) \deq \exp(-(t-kh)) \, x$,
then $q_{T-kh} = S_\# q_{T-t} * \normal(0, 1-\exp(-2\,(t-kh)))$.
We can then use~\cite[Lemma C.12]{leelutan2022generative} (or the more general Lemma~\ref{lem:scorechange} that we prove in Section~\ref{scn:cld_auxiliary}) with $\alpha = \exp(t-kh) = 1 + O(h)$ and $\sigma^2 = 1-\exp(-2\,(t-kh)) = O(h)$ to obtain
\begin{align}
    \Bigl\lVert \nabla \ln \frac{q_{T-kh}}{q_{T-t}}(\x_{kh}) \Bigr\rVert^2
    &\lesssim  L^2dh + L^2h^2 \, \norm{\x_{kh}}^2 + (1+L^2) \, h^2 \,\norm{\nabla \ln q_{T-t}(\x_{kh})}^2 \\
    &\lesssim L^2 dh + L^2 h^2 \, \norm{\x_{kh}}^2 + L^2 h^2 \,\norm{\nabla \ln q_{T-t}(\x_{kh})}^2 \label{eq:change_along_forward}
\end{align}
where the last line uses $L \ge 1$.

For the last term,
\begin{align}
    \norm{\nabla \ln q_{T-t}(\x_{kh})}^2
    &\lesssim \norm{\nabla \ln q_{T-t}(\x_t)}^2 + \norm{\nabla \ln q_{T-t}(\x_{kh}) - \nabla \ln q_{T-t}(\x_t)}^2 \\
    &\lesssim \norm{\nabla \ln q_{T-t}(\x_t)}^2 + L^2 \,\norm{\x_{kh} - \x_t}^2\,, \label{eq:change_along_forward_2}
\end{align}
where the second term above is absorbed into the third term of the decomposition~\eqref{eq:sqtriangle}. Hence,
\begin{align*}
    &\E_{\Q_T}[\norm{s_{T-kh}(\x_{kh}) - \nabla \ln q_{T-t}(\x_t)}^2] \\
    &\qquad \lesssim \epssc^2 + L^2 dh + L^2 h^2 \E_{\Q_T}[\norm{\x_{kh}}^2]  \\
    &\qquad\qquad{} + L^2 h^2 \E_{\Q_T}[\norm{\nabla \ln q_{T-t}(\x_t)}^2] + L^2 \E_{\Q_T}[\norm{\x_{kh} - \x_t}^2]\,.
\end{align*}
Using the fact that under $\Q_T$, the process ${(\x_t)}_{t\in [0,T]}$ is the time reversal of the forward process ${(\Z_t)}_{t\in [0,T]}$, we can apply the moment bounds in Lemma~\ref{lem:ddpm_moment} and the movement bound in Lemma~\ref{lem:ddpm_movement} to obtain
\begin{align*}
    &\E_{\Q_T}[\norm{s_{T-kh}(\x_{kh}) - \nabla \ln q_{T-t}(\x_t)}^2] \\
    &\qquad \lesssim \epssc^2 + L^2 dh + L^2 h^2 \, (d + \mf m_2^2) + L^3 d h^2 + L^2 \, (\mf m_2^2 h^2 + dh) \\
    &\qquad \lesssim \epssc^2 + L^2 dh + L^2 \mf m_2^2 h^2\,.
\end{align*}

    \textbf{Approximation argument.}
    For $t \in [0,T]$, let $\LL_t = \int_0^t b_s\, \D B_s$ where $B$ is a $\Q_T$-Brownian motion and we define
    \begin{equation}
    b_t = \sqrt{2}\,\{ s_{T-kh}(\x_{kh})-\nabla \ln q_{T-t}(\x_t)\}\,,
\end{equation}
for $t \in [kh,(k+1)h]$. We proved that $\E_{\Q_T} \int_{0}^T \|b_s\|^2\, \D s \lesssim (\epssc^2 + L^2 dh + L^2 \mf m_2^2 h^2) \, T\, < \infty$. Using~\cite[Proposition 5.11]{legall2016stochasticcalc}, $(\cE(\LL)_t)_{t \in [0,T]}$ is a local martingale. Therefore, there exists a non-decreasing sequence of stopping times $T_n \nearrow T$ s.t.\ $(\cE(\LL)_{t \wedge T_n})_{t\in [0,t]}$ is a martingale. Note that $\cE(\LL)_{t \wedge T_n} = \cE(\LL^n)_{t}$ where $\LL^n_t = \LL_{t\wedge T_n}$. Since $\cE(\LL^n)$ is a martingale, we have
\begin{align*}
\E_{\Q_T} \cE(\LL^n)_T = \E_{\Q_T} \cE(\LL^n)_0 = 1\,,
\end{align*}
i.e., $\E_{\Q_T} \cE(\LL)_{T_n} = 1$.

We apply Girsanov's theorem to $\LL^n_t = \int_0^t b_s \one_{[0,T_n]}(s) \,\D B_s$, where $B$ is a $\Q_T$-Brownian motion. Since $\E_{\Q_T} \int_{0}^T \|b_s \one_{[0,T_n]}(s)\|^2\, \D s \leq\E_{\Q_T} \int_{0}^T \|b_s\|^2\, \D s < \infty$ (see the last paragraph) and $\E_{\Q_T} \cE(\LL^n)_{T} = 1$, we obtain that under $P^n \coloneqq \cE(\LL^n)_{T} \, \Q_T$ there exists a Brownian motion $\beta^n$ s.t.\ for $t \in [0,T]$,
\begin{equation}
    \D B_t = \sqrt{2}\,\{ s_{T-kh}(\x_{kh})-\nabla \ln q_{T-t}(\x_t)\} \one_{[0,T_n]}(t)\,\D t + \D \beta_t^n\,.
\end{equation}
Recall that under $\Q_T$ we have a.s.
\begin{equation}
    \D \x_t
    = \{\x_t + 2 \, \nabla \ln q_{T-t}(\x_t)\} \, \D t + \sqrt 2 \, \D B_t\,, \qquad \x_0 \sim q_T\,.
\end{equation}
The equation above still holds $P^n$-a.s.\ since $P^n \ll \Q_T$. Combining the last two equations we then obtain $P^n$-a.s.,
\begin{equation}
    \label{eq:D}
    \D \x_t
    = \{\x_t + 2 \, s_{T-kh}(\x_{kh})\}\one_{[0,T_n]}(t) \, \D t + \{\x_t + 2 \, \nabla \ln q_{T-t}(\x_t)\}\one_{[T_n,T]}(t) \, \D t + \sqrt 2 \, \D \beta^n_t\,,
\end{equation}
and $\x_0 \sim q_T.$
In other words, $P^n$ is the law of the solution of the SDE~\eqref{eq:D}. At this stage we have the bound
 \begin{align}
 \label{eq:stoppedgirsanov}
      \KL(\Q_{T} \mmid P^n) &= \E_{\Q_T} \ln\cE(\LL)_{T_n}^{-1} = \E_{\Q_T}\Bigl[-\LL_{T_n} + \frac{1}{2}\int_0^{T_{n}} \|b_s\|^2\, \D s\Bigr]
        = \E_{\Q_T} \frac{1}{2}\int_0^{T_{n}} \|b_s\|^2\, \D s \nonumber
        \\
        &\leq \E_{\Q_T} \frac{1}{2}\int_0^T \|b_s\|^2\, \D s
        \lesssim (\epssc^2 + L^2 dh + L^2 \mf m_2^2 h^2) \, T\,,
    \end{align}
    where we used that $\E_{\Q_T} \LL_{T_n} = 0$ because $\LL$ is a $\Q_T$-martingale and $T_n$ is a bounded stopping time~\cite[Corollary 3.23]{legall2016stochasticcalc}. Our goal is now to show that we can obtain the final result by an approximation argument.
    
    We consider a coupling of ${(P^n)}_{n\in\N}, \Pinf_{T}$: a sequence of stochastic processes ${(X^n)}_{n\in\N}$ over the same probability space, a stochastic process $X$ and a single Brownian motion $W$ over that space s.t.\footnote{Such a coupling always exists, see~\cite[Corollary 8.5]{legall2016stochasticcalc}.}
    \begin{equation}
    \D X^n_t
    = \{X^n_t + 2 \, s_{T-kh}(X^n_{kh})\}\one_{[0,T_n]}(t) \, \D t + \{X^n_t + 2 \, \nabla \ln q_{T-t}(X^n_t)\}\one_{[T_n,T]}(t) \, \D t + \sqrt 2 \, \D W_t\,,
\end{equation}
and
\begin{equation}
    \D X_t
    = \{X_t + 2 \, s_{T-kh}(X^n_{kh})\} \, \D t + \sqrt 2 \, \D W_t\,,
\end{equation}
with $X_0 = X^n_0$ a.s.\ and $X_0 \sim q_T$. Note that the distribution of $X^n$ (resp.\ $X$) is $P^n$ (resp.\ $\Pinf_{T}$). 

Let $\varepsilon > 0$ and consider the map $\pi_{\varepsilon}: \mc C([0,T]; \R^d) \to \mc C([0,T]; \R^d)$ defined by
\begin{align*}
    \pi_{\varepsilon}(\omega)(t) \deq \omega(t \wedge T-\varepsilon)\,.
\end{align*}
Noting that $X^n_t = X_t$ for every $t \in [0,T_n]$ and using Lemma~\ref{lem:cv}, we have $\pi_{\varepsilon}(X^n) \to \pi_{\varepsilon}(X)$ a.s., uniformly over $[0,T]$. Therefore, ${\pi_{\varepsilon}}_\# P^n \to {\pi_{\varepsilon}}_\# \Pinf_{T}$ weakly. Using the lower semicontinuity of the KL divergence and the data-processing inequality~\cite[Lemma 9.4.3 and Lemma 9.4.5]{ambrosio2005gradient}, we obtain
\begin{align}
    \KL((\pi_{\varepsilon})_\# \Q_{T} \mmid (\pi_{\varepsilon})_\# \Pinf_{T}) 
    &\leq \liminf_{n\to\infty} \KL((\pi_{\varepsilon})_\# \Q_{T} \mmid (\pi_{\varepsilon})_\# P^n) \\
    &\leq \liminf_{n\to\infty} \KL(\Q_{T} \mmid P^n) \\
    &\lesssim (\epssc^2 + L^2 dh + L^2 \mf m_2^2 h^2) \, T\,.
\end{align}
Finally, using Lemma~\ref{lem:cv2}, $\pi_{\varepsilon}(\omega) \to \omega$ as $\varepsilon \to 0$, uniformly over $[0,T]$. Therefore, using~\cite[Corollary 9.4.6]{ambrosio2005gradient}, $\KL((\pi_{\varepsilon})_ \# \Q_{T} \mmid (\pi_{\varepsilon})_\# \Pinf_{T}) \to \KL(\Q_{T} \mmid \Pinf_{T})$ as $\varepsilon \searrow 0$. Therefore,
\begin{equation}
    \KL(\Q_{T} \mmid \Pinf_{T}) \lesssim (\epssc^2 + L^2 dh + L^2 \mf m_2^2 h^2) \, T\,.
\end{equation}
We conclude with Pinsker's inequality ($\TV^2 \leq \KL$).
\end{proof}

\subsection{Proof of Theorem~\ref{thm:ddpm}}\label{scn:pf_ddpm_final}

We can now conclude our main result.

\medskip{}

\begin{proof}[Proof of Theorem~\ref{thm:ddpm}]
    We recall the notation from Section~\ref{scn:technical_overview}.
    By the data processing inequality,
    \begin{align*}
        \TV(p_T, q)
        &\le \TV(P_T, \Pinf_T) + \TV(\Pinf_T, \Q_T)
        \le \TV(q_T, \gamd) + \TV(\Pinf_T, \Q_T)\,.
    \end{align*}
    Using the convergence of the OU process in KL divergence~\cite[see, e.g.,][Theorem 5.2.1]{bakrygentilledoux2014} and applying Theorem~\ref{thm:ddpm_discretization} for the second term,
    \begin{align*}
        \TV(p_T, q)
        &\lesssim \sqrt{\KL(q \mmid \gamd)} \exp(-T) + (\epssc + L \sqrt{dh} + L \mf m_2 h) \, \sqrt T\,,
    \end{align*}
    which proves the result.
\end{proof}

\subsection{Auxiliary lemmas}

In this section, we prove some auxiliary lemmas which are used in the proof of Theorem~\ref{thm:ddpm}.

\begin{lemma}[moment bounds for DDPM]\label{lem:ddpm_moment}
	Suppose that Assumptions~\ref{ass:lip_score} and~\ref{ass:second_moment} hold.
	Let ${(\Z_t)}_{t\in [0,T]}$ denote the forward process~\eqref{eq:forward}.
	\begin{enumerate}
		\item (moment bound) For all $t\ge 0$,
		\begin{align*}
			\E[\norm{\Z_t}^2]
			&\le d \vee \mf m_2^2\,.
		\end{align*}
		\item (score function bound) For all $t\ge 0$,
		\begin{align*}
			\E[\norm{\nabla \ln q_t(\Z_t)}^2]
			&\le Ld\,.
		\end{align*}
	\end{enumerate}
\end{lemma}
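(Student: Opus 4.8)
The plan is to exploit the fact that the OU process~\eqref{eq:forward} admits an explicit solution at each time $t$, together with the stationarity of the standard Gaussian. For part~(1), I would write $\Z_t = \exp(-t)\,\Z_0 + \sqrt{1-\exp(-2t)}\,\xi_t$ where $\xi_t \sim \normal(0, I_d)$ is independent of $\Z_0$, so that $\E[\norm{\Z_t}^2] = \exp(-2t)\,\mf m_2^2 + (1-\exp(-2t))\,d$. Since this is a convex combination of $\mf m_2^2$ and $d$, it is bounded above by $\max\{\mf m_2^2, d\} = d \vee \mf m_2^2$, which is exactly the claimed bound. (Alternatively, one can avoid the explicit solution and use \Ito's formula on $t \mapsto \E[\norm{\Z_t}^2]$ to get $\tfrac{\D}{\D t}\,\E[\norm{\Z_t}^2] = -2\,\E[\norm{\Z_t}^2] + 2d$, and then \Gronwall, but the explicit solution is cleaner here.)

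For part~(2), the key identity is that $q_t$ is $\normal(0,I_d)$-smoothed: writing $S_\#$ for pushforward under scaling by $\exp(-t)$, we have $q_t = S_\#q * \normal\bigl(0, (1-\exp(-2t))\,I_d\bigr)$. The natural approach is the following two-step bound. First, by Assumption~\ref{ass:lip_score}, $\nabla\ln q_t$ is $L$-Lipschitz, so $x \mapsto \norm{\nabla\ln q_t(x)}^2$ grows at most quadratically; but a cruder and more robust route is to use the standard identity for Gaussian-smoothed densities. Concretely, I would invoke the heat-semigroup / score representation: if $q_t$ has $L$-Lipschitz score, then by the Lipschitz property the Jacobian $\nabla^2 \ln q_t \preceq L\,I_d$, and one has the well-known identity (a form of the Gaussian integration-by-parts / de Bruijn-type identity) relating $\E_{q_t}[\norm{\nabla\ln q_t}^2]$ — the relative Fisher information-type quantity — to $\operatorname{tr}(\text{something})$. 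The cleanest version: since $\int \nabla\ln q_t \, q_t \dx = 0$ and $\nabla^2\ln q_t \preceq L I_d$, we get $\E_{q_t}[\norm{\nabla \ln q_t}^2] = -\E_{q_t}[\operatorname{tr}\nabla^2 \ln q_t] \le Ld$, where the first equality is integration by parts ($\int \norm{\nabla \ln q_t}^2 q_t = \int \langle \nabla q_t, \nabla \ln q_t\rangle = -\int q_t\,\Delta \ln q_t$) and the inequality is the Lipschitz (curvature upper) bound $\operatorname{tr}\nabla^2\ln q_t \le Ld$, hence $-\operatorname{tr}\nabla^2\ln q_t \ge -Ld$ — wait, that gives a lower bound, so the correct sign usage is that $\E_{q_t}[\norm{\nabla \ln q_t}^2] = -\E_{q_t}[\Delta \ln q_t]$ and since $\nabla^2\ln q_t \preceq LI_d$ we must argue $\Delta\ln q_t \ge -Ld$ is \emph{not} enough; instead I use that $q_t$ is obtained by Gaussian smoothing with variance $\sigma_t^2 = 1-\exp(-2t) \ge$ something, but this degenerates as $t\to 0$. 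So the right move is: for \emph{any} $t \ge 0$, directly bound $\E_{q_t}[\norm{\nabla \ln q_t}^2]$ using only that $\nabla\ln q_t$ is $L$-Lipschitz and has $q_t$-mean zero, via the Gaussian Poincaré inequality is not available; instead use $\E_{q_t}[\norm{\nabla\ln q_t(\Z_t)}^2] = -\E_{q_t}[\Delta\ln q_t(\Z_t)]$ and bound $\Delta \ln q_t$ using the fact that $q_t$ is log-semiconcave with constant $L$ — but I also need log-semiconvexity. The robust fix, which I expect the authors use: since $q_t = \normal(0,I_d)$-type smoothing at $t \to \infty$ and the score is $L$-Lipschitz for all $t$, combine with the identity $\E_{q_t}[\norm{\nabla \ln q_t}^2] \le \operatorname{tr}\bigl(\text{Hessian bound}\bigr)$.

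\textbf{Main obstacle.} The delicate point is part~(2): controlling $\E_{q_t}[\norm{\nabla\ln q_t(\Z_t)}^2]$ uniformly in $t \ge 0$, including as $t \to 0$ where Gaussian-smoothing arguments degenerate. The clean resolution is the integration-by-parts identity $\E_{q_t}[\norm{\nabla \ln q_t}^2] = -\E_{q_t}[\operatorname{tr} \nabla^2 \ln q_t(\Z_t)]$ (valid under mild regularity, which holds here since $q$ has a smooth density and the OU flow is smoothing), combined with the Lipschitz bound $\nabla^2\ln q_t \preceq L\,I_d$, which gives $-\operatorname{tr}\nabla^2\ln q_t(\Z_t) \ge -Ld$ pointwise; so one actually needs the matching \emph{lower} Hessian bound, i.e.\ that $\Delta \ln q_t$ is bounded \emph{below} in expectation. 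Since $q_t$ is a convolution with a Gaussian of positive variance $1-\exp(-2t)$ for $t > 0$, log-semiconvexity holds with a (possibly $t$-dependent) constant, but at $t=0$ one just uses that $q = q_0$ already has $L$-Lipschitz score by Assumption~\ref{ass:lip_score}. Putting $-\Delta \ln q_t \ge -Ld$ into the identity with the correct sign yields $\E_{q_t}[\norm{\nabla\ln q_t}^2] = \E_{q_t}[-\Delta\ln q_t] \le Ld$ provided $\Delta \ln q_t \ge -Ld$, which is precisely $\nabla^2\ln q_t \succeq -L I_d$ — but Assumption~\ref{ass:lip_score} gives $\norm{\nabla^2 \ln q_t}_{\mathrm{op}} \le L$, hence both $\nabla^2\ln q_t \preceq LI_d$ and $\nabla^2\ln q_t \succeq -LI_d$. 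Therefore $\operatorname{tr}\nabla^2\ln q_t \in [-Ld, Ld]$, so $\E_{q_t}[\norm{\nabla\ln q_t}^2] = -\E_{q_t}[\operatorname{tr}\nabla^2\ln q_t] \le Ld$, completing the proof. The only real care needed is justifying the integration by parts (decay of $q_t$ and its derivatives at infinity), which follows from standard parabolic regularity for the OU semigroup and the finite-moment Assumption~\ref{ass:second_moment}.
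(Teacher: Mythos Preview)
Your proposal is correct and, once the dust settles from the sign-checking in part~(2), it is essentially the same argument as the paper's: the paper writes the identity as $0 = \E_{q_t}[\ms L_t U_t] = \E_{q_t}[\Delta U_t] - \E_{q_t}[\norm{\nabla U_t}^2]$ for $U_t = -\ln q_t$ (citing~\cite[Lemma 9]{vempala2019ulaisoperimetry}), which is exactly your integration-by-parts identity $\E_{q_t}[\norm{\nabla\ln q_t}^2] = -\E_{q_t}[\Delta\ln q_t]$ combined with the Hessian bound $\nabla^2 U_t \preceq L I_d$ from Assumption~\ref{ass:lip_score}. Part~(1) is identical to the paper's proof.
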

\begin{proof}
	\begin{enumerate}
		\item Along the OU process, we have $\Z_t \eqdist \exp(-t) \, \Z_0 + \sqrt{1-\exp(-2t)} \, \xi$, where $\xi \sim \normal(0, I_d)$ is independent of $\Z_0$. Hence,
		\begin{align*}
			\E[\norm{\Z_t}^2]
			&= \exp(-2t) \E[\norm X^2] + \{1-\exp(-2t)\} \, d
			\le d \vee \mf m_2^2\,. \qedhere
		\end{align*}
		\item This follows from the $L$-smoothness of $\ln q_t$~\cite[see, e.g.,][Lemma 9]{vempala2019ulaisoperimetry}. We give a short proof for the sake of completeness.
		
		If $\ms L_t f \deq \Delta f - \langle \nabla U_t, \nabla f\rangle$ is the generator associated with $q_t \propto \exp(-U_t)$, then
		\begin{align*}
			0
			&= \E_{q_t} \ms L_t U_t
			= \E_{q_t} \Delta U_t - \E_{q_t}[\norm{\nabla U_t}^2]
			\le Ld - \E_{q_t}[\norm{\nabla U_t}^2]\,.
		\end{align*}
	\end{enumerate}
\end{proof}

\begin{lemma}[movement bound for DDPM]\label{lem:ddpm_movement}
	Suppose that Assumption~\ref{ass:second_moment} holds.
	Let ${(\Z_t)}_{t\in [0,T]}$ denote the forward process~\eqref{eq:forward}.
	For $0 \le s < t$ with $\delta \deq t-s$, if $\delta \le 1$, then
	\begin{align*}
		\E[\norm{\Z_t - \Z_s}^2]
		&\lesssim \delta^2 \mf m_2^2 + \delta d\,.
	\end{align*}
\end{lemma}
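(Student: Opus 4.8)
The statement to prove is the movement bound for DDPM (Lemma~\ref{lem:ddpm_movement}): for the forward OU process~\eqref{eq:forward}, if $\delta \deq t - s \le 1$ then $\E[\norm{\Z_t - \Z_s}^2] \lesssim \delta^2 \mf m_2^2 + \delta d$. The plan is to use the fact that the OU process has an explicit integral representation, which lets us compute the increment $\Z_t - \Z_s$ in closed form and then estimate its second moment directly.

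First I would write the solution of~\eqref{eq:forward} from time $s$: for $t \ge s$,
\begin{align*}
    \Z_t
    &= \exp(-(t-s)) \, \Z_s + \sqrt 2 \int_s^t \exp(-(t-r)) \, \D B_r\,.
\end{align*}
Hence the increment decomposes as
\begin{align*}
    \Z_t - \Z_s
    &= \bigl(\exp(-\delta) - 1\bigr) \, \Z_s + \sqrt 2 \int_s^t \exp(-(t-r)) \, \D B_r\,,
\end{align*}
and, since the stochastic integral is independent of $\Z_s$ (it depends only on the Brownian increments after time $s$) and has mean zero, the cross term vanishes in expectation, giving
\begin{align*}
    \E[\norm{\Z_t - \Z_s}^2]
    &= \bigl(\exp(-\delta) - 1\bigr)^2 \, \E[\norm{\Z_s}^2] + 2 \int_s^t \exp(-2\,(t-r)) \, \D r \cdot d
\end{align*}
by the \Ito isometry. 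Then I would bound each piece: $(\exp(-\delta)-1)^2 \le \delta^2$ for $\delta \ge 0$, and $\E[\norm{\Z_s}^2] \le d \vee \mf m_2^2 \le d + \mf m_2^2$ by the moment bound of Lemma~\ref{lem:ddpm_moment}(1); and $2\int_s^t \exp(-2\,(t-r))\,\D r = 1 - \exp(-2\delta) \le 2\delta$. Combining, $\E[\norm{\Z_t - \Z_s}^2] \lesssim \delta^2 \,(d + \mf m_2^2) + \delta d \lesssim \delta^2 \mf m_2^2 + \delta d$, where the last step absorbs $\delta^2 d$ into $\delta d$ using $\delta \le 1$.

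There is essentially no hard obstacle here; the only mild care needed is in justifying the independence/mean-zero argument for the cross term (standard for \Ito integrals against Brownian motion started at $s$, conditioning on $\mathcal F_s$) and in noting that we only need Assumption~\ref{ass:second_moment} (via the second moment $\mf m_2^2$), not the Lipschitz score assumption, since the moment bound's first part only used the explicit OU representation. An alternative, equally short route would be to write $\Z_t - \Z_s = -\int_s^t \Z_r \, \D r + \sqrt 2\,(B_t - B_s)$ directly from the SDE, bound $\E\norm{\int_s^t \Z_r\,\D r}^2 \le \delta \int_s^t \E\norm{\Z_r}^2\,\D r \le \delta^2 \,(d \vee \mf m_2^2)$ by Cauchy–Schwarz and Lemma~\ref{lem:ddpm_moment}(1), and $\E\norm{B_t - B_s}^2 = \delta d$; this avoids the \Ito isometry entirely and I would likely present this version as it is the most transparent.
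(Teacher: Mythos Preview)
Your proposal is correct, and in fact the alternative route you describe at the end---writing $\Z_t - \Z_s = -\int_s^t \Z_r \, \D r + \sqrt 2\,(B_t - B_s)$, applying Cauchy--Schwarz to the drift integral, and invoking Lemma~\ref{lem:ddpm_moment}(1)---is exactly the paper's proof. Your first approach via the explicit OU solution and the \Ito isometry is equally valid and gives the same bound, so either version is fine.
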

\begin{proof}
	We can write
	\begin{align*}
		\E[\norm{\Z_t - \Z_s}^2]
		&= \E\Bigl[\Bigl\lVert -\int_s^t \Z_r \, \D r + \sqrt 2 \, (B_t - B_s)\Bigr\rVert^2\Bigr] \\
        &\lesssim \delta \int_s^t \E[\norm{\Z_r}^2] \, \D r + \delta d
		\lesssim \delta^2\, (d + \mf m_2^2) + \delta d \\
		&\lesssim \delta^2 \mf m_2^2 + \delta d\,,
	\end{align*}
	where we used Lemma~\ref{lem:ddpm_moment}.
\end{proof}

We omit the proofs of the two next lemmas as they are straightforward.

\begin{lemma}\label{lem:cv}
Consider $f_n, f: [0,T] \to \R^d$ s.t.\ there exists an increasing sequence $(T_n)_{n\in\N} \subseteq [0,T]$ satisfying the conditions
\begin{itemize}
    \item $T_n \to T$ as $n \to \infty$,
    \item $f_n(t) = f(t)$ for every $t \leq T_n$.
\end{itemize}
Then, for every $\varepsilon >0$, $f_n \to f$ uniformly over $[0,T-\varepsilon]$. In particular, $f_n(\cdot \wedge T-\varepsilon) \to f(\cdot \wedge T-\varepsilon)$ uniformly over $[0,T]$.
\end{lemma}

\begin{lemma}
\label{lem:cv2}
Consider $f: [0,T] \to \R^d$ continuous, and $f_\varepsilon:[0,T] \to \R^d$ s.t.\ $f_\varepsilon(t) = f(t \wedge  (T-\varepsilon))$ for $\varepsilon >0$. Then $f_\varepsilon \to f$ uniformly over $[0,T]$ as $\varepsilon \to 0$.
\end{lemma}

\subsection{Proof of Corollary~\ref{cor:compact_w2}}\label{scn:compact_w2_pf}

\begin{proof}[Proof of Corollary~\ref{cor:compact_w2}]
    For $R_0 > 0$, let $\Pi_{R_0}$ denote the projection onto $\msf B(0,R_0)$. We want to prove that $W_2((\Pi_{R_0})_\# p_{T-t}, q) \le \varepsilon$. We use the decomposition
    \begin{align*}
        W_2((\Pi_{R_0})_\# p_{T-t}, q)
        &\le W_2((\Pi_{R_0})_\# p_{T-t}, (\Pi_{R_0})_\# q_t)
        + W_2((\Pi_{R_0})_\# q_t, q)\,.
    \end{align*}
    For the first term, since $(\Pi_{R_0})_\# p_{T-t}$ and $(\Pi_{R_0})_\# q_t$ both have support contained in $\msf B(0, R_0)$, we can upper bound the Wasserstein distance by the total variation distance.
    Namely,~\cite[Lemma 9]{rol2022predicting} implies that
    \begin{align*}
         W_2((\Pi_{R_0})_\# p_{T-t}, (\Pi_{R_0})_\# q_t)
         &\lesssim R_0 \sqrt{\TV((\Pi_{R_0})_\# p_{T-t}, (\Pi_{R_0})_\# q_t)} + R_0 \exp(-R_0)\,.
    \end{align*}
    By the data-processing inequality,
    \begin{align*}
        \TV((\Pi_{R_0})_\# p_{T-t}, (\Pi_{R_0})_\# q_t)
        &\le \TV(p_{T-t}, q_t)
        \le \varepsilon_{\rm TV}\,,
    \end{align*}
    where $\varepsilon_{\rm TV}$ is from Corollary~\ref{cor:compact_data}, yielding
    \begin{align*}
         W_2((\Pi_{R_0})_\# p_{T-t}, (\Pi_{R_0})_\# q_t)
         &\lesssim R_0 \sqrt{\varepsilon_{\rm TV}} + R_0 \exp(-R_0)\,.
    \end{align*}
    Next, we take $R_0 \geq R$ so that $(\Pi_{R_0})_\# q = q$. Since $\Pi_{R_0}$ is $1$-Lipschitz, we have
    \begin{align*}
        W_2((\Pi_{R_0})_\# q_t, q)
        = W_2((\Pi_{R_0})_\# q_t, (\Pi_{R_0})_\# q)
        \le W_2(q_t, q) \le \varepsilon_{W_2}\,,
    \end{align*}
    where $\varepsilon_{W_2}$ is from Corollary~\ref{cor:compact_data}.
    Combining these bounds,
    \begin{align*}
        W_2((\Pi_{R_0})_\# p_{T-t}, q)
        &\lesssim R_0\sqrt{\varepsilon_{\rm TV}} + R_0\exp(-R_0) + \varepsilon_{W_2}\,.
    \end{align*}
    We now take $\varepsilon_{W_2} = \varepsilon/3$, $R_0 = \widetilde\Theta(R)$, and $\varepsilon_{\rm TV} = \widetilde \Theta(\varepsilon^2/R^2)$ to obtain the desired result.
    The iteration complexity follows from Corollary~\ref{cor:compact_data}.
\end{proof}

\section{Proofs for CLD}\label{scn:cld_pfs}

\subsection{Background on the CLD process}\label{scn:cld_more_background}

More generally, for the forward process we can introduce a \emph{friction parameter} $\gamma > 0$ and consider
\begin{align*}
    \D \Z_t
    &= \V_t \, \D t\,, \\
    \D \V_t
    &= -\Z_t \, \D t - \gamma \, \V_t \, \D t + \sqrt{2\gamma} \, \D B_t\,.
\end{align*}
If we write $\btheta_t \deq (\Z_t, \V_t)$, then the forward process satisfies the linear SDE
\begin{align*}
    \D \btheta_t
    &= \bs A_\gamma \btheta_t \, \D t + \Sigma_\gamma \, \D B_t\,, \qquad \text{where}~\bs A_\gamma \deq \begin{bmatrix} 0 & 1 \\ -1 & -\gamma \end{bmatrix}~\text{and}~\Sigma_\gamma \deq \begin{bmatrix} 0 \\ \sqrt{2\gamma} \end{bmatrix}\,.
\end{align*}
The solution to the SDE is given by
\begin{align}\label{eq:underdamped_sol}
    \btheta_t
    &= \exp(t \bs A_\gamma) \, \btheta_0 + \int_0^t \exp\{(t-s) \, \bs A_\gamma\} \, \Sigma_\gamma \, \D B_s\,,
\end{align}
which means that by the It\^o isometry,
\begin{align*}
    \law(\btheta_t)
    &= {\exp(t \bs A_\gamma)}_\# \law(\btheta_0) * \normal\Bigl(0, \int_0^t \exp\{(t-s) \, \bs A_\gamma\} \, \Sigma_\gamma \Sigma_\gamma^\T \exp\{(t-s) \, \bs A_\gamma^\T\} \, \D s\Bigr)\,.
\end{align*}
Since $\det \bs A_\gamma = 1$, $\bs A_\gamma$ is always invertible. Moreover, from $\tr \bs A_\gamma = -\gamma$, one can work out that the spectrum of $\bs A_\gamma$ is
\begin{align*}
    \on{spec}(\bs A_\gamma) = \Bigl\{-\frac{\gamma}{2} \pm \sqrt{\frac{\gamma^2}{4} - 1}\Bigr\}\,.
\end{align*}
However, $\bs A_\gamma$ is not diagonalizable. The case of $\gamma = 2$ is special, as it corresponds to the case when the spectrum is $\{-1\}$, and it corresponds to the \emph{critically damped case}. Following~\cite{dockhornetal2022critical}, which advocated for setting $\gamma = 2$, we will also only consider the critically damped case. This also has the advantage of substantially simplifying the calculations.

\subsection{Girsanov discretization argument}\label{scn:girsanov_cld}

In order to apply Girsanov's theorem, we introduce the path measures $\bPinf_T$ and $\bQ_T$, under which
\begin{align*}
    \D X_t
    &= -V_t \, \D t\,, \\
    \D V_t
    &= \{X_t + 2 \, V_t + 4 \,\bss_{T-kh}(X_{kh}, V_{kh})\} \, \D t + 2 \, \D B_t\,,
\end{align*}
for $t \in [kh, (k+1)h]$, and
\begin{align*}
    \D X_t
    &= -V_t \, \D t\,, \\
    \D V_t
    &= \{X_t + 2 \, V_t + 4 \,\nabla_v \ln \bs q_{T-t}(X_t, V_t)\} \, \D t + 2 \, \D B_t\,,
\end{align*}
respectively. 

Applying Girsanov's theorem, we have the following theorem.
\begin{corollary}\label{cor:girsanov_cld}
    Suppose that Novikov's condition holds:
    \begin{equation}
    \label{eq:NovikovCLD}
    \E_{\bQ_T} \exp\Bigl(2 \sum_{k=0}^{N-1} \int_{kh}^{(k+1)h} \norm{\bss_{T-kh}(X_{kh}, V_{kh}) - \nabla_v \ln \bs q_{T-t}(X_t, V_t)}^2 \, \D t\Bigr) < \infty\,.
    \end{equation}
    Then,
    \begin{align*}
        \KL(\bQ_T \mmid \bPinf_T)
        &= \E_{\bQ_T} \ln \frac{\D \bQ_T}{\D\bPinf_T} \\
        &= 2 \sum_{k=0}^{N-1}\E_{\bQ_T} \int_{kh}^{(k+1)h} \norm{\bss_{T-kh}(X_{kh}, V_{kh}) - \nabla_v \ln \bs q_{T-t}(X_t, V_t)}^2 \, \D t\,.
    \end{align*}
\end{corollary}

Similarly to Appendix~\ref{scn:checking_girsanov}, even if Novikov's condition does not hold, one can use an approximation to argue that the KL divergence is still upper bounded by the last expression. Since the argument follows along the same lines, we omit it for brevity.

Using this, we now aim to prove the following theorem.

\begin{theorem}[discretization error for CLD]\label{thm:cld_discretization}
    Suppose that Assumptions~\ref{ass:second_moment},~\ref{ass:lip_score_cld}, and~\ref{ass:score_error_cld} hold.
    Let $\bQ_T$ and $\bPinf_T$ denote the measures on path space corresponding to the reverse process~\eqref{eq:reverse_cld} and the SGM algorithm with $L^2$-accurate score estimate initialized at $\bs q_T$.
    Assume that $L\ge 1$ and $h \lesssim 1/L$.
    Then,
    \begin{align*}
        {\TV(\bPinf_T, \bQ_T)}^2
        \le \KL(\bQ_T \mmid \bPinf_T)
        &\lesssim (\epssc^2 + L^2 dh + L^2 \mf m_2^2 h^2) \, T\,.
    \end{align*}
\end{theorem}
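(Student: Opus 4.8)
The plan is to run the same argument as for Theorem~\ref{thm:ddpm_discretization}, now starting from the Girsanov identity of Corollary~\ref{cor:girsanov_cld}. That identity reduces the bound on $\KL(\bQ_T \mmid \bPinf_T)$ to controlling, for every $k$ and every $t \in [kh,(k+1)h]$, the quantity $\E_{\bQ_T}[\norm{\bss_{T-kh}(X_{kh},V_{kh}) - \nabla_v \ln \bs q_{T-t}(X_t,V_t)}^2]$, after which $\TV(\bPinf_T,\bQ_T)^2 \le \tfrac{1}{2}\KL(\bQ_T\mmid\bPinf_T)$ by Pinsker's inequality. As in the DDPM case, Novikov's condition~\eqref{eq:novikov} can fail under Assumptions~\ref{ass:second_moment},~\ref{ass:lip_score_cld},~\ref{ass:score_error_cld} alone, so a fully rigorous proof needs the same truncation workaround as in Appendix~\ref{app:novikov_ddpm}; below I describe the argument as if Novikov's condition held.

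I would decompose the integrand by the triangle inequality into three pieces: (i) the score-estimation term $\norm{\bss_{T-kh}(X_{kh},V_{kh}) - \nabla_v\ln\bs q_{T-kh}(X_{kh},V_{kh})}^2$; (ii) the time-change term $\norm{\nabla_v\ln\bs q_{T-kh}(X_{kh},V_{kh}) - \nabla_v\ln\bs q_{T-t}(X_{kh},V_{kh})}^2$; and (iii) the space-change term $\norm{\nabla_v\ln\bs q_{T-t}(X_{kh},V_{kh}) - \nabla_v\ln\bs q_{T-t}(X_t,V_t)}^2$. Under $\bQ_T$ the process $(X,V)$ is the time reversal of the forward CLD~\eqref{eq:cld_forward}, so $(X_{kh},V_{kh})\sim\bs q_{T-kh}$ with $T-kh = (N-k)h$ a multiple of $h$; hence piece (i) has expectation at most $\epssc^2$ by Assumption~\ref{ass:score_error_cld}, and piece (iii) is at most $L^2\,\E_{\bQ_T}[\norm{X_{kh}-X_t}^2 + \norm{V_{kh}-V_t}^2]$ by Assumption~\ref{ass:lip_score_cld}.

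The heart of the matter — and the step I expect to be the main obstacle — is piece (ii), the change of the velocity score in time. Using the closed-form solution~\eqref{eq:underdamped_sol}, for $\delta \deq t-kh \le h$ one has $\bs q_{T-kh} = {\exp(\delta\bs A_2)}_\#\bs q_{T-t} * \normal(0,\Sigma_\delta)$, where $\Sigma_\delta \deq \int_0^\delta \exp(s\bs A_2)\,\Sigma_2\Sigma_2^\T\,\exp(s\bs A_2^\T)\,\D s$. Because $\bs A_2$ is not normal (in the critically damped case it is not even diagonalizable), this is a genuinely anisotropic scaling-plus-convolution, so \cite[Lemma C.12]{leelutan2022generative} does not apply directly; instead I would invoke the more general Lemma~\ref{lem:scorechange}, which bounds how much a score changes under pushforward by an invertible linear map followed by a Gaussian convolution. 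The quantitative inputs are that the operator norm of $\exp(\delta\bs A_2)-I_{2d}$ is $O(h)$ and that the eigenvalues of $\Sigma_\delta$ are $O(h)$, both of which follow from $\on{spec}(\bs A_2)=\{-1\}$ and the decay estimate $\norm{\exp(s\bs A_2)} = O((1+s)\,e^{-s})$. This produces a bound of the same shape as~\eqref{eq:change_along_forward}: piece (ii) is $\lesssim L^2 dh + L^2 h^2\,\norm{(X_{kh},V_{kh})}^2 + L^2 h^2\,\norm{\nabla_v\ln\bs q_{T-t}(X_{kh},V_{kh})}^2$.

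Finally I would collect everything exactly as in the DDPM proof. Replacing $\norm{\nabla_v\ln\bs q_{T-t}(X_{kh},V_{kh})}^2$ by $\norm{\nabla_v\ln\bs q_{T-t}(X_t,V_t)}^2 + L^2\,(\norm{X_{kh}-X_t}^2+\norm{V_{kh}-V_t}^2)$ (folding the last term into piece (iii)) and using $L\ge 1$ and $h\lesssim 1/L$, it remains to bound in expectation under $\bQ_T$ the quantities $\norm{(X_{kh},V_{kh})}^2$, $\norm{\nabla_v\ln\bs q_{T-t}(X_t,V_t)}^2$, and $\norm{X_{kh}-X_t}^2+\norm{V_{kh}-V_t}^2$. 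Since $\bQ_T$ is the time reversal of the forward CLD, these follow from the CLD analogs of Lemmas~\ref{lem:ddpm_moment} and~\ref{lem:ddpm_movement}: a moment bound $\E\norm{(\Z_t,\V_t)}^2 \lesssim d\vee\mf m_2^2$ (writing $\bs q_t = {\exp(t\bs A_2)}_\#\bs q_0 * \normal(0,\Sigma_t)$ with $\Z_0\sim q$, $\V_0\sim\gamd$, and using $\norm{\exp(t\bs A_2)} = O(1)$, $\tr\Sigma_t = O(d)$); a score-norm bound $\E\norm{\nabla_v\ln\bs q_t(\Z_t,\V_t)}^2\lesssim Ld$ (from Assumption~\ref{ass:lip_score_cld} and the Stein-type identity of Lemma~\ref{lem:ddpm_moment} applied to the conditional law of $V$ given $X$); and a movement bound $\E[\norm{X_t-X_{kh}}^2 + \norm{V_t-V_{kh}}^2]\lesssim dh + \mf m_2^2 h^2$ (from $X_t - X_{kh} = \int_{kh}^t V_r\,\D r$ and $V_t - V_{kh} = \int_{kh}^t(X_r + 2V_r + 4\nabla_v\ln\bs q_{T-r}(X_r,V_r))\,\D r + 2(B_t-B_{kh})$, together with the first two bounds and $h\lesssim 1/L$). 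Substituting yields $\E_{\bQ_T}[\norm{\bss_{T-kh}(X_{kh},V_{kh}) - \nabla_v\ln\bs q_{T-t}(X_t,V_t)}^2] \lesssim \epssc^2 + L^2 dh + L^2\mf m_2^2 h^2$; integrating over $t\in[kh,(k+1)h]$, summing over $k$ through Corollary~\ref{cor:girsanov_cld}, and applying Pinsker's inequality gives the claimed bound on $\TV(\bPinf_T,\bQ_T)^2$.
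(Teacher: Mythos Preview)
Your proposal is correct and takes essentially the same approach as the paper: the same three-term decomposition of the Girsanov integrand, the same invocation of Lemma~\ref{lem:scorechange} (with $\bs M_0=\exp((t-kh)\bs A_2)$ and the corresponding covariance) for the time-change of the score, and the same CLD moment/score/movement bounds, which the paper records as Lemmas~\ref{lem:underdamped_bds} and~\ref{lem:underdamped_movement}. The only cosmetic differences are that the paper derives the movement bound using the forward CLD SDE rather than the reverse one, obtains the second-moment bound by citing the Wasserstein contraction of~\cite{chengetal2018underdamped} rather than directly from~\eqref{eq:underdamped_sol}, and keeps track of the full gradient $\nabla\ln\bs q_{T-t}$ (rather than just $\nabla_v$) coming out of Lemma~\ref{lem:scorechange}.
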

\begin{proof}
   For $t \in [kh,(k+1)h]$, we can decompose
    \begin{align}
        &\E_{\bQ_T}[\norm{\mathbf{s}_{T-kh}(X_{kh}, V_{kh}) - \nabla_v \ln \bs q_{T-t}(X_t, V_t)}^2] \\
        &\qquad{} \lesssim \E_{\bQ_T}[\norm{\mathbf{s}_{T-kh}(X_{kh}, V_{kh}) - \nabla_v \ln \bs q_{T-kh}(X_{kh}, V_{kh})}^2]
        \\
        &\quad\qquad\qquad{} + \E_{\bQ_T}[\norm{\nabla_v \ln \bs q_{T-kh}(X_{kh}, V_{kh}) - \nabla_v \ln \bs q_{T-t}(X_{kh}, V_{kh})}^2] \\
        &\quad\qquad\qquad{} + \E_{\bQ_T}[\norm{\nabla_v \ln \bs q_{T-t}(X_{kh}, V_{kh}) - \nabla_v \ln \bs q_{T-t}(X_t, V_t)}^2] \\
        &\qquad{} \lesssim \epssc^2 + \E_{\bQ_T}\Bigl[\Bigl\lVert \nabla_v \ln \frac{\bs q_{T-kh}}{\bs q_{T-t}}(X_{kh}, V_{kh}) \Bigr\rVert^2\Bigr] + L^2 \E_{\bQ_T}[\norm{(X_{kh}, V_{kh}) - (X_t, V_t)}^2]\,. \label{eq:sqtriangle_under}
    \end{align}
    The change in the score function is bounded by Lemma~\ref{lem:scorechange}, which generalizes~\cite[Lemma C.12]{leelutan2022generative}.
    From the representation~\eqref{eq:underdamped_sol} of the solution to the CLD, we note that
    \begin{align*}
        \bs q_{T-kh}
        &= {(\bs M_0)}_\# \bs q_{T-t} * \normal(0, \bs M_1)
    \end{align*}
    with
    \begin{align*}
        \bs M_0
        &= \exp\bigl((t-kh) \,\bs A_2\bigr)\,,\\
        \bs M_1
        &= \int_0^{t-kh} \exp\{(t-kh-s) \, \bs A_2\} \, \Sigma_2 \Sigma_2^\T \exp\{(t-kh-s) \, \bs A_2^\T\} \, \D s\,.
    \end{align*}
    In particular, since $\norm{\bs A_2}_{\rm op} \lesssim 1$, $\norm{\bs A_2^{-1}}_{\rm op} \lesssim 1$, and $\norm{\Sigma_2}_{\rm op} \lesssim 1$ it follows that $\norm{\bs M_0}_{\rm op} = 1 + O(h)$ and $\norm{\bs M_1}_{\rm op} = O(h)$. Substituting this into Lemma~\ref{lem:scorechange}, we deduce that if $h\lesssim 1/L$, then
    \begin{align*}
        &\Bigl\lVert \nabla_v \ln \frac{\bs q_{T-kh}}{\bs q_{T-t}}(X_{kh}, V_{kh}) \Bigr\rVert^2
        \le \Bigl\lVert \nabla \ln \frac{\bs q_{T-kh}}{\bs q_{T-t}}(X_{kh}, V_{kh}) \Bigr\rVert^2 \\
        &\qquad \lesssim L^2dh + L^2h^2\, (\norm{X_{kh}}^2 +\norm{V_{kh}}^2) + (1 + L^2)\,h^2\,\norm{\nabla \ln {\bs q}_{T-t}(X_{kh},V_{kh})}^2 \\
        &\qquad \lesssim L^2dh + L^2h^2 \,(\norm{X_{kh}}^2 +\norm{V_{kh}}^2) + L^2h^2\,\norm{\nabla\ln {\bs q}_{T-t}(X_{kh}, V_{kh})}^2\,,
    \end{align*}
    where in the last step we used $L\ge 1$.
    
    For the last term,
    \begin{align*}
        \norm{\nabla \ln \bs q_{T-t}(X_{kh}, V_{kh})}^2
        &\lesssim \norm{\nabla \ln \bs q_{T-t}(X_t, V_t)}^2 + L^2 \,\norm{(X_{kh}, V_{kh}) - (X_t, V_t)}^2\,,
    \end{align*}
    where the second term above is absorbed into the third term of the decomposition~\eqref{eq:sqtriangle_under}. Hence,
    \begin{align*}
        &\E_{\bQ_T}[\norm{\mathbf{s}_{T-kh}(X_{kh}, V_{kh}) - \nabla_v \ln \bs q_{T-t}(X_t, V_t)}^2] \\
        &\qquad \lesssim \epssc^2 + L^2 dh + L^2 h^2 \E_{\bQ_T}[\norm{X_{kh}}^2 + \norm{V_{kh}}^2]  \\
        &\qquad \qquad{} + L^2 h^2 \E_{\bQ_T}[\norm{\nabla \ln \bs q_{T-t}(X_t,V_t)}^2] \\
        &\qquad \qquad{} + L^2 \E_{\bQ_T}[\norm{(X_{kh}, V_{kh}) - (X_t,V_t)}^2]\,.
    \end{align*}
    
    By applying the moment bounds in Lemma~\ref{lem:underdamped_bds} together with Lemma~\ref{lem:underdamped_movement} on the movement of the CLD process, we obtain
    \begin{align*}
        &\E_{\bQ_T}[\norm{\mathbf{s}_{T-kh}(X_{kh}, V_{kh}) - \nabla_v \ln \bs q_{T-t}(X_t, V_t)}^2] \\
        &\qquad \lesssim \epssc^2 + L^2 dh + L^2 h^2 \, (d+\mf m_2^2) + L^3 d h^2 + L^2 \, (dh + \mf m_2^2 h^2) \\
        &\qquad \lesssim \epssc^2 + L^2 dh + L^2 \mf m_2^2 h^2\,.
    \end{align*}
    The proof is concluded via an approximation argument as in Section~\ref{scn:checking_girsanov}.
\end{proof}

\begin{remark}\label{rmk:cld_sucks}
	We now pause to discuss why the discretization bound above does not improve upon the result for DDPM (Theorem~\ref{thm:ddpm_discretization}). In the context of log-concave sampling, one instead considers the underdamped Langevin process
	\begin{align*}
		\D X_t 
		&= V_t\,, \\
		\D V_t
		&= -\nabla U(X_t)\, \D t - \gamma \, V_t \, \D t + \sqrt{2\gamma} \, \D B_t\,,
	\end{align*}
	which is discretized to yield the algorithm
	\begin{align*}
		\D X_t 
		&= V_t\,, \\
		\D V_t
		&= -\nabla U(X_{kh})\, \D t - \gamma \, V_t \, \D t + \sqrt{2\gamma} \, \D B_t\,,
	\end{align*}
	for $t \in [kh,(k+1)h]$. Let $\bs P_T$ denote the path measure for the algorithm, and let $\bs Q_T$ denote the path measure for the continuous-time process. After applying Girsanov's theorem, we obtain
	\begin{align*}
		\KL(\bs Q_T \mmid \bs P_T)
		&\asymp \frac{1}{\gamma} \sum_{k=0}^{N-1} \E_{\bs Q_T} \int_{kh}^{(k+1)h} \norm{\nabla U(X_t) - \nabla U(X_{kh})}^2 \, \D t\,.
	\end{align*}
	In this expression, note that $\nabla U$ depends only on the position coordinate. Since the $X$ process is smoother (as we do not add Brownian motion directly to $X$), the error $\norm{\nabla U(X_t) - \nabla U(X_{kh})}^2$ is of size $O(dh^2)$, which allows us to take step size $h \lesssim 1/\sqrt d$. This explains why the use of the underdamped Langevin diffusion leads to improved dimension dependence for log-concave sampling.
	
	In contrast, consider the reverse process, in which
	\begin{align*}
		\KL(\bQ_T \mmid \bPinf_T)
		&= 2 \sum_{k=0}^{N-1}\E_{\bQ_T} \int_{kh}^{(k+1)h} \norm{\bss_{T-kh}(X_{kh}, V_{kh}) - \nabla_v \ln \bs q_{T-t}(X_t, V_t)}^2 \, \D t\,.
	\end{align*}
	Since discretization of the reverse process involves the score function, which depends on both $X$ \emph{and} $V$, the error now involves controlling $\norm{V_t - V_{kh}}^2$, which is of size $O(dh)$ (the process $V$ is not very smooth because it includes a Brownian motion component). Therefore, from the form of the reverse process, we may expect that SGMs based on the CLD do not improve upon the dimension dependence of DDPM\@.
	
	In Section~\ref{scn:cld_lower_bd}, we use this observation in order to prove a rigorous lower bound against discretization of SGMs based on the CLD\@.
\end{remark}

\subsection{Proof of Theorem~\ref{thm:cld}}

\begin{proof}[Proof of Theorem~\ref{thm:cld}]
    By the data processing inequality, 
    \begin{align*}
        \TV(\bp_T, \bs q_0)
        &\le \TV(\bs P_T, \bPinf_T) + \TV(\bPinf_T, \bQ_T)
        \le \TV(\bs q_T, \bgam) + \TV(\bPinf_T, \bQ_T)\,.
    \end{align*}
    In~\cite{maetal2021nesterovmcmc}, following the entropic hypocoercivity approach of~\cite{villani2009hypocoercivity}, Ma et al.\ consider a Lyapunov functional $\eu L$ which is equivalent to the sum of the KL divergence and the Fisher information,
    \begin{align*}
        \eu L(\bs \mu \mmid \bgam)
        &\asymp \KL(\bs\mu \mmid \bgam) + \FI(\bs\mu \mmid \bgam)\,,
    \end{align*}
    which decays exponentially fast in time: there exists a universal constant $c > 0$ such that for all $t\ge 0$,
    \begin{align*}
        \eu L(\bs q_t \mmid \bgam)
        &\le \exp(-ct) \, \eu L(\bs q_0 \mmid \bgam)\,.
    \end{align*}
    Since $\bs q_0 = q\otimes \gamd$ and $\bgam = \gamd \otimes \gamd$, then $\eu L(\bs q_0 \mmid \bgam) \lesssim \KL(q \mmid \gamd) + \FI(q \mmid \gamd)$.
    By Pinsker's inequality and Theorem~\ref{thm:cld_discretization}, we deduce that
    \begin{align*}
        \TV(\bp_T, \bs q_0)
        &\lesssim \sqrt{\KL(q \mmid \gamd) + \FI(q\mmid \gamd)} \exp(-cT) + (\epssc + L\sqrt{dh} + L\mf m_2 h) \, \sqrt T\,,
    \end{align*}
    which completes the proof.
\end{proof}

\subsection{Auxiliary lemmas}\label{scn:cld_auxiliary}

We begin with the perturbation lemma for the score function.

\begin{lemma}[score perturbation lemma]\label{lem:scorechange}
    Let $0 < \zeta < 1$. Suppose that $\bs M_0, \bs M_1 \in \R^{2d\times 2d}$ are two matrices, where $\bs M_1$ is symmetric. Also, assume that $\norm{\bs M_0 - \bs I_{2d}}_{\rm op} \le \zeta$, so that $\bs M_0$ is invertible. Let $\bs q = \exp(-\bs H)$ be a probability density on $\R^{2d}$ such that $\nabla \bs H$ is $L$-Lipschitz with $L \le \frac{1}{4\,\norm{\bs M_1}_{\rm op}}$.
    Then, it holds that
    \begin{align*}
        \Bigl\lVert \nabla \ln \frac{{(\bs M_0)}_\# \bs q * \normal(0, \bs M_1)}{\bs q}(\bs \theta)\Bigr\rVert
        &\lesssim L\sqrt{\norm{\bs M_1}_{\rm op}\, d} + L\zeta \, \norm{\bs\theta} + (\zeta + L\,\norm{\bs M_1}_{\rm op})\,\norm{\nabla \bs H(\bs \theta)}\,.
    \end{align*}
\end{lemma}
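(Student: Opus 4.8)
The plan is to write the perturbed score as a posterior expectation and then control that posterior by a strong-log-concavity argument. Set $\hat{\bs q} \deq {(\bs M_0)}_\# \bs q * \normal(0, \bs M_1)$ and let $\phi_{\bs M_1}$ denote the density of $\normal(0,\bs M_1)$. A change of variables gives $\hat{\bs q}(\bs\theta) = \int \bs q(\bs\theta_0)\,\phi_{\bs M_1}(\bs\theta - \bs M_0 \bs\theta_0)\,\D\bs\theta_0$, which is smooth and strictly positive. Differentiating under the integral, using $\nabla_{\bs\theta}\phi_{\bs M_1}(\bs\theta - \bs M_0\bs\theta_0) = -\bs M_0^{-\T}\,\nabla_{\bs\theta_0}\phi_{\bs M_1}(\bs\theta - \bs M_0\bs\theta_0)$, and then integrating by parts in $\bs\theta_0$, I get the exact identity
\[
    \nabla \ln \hat{\bs q}(\bs\theta) = -\bs M_0^{-\T}\,\E_\pi[\nabla \bs H(\bs\theta_0)]\,, \qquad \pi(\bs\theta_0) \propto \exp\Bigl(-\bs H(\bs\theta_0) - \tfrac12\,{(\bs\theta - \bs M_0\bs\theta_0)}^\T \bs M_1^{-1}\,(\bs\theta - \bs M_0\bs\theta_0)\Bigr)\,,
\]
where $\pi$ depends on the fixed point $\bs\theta$. (If $\bs M_1$ is only positive semidefinite, I first prove the bound for $\bs M_1 + \varepsilon \bs I_{2d}$ and send $\varepsilon \downarrow 0$.) Since $\nabla \ln \bs q(\bs\theta) = -\nabla\bs H(\bs\theta)$, this yields the exact decomposition
\[
    \nabla \ln \frac{\hat{\bs q}}{\bs q}(\bs\theta) = (\bs I_{2d} - \bs M_0^{-\T})\,\nabla\bs H(\bs\theta) \; - \; \bs M_0^{-\T}\,\E_\pi[\nabla\bs H(\bs\theta_0) - \nabla\bs H(\bs\theta)]\,.
\]

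For the first term, $\norm{\bs I_{2d} - \bs M_0^{-\T}}_{\rm op} = \norm{\bs I_{2d} - \bs M_0^{-1}}_{\rm op} \le \zeta/(1-\zeta) \lesssim \zeta$ (I may assume throughout that $\zeta$ is below a small absolute constant, which is all that is used in the applications, where $\zeta = O(h)$), so this contributes $\lesssim \zeta\,\norm{\nabla\bs H(\bs\theta)}$. For the second term, the $L$-Lipschitzness of $\nabla\bs H$ bounds it by $\norm{\bs M_0^{-\T}}_{\rm op} \, L\, \E_\pi\norm{\bs\theta_0 - \bs\theta} \lesssim L\,\E_\pi\norm{\bs\theta_0 - \bs\theta}$, so everything reduces to bounding the posterior mean displacement $\E_\pi\norm{\bs\theta_0 - \bs\theta}$.

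The crucial point is that $\pi$ is strongly log-concave: its potential has Hessian $\nabla^2 \bs H(\bs\theta_0) + \bs M_0^\T \bs M_1^{-1} \bs M_0 \succeq \bigl(-L + \sigma_{\min}(\bs M_0)^2/\norm{\bs M_1}_{\rm op}\bigr)\,\bs I_{2d}$, and using $\sigma_{\min}(\bs M_0) \ge 1-\zeta$ together with the hypothesis $L \le 1/(4\norm{\bs M_1}_{\rm op})$, this is $\gtrsim \norm{\bs M_1}_{\rm op}^{-1}\,\bs I_{2d}$ for $\zeta$ small. Hence $\mathrm{Cov}(\pi) \preceq O(\norm{\bs M_1}_{\rm op})\,\bs I_{2d}$, so $\E_\pi\norm{\bs\theta_0 - \E_\pi\bs\theta_0}^2 \lesssim \norm{\bs M_1}_{\rm op}\,d$. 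To locate the mean, the stationarity identity $\E_\pi[\nabla(\text{potential})] = 0$ rearranges to $\bs M_0\,\E_\pi\bs\theta_0 - \bs\theta = -\bs M_1 \bs M_0^{-\T}\,\E_\pi[\nabla\bs H(\bs\theta_0)]$; combining this with $\E_\pi\norm{\nabla\bs H(\bs\theta_0)} \le \norm{\nabla\bs H(\bs M_0^{-1}\bs\theta)} + L\,\E_\pi\norm{\bs\theta_0 - \bs M_0^{-1}\bs\theta}$ and absorbing the resulting self-referential term using $L\norm{\bs M_1}_{\rm op} \le 1/4$, a short bootstrap gives $\E_\pi\norm{\bs\theta_0 - \bs M_0^{-1}\bs\theta} \lesssim \norm{\bs M_1}_{\rm op}\,\norm{\nabla\bs H(\bs M_0^{-1}\bs\theta)} + \sqrt{\norm{\bs M_1}_{\rm op}\,d}$. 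Finally $\norm{\nabla\bs H(\bs M_0^{-1}\bs\theta)} \le \norm{\nabla\bs H(\bs\theta)} + L\norm{\bs M_0^{-1}\bs\theta - \bs\theta} \lesssim \norm{\nabla\bs H(\bs\theta)} + L\zeta\norm{\bs\theta}$ and $\norm{\bs M_0^{-1}\bs\theta - \bs\theta} \lesssim \zeta\norm{\bs\theta}$, so, collecting terms and using $L\norm{\bs M_1}_{\rm op} \le 1/4$ once more to fold $L\zeta\norm{\bs M_1}_{\rm op}\norm{\bs\theta}$ into $\zeta\norm{\bs\theta}$,
\[
    \E_\pi\norm{\bs\theta_0 - \bs\theta} \lesssim \zeta\,\norm{\bs\theta} + \norm{\bs M_1}_{\rm op}\,\norm{\nabla\bs H(\bs\theta)} + \sqrt{\norm{\bs M_1}_{\rm op}\,d}\,.
\]
Multiplying by $L$ and adding the first-term bound $\zeta\,\norm{\nabla\bs H(\bs\theta)}$ produces exactly $L\sqrt{\norm{\bs M_1}_{\rm op}\,d} + L\zeta\,\norm{\bs\theta} + (\zeta + L\,\norm{\bs M_1}_{\rm op})\,\norm{\nabla\bs H(\bs\theta)}$, as claimed.

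The main obstacle is the posterior analysis: verifying strong log-concavity of $\pi$ under precisely the stated smallness conditions, and executing the bootstrap for $\E_\pi\bs\theta_0$, where the term $\E_\pi[\nabla\bs H(\bs\theta_0)]$ appears on both sides of the stationarity identity and must be controlled by $\norm{\nabla\bs H(\bs\theta)}$, $\norm{\bs\theta}$, and $\sqrt{d}$ alone, all while bookkeeping constants so that the hypotheses $\norm{\bs M_0 - \bs I_{2d}}_{\rm op} \le \zeta$ and $L \le 1/(4\norm{\bs M_1}_{\rm op})$ are exactly what is consumed. A secondary technical point is justifying the differentiation-under-the-integral, the integration by parts, and the semidefinite case of $\bs M_1$, all of which are routine via the $\varepsilon$-regularization above.
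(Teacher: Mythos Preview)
Your argument is correct and rests on the same core idea as the paper's proof: represent the perturbed score as an expectation under a posterior $\pi$, observe that the Gaussian factor makes $\pi$ strongly log-concave with parameter $\gtrsim 1/\norm{\bs M_1}_{\rm op}$ once $L \le 1/(4\norm{\bs M_1}_{\rm op})$, and use this to bound the posterior displacement.

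The packaging differs in two minor ways. First, the paper proceeds in two stages---it proves the case $\bs M_0 = \bs I_{2d}$ first and then handles general $\bs M_0$ by the triangle inequality $\nabla\ln\frac{(\bs M_0)_\# \bs q * \normal(0,\bs M_1)}{\bs q} = \nabla\ln\frac{(\bs M_0)_\# \bs q * \normal(0,\bs M_1)}{(\bs M_0)_\# \bs q} + \nabla\ln\frac{(\bs M_0)_\# \bs q}{\bs q}$---whereas you fold $\bs M_0$ directly into the posterior and do everything in one pass. Second, the paper centers the posterior analysis at the \emph{mode} $\bs\theta_\star$ (using the first-order condition $\nabla\bs H(\bs\theta_\star) + \bs M_1^{-1}(\bs\theta_\star - \bs\theta) = 0$ and a cited concentration bound around the mode), while you center at the \emph{mean} (using the stationarity identity $\E_\pi[\nabla V] = 0$ and the Brascamp--Lieb covariance bound). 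Both choices are standard for strongly log-concave measures and lead to the same estimate; your one-shot organization is slightly more economical, while the paper's two-stage split makes the special case $\bs M_0 = \bs I_{2d}$ (which is what is used for DDPM) available as a standalone statement. The paper also handles the semidefinite $\bs M_1$ by working directly on the affine subspace $\bs\theta + \operatorname{range}\bs M_1$ rather than via your $\varepsilon$-regularization, but either route is fine.
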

\begin{proof}
    The proof follows along the lines of~\cite[Lemma C.12]{leelutan2022generative}.
    First, we show that when $\bs M_0 = \bs I_{2d}$, if $L \le \frac{1}{2\,\norm{\bs M_1}_{\rm op}}$ then   
    \begin{equation}
        \Bigl\lVert \nabla \ln \frac{\bs q * \normal(0, \bs M_1)}{\bs q}(\bs \theta)\Bigr\rVert
        \lesssim L\sqrt{\norm{\bs M_1}_{\rm op}\, d} + L \, \norm{\bs M_1}_{\rm op} \, \norm{\nabla \bs H(\bs\theta)}\,. \label{eq:M0id}
    \end{equation}
    Let $\mc S$ denote the subspace $\mc S \deq \range \bs M_1$.
    Then, since
    \begin{align*}
        \bigl(\bs q * \normal(0, \bs M_1)\bigr)(\bs \theta)
        &= \int_{\bs\theta + \mc S} \exp\bigl(-\frac{1}{2} \, \langle \bs \theta - \bs \theta', \bs M_1^{-1} \, (\bs \theta - \bs \theta')\rangle\bigr)\, \bs q(\D\bs\theta')\,,
    \end{align*}
    where $\bs M_1^{-1}$ is well-defined on $\mc S$, we have
    \begin{align*}
        \Bigl\lVert \nabla \ln \frac{\bs q * \normal(0, \bs M_1)}{\bs q}(\bs \theta)\Bigr\rVert
        &= \Bigl\lVert \frac{\int_{\bs\theta + \mc S} \nabla \bs H(\bs\theta') \exp(-\frac{1}{2} \, \langle \bs \theta - \bs \theta', \bs M_1^{-1} \, (\bs \theta - \bs \theta')\rangle)\, \bs q(\D\bs\theta')}{\int_{\bs\theta + \mc S} \exp(-\frac{1}{2} \, \langle \bs \theta - \bs \theta', \bs M_1^{-1} \, (\bs \theta - \bs \theta')\rangle)\, \bs q(\D\bs\theta')} - \nabla \bs H(\bs\theta)\Bigr\rVert \\
        &= \norm{\E_{\bs q_{\bs\theta}}\nabla \bs H - \nabla \bs H(\bs\theta)}\,.
    \end{align*}
    Here, $\bs q_{\bs\theta}$ is the measure on $\bs\theta+\mc S$ such that
    \begin{align*}
        \bs q_{\bs\theta}(\D\bs\theta')
        &\propto \exp\bigl(-\frac{1}{2} \, \langle \bs \theta - \bs \theta', \bs M_1^{-1} \, (\bs \theta - \bs \theta')\rangle\bigr)\, \bs q(\D\bs\theta')\,.
    \end{align*}
    Note that since $L \le \frac{1}{2\,\norm{\bs M_1}_{\rm op}}$, then if we write $\bs q_{\bs\theta}(\bs\theta') \propto \exp(-\bs H_{\bs\theta}(\bs\theta'))$, we have
    \begin{align*}
        \nabla^2 \bs H_{\bs\theta}
        \succeq \bigl(\frac{1}{\norm{\bs M_1}_{\rm op}} - L\bigr) \, I_d
        \succeq \frac{1}{2\,\norm{\bs M_1}_{\rm op}} \, I_d \qquad\text{on}~\bs\theta+\mc S\,.
    \end{align*}
    Let $\bs\theta_\star \in \argmin \bs H_{\bs\theta}$ denote a mode.
    We bound
    \begin{align*}
        \norm{\E_{\bs q_{\bs\theta}}\nabla \bs H - \nabla \bs H(\bs\theta)}
        &\le L \E_{\bs\theta'\sim\bs q_{\bs\theta}}\norm{\bs\theta' - \bs\theta}
        \le L \E_{\bs\theta'\sim\bs q_{\bs\theta}}\norm{\bs\theta' - \bs\theta_\star} + L \, \norm{\bs\theta_\star - \bs\theta}\,.
    \end{align*}
    For the first term,~\cite[Proposition 2]{dalalyankaragulyanrioudurand2019nonstronglylogconcave} yields
    \begin{align*}
        \E_{\bs\theta'\sim\bs q_{\bs\theta}}\norm{\bs\theta' - \bs\theta_\star}
        &\le \sqrt{2 \, \norm{\bs M_1}_{\rm op}\, d}\,.
    \end{align*}
    For the second term, since the mode satisfies $\nabla \bs H(\bs\theta_\star) + \bs M_1^{-1} \, (\bs\theta_\star - \bs\theta) = 0$, we have
    \begin{align*}
        \norm{\bs\theta_\star - \bs\theta}
        &\le \norm{\bs M_1}_{\rm op} \, \norm{\nabla \bs H(\bs\theta_\star)}
        \le L\,\norm{\bs M_1}_{\rm op} \, \norm{\bs\theta_\star - \bs\theta} + \norm{\bs M_1}_{\rm op} \, \norm{\nabla \bs H(\bs\theta)}
    \end{align*}
    which is rearranged to yield
    \begin{align*}
        \norm{\bs\theta_\star - \bs\theta}
        &\le 2\, \norm{\bs M_1}_{\rm op} \, \norm{\nabla \bs H(\bs\theta)}\,.
    \end{align*}
    After combining the bounds, we obtain the claimed estimate \eqref{eq:M0id}.

    Next, we consider the case of general $\bs M_0$. We have
    \begin{equation}
        \Bigl\lVert \nabla \ln \frac{{(\bs M_0)}_\# \bs q * \normal(0, \bs M_1)}{\bs q}(\bs \theta)\Bigr\rVert \le \Bigl\lVert \nabla \ln \frac{{(\bs M_0)}_\# \bs q * \normal(0, \bs M_1)}{{(\bs M_0)}_\# \bs q}(\bs \theta)\Bigr\rVert + \Bigl\lVert \nabla \ln \frac{{(\bs M_0)}_\# \bs q}{\bs q}(\bs \theta)\Bigr\rVert.
    \end{equation}
    We can apply \eqref{eq:M0id} with $(\bs M_0)_\# \bs q$ in place of $\bs q$, noting that $(\bs M_0)_\# \bs q \propto \exp(-\bs H')$ for $\bs H' \deq \bs H\circ \bs M_0$ which is $L'$-smooth for $L'\deq L \,\norm{\bs M_0}_{\rm op}^2 \lesssim L$, to get
    \begin{align}
        \Bigl\lVert \nabla \ln \frac{{(\bs M_0)}_\# \bs q * \normal(0, \bs M_1)}{{(\bs M_0)}_\# \bs q}(\bs \theta)\Bigr\rVert &\lesssim L\sqrt{\norm{\bs M_1}_{\rm op} \,d} + L\,\norm{\bs M_1}_{\rm op} \,\norm{\bs M_0\nabla \bs H(\bs M_0\bs \theta)} \\
        &\lesssim L\sqrt{\norm{\bs M_1}_{\rm op}\, d} + L\,\norm{\bs M_1}_{\rm op} \,\norm{\nabla \bs H(\bs M_0\bs \theta)}\,.
    \end{align}
    Note that
    \begin{equation}
        \norm{\nabla \bs H(\bs M_0\bs\theta)} \le \norm{\nabla \bs H(\bs \theta)} + L\,\norm{(\bs M_0 - \bs I_{2d})\,\bs\theta} \lesssim \norm{\nabla \bs H(\bs\theta)} + L\zeta\, \norm{\bs\theta}\,.
    \end{equation}
    We also have
    \begin{align}
        \Bigl\lVert \nabla \ln \frac{{(\bs M_0)}_\# \bs q}{\bs q}(\bs \theta)\Bigr\rVert &= \norm{\bs M_0 \nabla \bs H(\bs M_0 \bs \theta) - \nabla \bs H(\bs \theta)} \\
        &\le \norm{\bs M_0\nabla \bs H(\bs M_0\bs \theta) - \bs M_0\nabla \bs H(\bs \theta)} + \norm{\bs M_0\nabla\bs H(\bs \theta) - \nabla \bs H(\bs \theta)} \\
        &\lesssim L\,\norm{(\bs M_0 - \bs I_{2d})\,\bs \theta} + \zeta\,\norm{\nabla \bs H(\bs \theta)} \lesssim L\zeta \,\norm{\bs \theta} + \zeta \,\norm{\nabla \bs H(\bs \theta)}\,.
    \end{align}
    
    Combining the bounds,
    \begin{align*}
        \Bigl\lVert \nabla \ln \frac{{(\bs M_0)}_\# \bs q * \normal(0, \bs M_1)}{\bs q}(\bs \theta)\Bigr\rVert
        &\lesssim L\sqrt{\norm{\bs M_1}_{\rm op}\, d} + L\zeta \, (1 + L\,\norm{\bs M_1}_{\rm op})\, \norm{\bs\theta} + (\zeta + L\,\norm{\bs M_1}_{\rm op})\,\norm{\nabla \bs H(\bs \theta)} \\
        &\lesssim L\sqrt{\norm{\bs M_1}_{\rm op}\, d} + L\zeta \, \norm{\bs\theta} + (\zeta + L\,\norm{\bs M_1}_{\rm op})\,\norm{\nabla \bs H(\bs \theta)}
    \end{align*}
    so the lemma follows.
\end{proof}

Next, we prove the moment and movement bounds for the CLD\@.

\begin{lemma}[moment bounds for CLD]\label{lem:underdamped_bds}
    Suppose that Assumptions~\ref{ass:second_moment} and~\ref{ass:lip_score_cld} hold.
    Let ${(\Z_t, \V_t)}_{t\in [0,T]}$ denote the forward process~\eqref{eq:cld_forward}.
    \begin{enumerate}
        \item (moment bound) For all $t\ge 0$,
            \begin{align*}
                \E[\norm{(\Z_t, \V_t)}^2]
                \lesssim d + \mf m_2^2\,.
            \end{align*}
        \item (score function bound) For all $t\ge 0$,
            \begin{align*}
                \E[\norm{\nabla \ln \bs q_t(\Z_t, \V_t)}^2]
                &\le Ld\,.
            \end{align*}
    \end{enumerate}
\end{lemma}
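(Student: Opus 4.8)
The plan is to prove the two bounds in Lemma~\ref{lem:underdamped_bds} by the same two-part strategy already used for DDPM in Lemma~\ref{lem:ddpm_moment}, adapted to the linear SDE structure of the CLD. For the moment bound, I would start from the closed-form solution~\eqref{eq:underdamped_sol} with $\gamma = 2$, namely $\btheta_t = \exp(t\bs A_2)\,\btheta_0 + \int_0^t \exp\{(t-s)\bs A_2\}\,\Sigma_2\,\D B_s$, and compute the second moment via the It\^o isometry: $\E[\norm{\btheta_t}^2] \lesssim \norm{\exp(t\bs A_2)}_{\rm op}^2\,\E[\norm{\btheta_0}^2] + \int_0^t \norm{\exp\{(t-s)\bs A_2\}\,\Sigma_2}_{\rm op}^2\,\D s$. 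Since $\spec(\bs A_2) = \{-1\}$, the semigroup $\exp(t\bs A_2)$ decays like $(1+t)\exp(-t)$, which is bounded uniformly in $t\ge 0$; hence both terms are $O(1)$ multiples of $\E[\norm{\btheta_0}^2]$ and $d$ respectively. Recalling $\btheta_0 = (\Z_0, \V_0)$ with $\Z_0 \sim q$ and $\V_0 \sim \gamd$, we have $\E[\norm{\btheta_0}^2] = \mf m_2^2 + d$, which gives the claimed $\E[\norm{(\Z_t,\V_t)}^2] \lesssim d + \mf m_2^2$.

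For the score function bound, I would mirror the integration-by-parts argument from part~(2) of Lemma~\ref{lem:ddpm_moment}, but on the $2d$-dimensional phase space. Writing $\bs q_t \propto \exp(-\bs H_t)$ on $\R^{2d}$, Assumption~\ref{ass:lip_score_cld} gives that $\nabla_v \ln \bs q_t$ is $L$-Lipschitz, so the velocity block of the Hessian of $\bs H_t$ satisfies $\norm{\nabla_v^2 \bs H_t}_{\rm op} \le L$. The key point is that the score in the reverse CLD only involves the $v$-gradient, so we only need to bound $\E[\norm{\nabla_v \bs H_t(\Z_t,\V_t)}^2]$. Here a subtlety arises: unlike the DDPM case, the joint density $\bs q_t$ need not be smooth in the position coordinate (the $\Z$ process has no diffusion), so the full generator identity $\E_{\bs q_t}\bs{\mc L}\bs H_t = 0$ needs to be handled with care. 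However, one can instead condition on $\Z_t = z$ and apply the one-dimensional-style identity in the $v$-variable alone: for the conditional density $\bs q_t(\cdot \mid z) \propto \exp(-\bs H_t(z,\cdot))$ on $\R^d$, which is smooth and has $L$-Lipschitz $v$-gradient, we get $\E[\norm{\nabla_v \bs H_t}^2 \mid \Z_t = z] = \E[\Delta_v \bs H_t \mid \Z_t = z] \le Ld$; taking expectation over $\Z_t$ yields $\E[\norm{\nabla_v \ln \bs q_t(\Z_t,\V_t)}^2] \le Ld$. Alternatively, since $\bs q_t$ is actually smooth for $t > 0$ (it is a Gaussian convolution), one can just run the $2d$-dimensional generator argument directly using only the $v$-block of the Hessian and then pass to the limit $t \to 0$ if needed.

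The main obstacle I anticipate is the regularity issue just flagged: the CLD forward process adds Brownian motion only to the velocity coordinate, so $\bs q_t$ is a \emph{degenerate} Gaussian convolution of the initial law and is not immediately smooth in all $2d$ coordinates, which complicates the naive integration-by-parts identity. The cleanest fix is the conditioning argument, which sidesteps this by only ever differentiating in $v$, where the relevant conditional laws are genuinely smooth with the needed Lipschitz gradient; this is also conceptually the right thing since the reverse process~\eqref{eq:reverse_cld} only uses $\nabla_v \ln \bs q_{T-t}$. A secondary (minor) point is verifying that the semigroup bound $\sup_{t \ge 0}\norm{\exp(t\bs A_2)}_{\rm op} < \infty$ and $\int_0^\infty \norm{\exp\{s\bs A_2\}\Sigma_2}_{\rm op}^2 \,\D s < \infty$ hold with absolute constants; this follows from the Jordan-block form of $\bs A_2$ (eigenvalue $-1$ with a single $2\times 2$ block), giving entries of the form $e^{-t}$ and $t e^{-t}$, all of which are bounded and integrable. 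Neither of these requires heavy computation, so the proof should be short once the conditioning viewpoint is adopted.
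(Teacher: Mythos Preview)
For part~(1) your approach is correct and genuinely different from the paper's. You compute directly from the explicit solution~\eqref{eq:underdamped_sol} together with the Jordan form of $\bs A_2$ to get $\sup_{t\ge 0}\norm{\exp(t\bs A_2)}_{\rm op}<\infty$ and a uniformly bounded It\^o-isometry contribution of order $d$, which is entirely self-contained. The paper instead writes $\E[\norm{\btheta_t}^2]=W_2^2(\bs q_t,\delta_{\bs 0})$, compares to $\bgam$ by the triangle inequality, and invokes the Wasserstein contraction of the CLD from~\cite{chengetal2018underdamped} to bound $W_2^2(\bs q_t,\bgam)\lesssim W_2^2(\bs q_0,\bgam)\lesssim d+\mf m_2^2$. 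Your route is more elementary; the paper's is a one-liner given the external result.

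For part~(2) there is a gap. Your conditioning argument correctly yields $\E[\norm{\nabla_v\ln\bs q_t}^2]\le Ld$, but the lemma as stated --- and as used in the proof of Theorem~\ref{thm:cld_discretization} via Lemma~\ref{lem:scorechange}, whose conclusion involves $\norm{\nabla\bs H(\btheta)}$ --- requires the \emph{full} gradient; your claim that ``only the $v$-gradient is needed'' is therefore not accurate in context. The paper's proof is simply the $2d$-dimensional version of the generator identity from Lemma~\ref{lem:ddpm_moment}, namely $\E_{\bs q_t}[\norm{\nabla\bs H_t}^2]=\E_{\bs q_t}[\Delta\bs H_t]$. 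Your ``alternative'' is essentially this argument, but you cannot get by ``using only the $v$-block of the Hessian'': bounding $\Delta=\Delta_z+\Delta_v$ needs the whole Hessian. Separately, your regularity concern is largely a red herring: for $t>0$ the density $\bs q_t$ is smooth by hypoellipticity of the CLD, so the integration-by-parts identity applies directly without any conditioning.
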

\begin{proof}
    \begin{enumerate}
        \item We can write
            \begin{align*}
                \E[\norm{(\Z_t, \V_t)}^2]
                &= W_2^2(\bs q_t, \delta_{\bs 0})
                \lesssim W_2^2(\bs q_t, \bgam) + W_2^2(\bgam, \delta_{\bs 0})
                \lesssim d + W_2^2(\bs q_t, \bgam)\,.
            \end{align*}
            Next, the coupling argument of~\cite{chengetal2018underdamped} shows that the CLD converges exponentially fast in the Wasserstein metric associated to a twisted norm $\opnorm\cdot$ which is equivalent (up to universal constants) to the Euclidean norm $\norm \cdot$.
            It implies the following result, see, e.g.,~\cite[Lemma 8]{chengetal2018underdamped}:
            \begin{align*}
                W_2^2(\bs q_t, \bgam)
                &\lesssim W_2^2(\bs q, \bgam)
                \lesssim W_2^2(\bs q, \delta_{\bs 0}) + W_2^2(\delta_{\bs 0}, \bgam)
                \lesssim d + \mf m_2^2\,.
            \end{align*}
        \item The proof is the same as in Lemma~\ref{lem:ddpm_moment}.
    \end{enumerate}
\end{proof}

\begin{lemma}[movement bound for CLD]\label{lem:underdamped_movement}
    Suppose that Assumptions~\ref{ass:second_moment} holds.
    Let ${(\Z_t, \V_t)}_{t\in [0,T]}$ denote the forward process~\eqref{eq:cld_forward}.
    For $0 < s < t$ with $\delta \deq t-s$, if $\delta \le 1$,
    \begin{align*}
        \E[\norm{(\Z_t, \V_t) - (\Z_s, \V_s)}^2]
        &\lesssim \delta^2 \mf m_2^2 + \delta d\,.
    \end{align*}
\end{lemma}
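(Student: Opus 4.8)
The plan is to mirror the proof of Lemma~\ref{lem:ddpm_movement} exactly, now using the explicit solution formula~\eqref{eq:underdamped_sol} for the CLD together with the second-moment estimates already established in Lemma~\ref{lem:underdamped_bds}. Write $\btheta_t = (\Z_t,\V_t)$ and recall that $\D\btheta_t = \bs A_2\btheta_t\,\D t + \Sigma_2\,\D B_t$ with $\norm{\bs A_2}_{\rm op}\lesssim 1$ and $\norm{\Sigma_2}_{\rm op}\lesssim 1$. The plan is first to write, for $s<t$,
\begin{align*}
    \btheta_t - \btheta_s
    &= \int_s^t \bs A_2 \btheta_r \,\D r + \int_s^t \Sigma_2 \,\D B_r\,,
\end{align*}
so that by $\norm{a+b}^2 \le 2\norm a^2 + 2\norm b^2$, Cauchy--Schwarz on the time integral (which costs a factor $\delta = t-s$), and the \Ito isometry on the stochastic integral,
\begin{align*}
    \E[\norm{\btheta_t-\btheta_s}^2]
    &\lesssim \delta \int_s^t \norm{\bs A_2}_{\rm op}^2 \, \E[\norm{\btheta_r}^2]\,\D r + \norm{\Sigma_2}_{\rm op}^2 \, \delta\, d
    \lesssim \delta \int_s^t \E[\norm{\btheta_r}^2]\,\D r + \delta d\,.
\end{align*}

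The second step is to invoke the moment bound from Lemma~\ref{lem:underdamped_bds}(1), namely $\E[\norm{\btheta_r}^2] \lesssim d + \mf m_2^2$ for all $r\ge 0$, which bounds the first term by $\delta^2(d + \mf m_2^2)$. Combining with $\delta d$ gives $\E[\norm{\btheta_t-\btheta_s}^2] \lesssim \delta^2(d+\mf m_2^2) + \delta d \lesssim \delta^2 \mf m_2^2 + \delta d$, where in the last step we absorb $\delta^2 d$ into $\delta d$ using $\delta \le 1$. This is identical to the final absorption step in Lemma~\ref{lem:ddpm_movement}.

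I do not expect any genuine obstacle here — this is a routine computation. The only points requiring minimal care are: (i) confirming that the operator norms $\norm{\bs A_2}_{\rm op}$ and $\norm{\Sigma_2}_{\rm op}$ are $O(1)$, which is immediate from the explicit forms $\bs A_2 = \left[\begin{smallmatrix} 0 & 1 \\ -1 & -2\end{smallmatrix}\right]$ and $\Sigma_2 = \left[\begin{smallmatrix} 0 \\ 2\end{smallmatrix}\right]$ (tensored with $I_d$); and (ii) applying the \Ito isometry to the vector-valued stochastic integral to get the factor $d$ (since the Brownian motion is $d$-dimensional and $\Sigma_2\Sigma_2^\T = 4\,\mathrm{diag}(0,1)\otimes I_d$ has trace $4d$). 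If one prefers, one can instead start from the closed form $\btheta_t-\btheta_s = (\exp((t-s)\bs A_2) - \bs I)\btheta_s + \int_s^t \exp((t-s')\bs A_2)\Sigma_2\,\D B_{s'}$ and use $\norm{\exp(u\bs A_2)-\bs I}_{\rm op}\lesssim u$ for $u\le 1$ together with Lemma~\ref{lem:underdamped_bds}(1); either route gives the claim with the same constants, so I would present whichever is shortest.
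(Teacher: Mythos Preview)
Your proposal is correct and takes essentially the same approach as the paper. The only cosmetic difference is that the paper treats the $\Z$ and $\V$ components separately (writing $\Z_t-\Z_s = \int_s^t \V_r\,\D r$ and $\V_t-\V_s = \int_s^t(-\Z_r-2\V_r)\,\D r + 2(B_t-B_s)$ and bounding each piece via Cauchy--Schwarz and Lemma~\ref{lem:underdamped_bds}), whereas you handle both at once via the combined vector $\btheta_t$; the underlying argument and constants are identical.
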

\begin{proof}
    First,
    \begin{align*}
        \E[\norm{\Z_t - \Z_s}^2]
        &= \E\Bigl[\Bigl\lVert \int_s^t \V_r \, \D r\Bigr\rVert^2\Bigr]
        \le \delta \int_s^t \E[\norm{\V_r}^2] \, \D r
        \lesssim \delta^2 \, (d + \mf m_2^2)\,,
    \end{align*}
    where we used the moment bound in Lemma~\ref{lem:underdamped_bds}. Next,
    \begin{align*}
        \E[\norm{\V_t - \V_s}^2]
        &= \E\Bigl[\Bigl\lVert \int_s^t (-\Z_r -2\,\V_r) \, \D r + 2 \, (B_t - B_s)\Bigr\rVert^2\Bigr]
        \lesssim \delta \int_s^t \E[\norm{\Z_r}^2 + \norm{\V_r}^2] \, \D r + \delta d \\
        &\lesssim \delta^2 \, (d + \mf m_2^2) + \delta d\,,
    \end{align*}
    where we used Lemma~\ref{lem:underdamped_bds} again.
\end{proof}

\subsection{Lower bound against CLD}\label{scn:cld_lower_bd}

When proving upper bounds on the KL divergence, we can use the approximation argument described in Section~\ref{scn:checking_girsanov} in order to invoke Girsanov's theorem. However, when proving lower bounds on the KL divergence, this approach no longer works, so we check Novikov's condition directly for the setting of Theorem~\ref{thm:cld_lower_bd}.

\begin{lemma}[Novikov's condition holds for CLD]
\label{lem:novikovcld}
Consider the setting of Theorem~\ref{thm:cld_lower_bd}.
Then, Novikov's condition~\ref{eq:NovikovCLD} holds.
\end{lemma}

We defer the proof of Lemma~\ref{lem:novikovcld} to the end of this section. Admitting Lemma~\ref{lem:novikovcld}, we now prove Theorem~\ref{thm:cld_lower_bd}.

\medskip{}

\begin{proof}[Proof of Theorem~\ref{thm:cld_lower_bd}]
	Since $\bs q_0 = \gamd \otimes \gamd = \bgam$ is stationary for the forward process~\eqref{eq:cld_forward}, we have $\bs q_t= \bgam$ for all $t\ge 0$.
	In this proof, since the score estimate is perfect and $\bs q_T = \bgam$, we simply denote the path measure for the algorithm as $\bs P_T = \bPinf_T$.
	From Girsanov's theorem in the form of Corollary~\ref{cor:girsanov_cld} and from $\bss_{T-kh}(x, v) = \nabla_v \ln \bs q_{T-kh}(x,v) = -v$, we have
	\begin{align}\label{eq:cld_lower_bd_girsanov}
		\KL(\bQ_T \mmid \bs P_T)
		&= 2\sum_{k=0}^{N-1} \E_{\bQ_T} \int_{kh}^{(k+1)h} \norm{V_{kh} - V_t}^2 \, \D t\,.
	\end{align}
	To lower bound this quantity, we use the inequality $\norm{x+y}^2 \ge \frac{1}{2} \, \norm x^2 - \norm y^2$ to write, for $t \in [kh,(k+1)h]$
	\begin{align*}
		\E_{\bQ_T}[\norm{V_{kh} - V_t}^2]
		&= \E[\norm{\V_{T-kh} - \V_{T-t}}^2] \\
		&= \E\Bigl[\Bigl\Vert \int_{T-t}^{T-kh} \{-\Z_s - 2\,\V_s\} \, \D s + 2 \, (B_{T-kh} - B_{T-t})\Bigr\rVert^2\Bigr] \\
		&\ge 2 \E[\norm{B_{T-kh} - B_{T-t}}^2] - \E\Bigl[\Bigl\Vert \int_{T-t}^{T-kh} \{-\Z_s - 2\,\V_s\} \, \D s \Bigr\rVert^2\Bigr] \\
		&\ge 2d \, (t-kh) - (t-kh) \int_{T-t}^{T-kh}\E[\norm{\Z_s + 2 \, \V_s}^2] \, \D s \\
		&\ge 2d \, (t-kh) - (t-kh) \int_{T-t}^{T-kh}\E[2 \, \norm{\Z_s}^2 + 8 \, \norm{\V_s}^2] \, \D s\,.
	\end{align*}
	Using the fact that $\Z_s \sim \gamd$ and $\V_s \sim \gamd$ for all $s \in [0,T]$, we can then bound
	\begin{align*}
		\E_{\bQ_T}[\norm{V_{kh} - V_t}^2]
		&\ge 2d \, (t-kh) - 10d \, {(t-kh)}^2
		\ge d \, (t-kh)\,,
	\end{align*}
	provided that $h \le \frac{1}{10}$. Substituting this into~\eqref{eq:cld_lower_bd_girsanov},
	\begin{align*}
		\KL(\bQ_T \mmid \bs P_T)
		&\ge 2d \sum_{k=0}^{N-1} \int_{kh}^{(k+1)h} {(t-kh)}^2 \, \D t
		= dh^2 N
		= dhT\,.
	\end{align*}
	This proves the result.
\end{proof}

This lower bound shows that the Girsanov discretization argument of Theorem~\ref{thm:cld_discretization} is essentially tight (except possibly the dependence on $L$).

We now prove Lemma~\ref{lem:novikovcld}.

\medskip{}

\begin{proof}[Proof of Lemma~\ref{lem:novikovcld}]
    Similarly to the proof of Theorem~\ref{thm:cld_lower_bd} above, we note that
    \begin{align*}
        &\norm{\bs s_{T-kh}(X_{kh}, V_{kh}) - \nabla_v \ln \bs q_{T-t}(X_t, V_t)}^2
        = \norm{\V_{T-kh} - \V_{T-t}}^2 \\
	&\qquad = \Bigl\Vert \int_{T-t}^{T-kh} \{-\Z_s - 2\,\V_s\} \, \D s + 2 \, (B_{T-kh} - B_{T-t})\Bigr\rVert^2 \\
        &\qquad \lesssim h^2 \sup_{s\in [0, T]}{(\norm{\Z_s}^2 + \norm{\V_s}^2)} + \sup_{s\in [T-(k+1)h,T-kh]}{\norm{B_{T-kh} - B_s}^2}\,.
    \end{align*}
    Hence, for a universal constant $C > 0$ (which may change from line to line)
    \begin{align*}
        &\E_{\bQ_T} \exp\Bigl(2 \sum_{k=0}^{N-1} \int_{kh}^{(k+1)h} \norm{\bss_{T-kh}(X_{kh}, V_{kh}) - \nabla_v \ln \bs q_{T-t}(X_t, V_t)}^2 \, \D t\Bigr) \\
        &\qquad\le \E\exp\Bigl(CTh^2 \sup_{s\in [0, T]}{(\norm{\Z_s}^2 + \norm{\V_s}^2)} + Ch \sum_{k=0}^{N-1}\sup_{s\in [T-(k+1)h,T-kh]}{\norm{B_{T-kh} - B_s}^2} \Bigr)\,.
    \end{align*}
    By the Cauchy{--}Schwarz inequality, to prove that this expectation is finite, it suffices to consider the two terms in the exponential separately.
    
    Next, we recall that
    \begin{align*}
        \D \Z_t
        &= \V_t \, \D t\,, \\
        \D \V_t
        &= -(\Z_t + 2\,\V_t) \, \D t + 2 \, \D B_t\,.
    \end{align*}
    Define $\Y_t \deq \Z_t + \V_t$. Then, $\D \Y_t = -\Y_t \, \D t + 2 \, \D B_t$, which admits the explicit solution
    \begin{align*}
        \Y_t
        &= \exp(-t)\, \Y_0 + 2\int_0^t \exp\{-(t-s)\} \,\D B_s\,.
    \end{align*}
    Also, $\D \Z_t = -\Z_t \, \D t + \Y_t \, \D t$, which admits the solution
    \begin{align*}
        \Z_t
        &=\exp(-t)\, \Z_0 + \int_0^t \exp\{-(t-s)\} \, \Y_t\,\D t\,.
    \end{align*}
    Hence,
    \begin{align*}
        \norm{\Z_t} + \norm{\V_t}
        &\le 2\,\norm{\Z_t} + \norm{\Y_t}
        \lesssim \norm{\Z_0} + \sup_{s\in [0,T]}{\norm{\Y_s}}
    \end{align*}
    and
    \begin{align*}
        \sup_{t\in [0,T]}{\norm{\Y_t}}
        &\lesssim \norm{\Z_0} + \norm{\V_0} +\sup_{t\in [0,T]}\Bigl\{ \exp(-t) \, \Bigl\lVert \int_0^t \exp(s) \, \D B_s\Bigr\rVert\Bigr\} \\
        &= \norm{\Z_0} + \norm{\V_0} +\sup_{t\in [0,T]}\exp(-t) \,\norm{\tilde B_{(\exp(2t)-1)/2}}
    \end{align*}
    where $\tilde B$ is another standard Brownian motion and we use the interpretation of stochastic integrals as time changes of Brownian motion~\cite[Corollary 7.1]{steele2001stochasticcalc}.
    Since $(\Z_0, \V_0) \sim \bgam$ has independent entries, then
    \begin{align*}
        \E\exp(CTh^2 \, \{\norm{\Z_0}^2 + \norm{\V_0}^2\})
        = \prod_{j=1}^d \E\exp(CTh^2 \, \langle e_j, \Z_0\rangle^2) \E\exp(CTh^2 \,\langle e_j, \V_0\rangle^2)
        < \infty
    \end{align*}
    provided that $h\lesssim 1/\sqrt{T}$.
    Also, by the Cauchy{--}Schwarz inequality, we can give a crude bound: writing $\tau(t) = (\exp(2t)-1)/2$,
    \begin{align*}
        &\E\exp\Bigl(CTh^2 \sup_{t\in [0,T]} \exp(-2t) \, \norm{\tilde B_{\tau(t)}}^2\Bigr) \\
        &\qquad \le \Bigl[\E\exp\Bigl(2CTh^2 \sup_{t\in [0,1]} \exp(-2t) \, \norm{\tilde B_{\tau(t)}}^2\Bigr)\Bigr]^{1/2} \\
        &\qquad\qquad{} \times\Bigl[\E\exp\Bigl(2CTh^2 \sup_{t\in [1,T]} \exp(-2t) \, \norm{\tilde B_{\tau(t)}}^2\Bigr)\Bigr]^{1/2}
    \end{align*}
    where, by standard estimates on the supremum of Brownian motion~\cite[see, e.g.,][Lemma 23]{chewi2021optimal}, the first factor is finite if $h \lesssim 1/\sqrt T$ (again using independence across the dimensions).
    For the second factor, if we split the sum according to $\exp(-2t) \asymp 2^k$ and use H\"older's inequality,
    \begin{align*}
        &\E\exp\Bigl(CTh^2 \sup_{t\in [1,T]} \exp(-2t) \,\norm{\tilde B_{\tau(t)}}^2\Bigr) \\
        &\qquad \le \prod_{k=1}^K \Bigl[\E\exp\Bigl(CKTh^2 \sup_{2^k \le t \le 2^{k+1}} \exp(-2t) \,\norm{\tilde B_{\tau(t)}}^2\Bigr)\Bigr]^{1/K}
    \end{align*}
    where $K = O(T)$. Then,
    \begin{align*}
        &\E\exp\Bigl(CT^2 h^2 \sup_{2^k \le t \le 2^{k+1}} \exp(-2t) \,\norm{\tilde B_{\tau(t)}}^2\Bigr) \\
         &\qquad \le \E\exp\Bigl(CT^2 h^2 2^{-k} \sup_{1 \le t \le 2^{k+1}} \norm{\tilde B_{\tau(t)}}^2\Bigr) < \infty\,,
    \end{align*}
    provided $h\lesssim 1/T$, where we again use~\cite[Lemma 23]{chewi2021optimal} and split across the coordinates.
    The Cauchy{--}Schwarz inequality then implies
    \begin{align*}
        \E\exp\Bigl(CTh^2 \sup_{s\in [0, T]}{(\norm{\Z_s}^2 + \norm{\V_s}^2)}\Bigr) < \infty\,.
    \end{align*}

    For the second term, by independence of the increments,
    \begin{align*}
         &\E\exp\Bigl( Ch \sum_{k=0}^{N-1}\sup_{s\in [T-(k+1)h,T-kh]}{\norm{B_{T-kh} - B_s}^2} \Bigr) \\
         &\qquad = \prod_{k=0}^{N-1} \E\exp\Bigl(Ch\sup_{s\in [T-(k+1)h,T-kh]}{\norm{B_{T-kh} - B_s}^2} \Bigr)
         = \Bigl[\E\exp\Bigl(Ch\sup_{s\in [0,h]}{\norm{B_s}^2} \Bigr)\Bigr]^N\,.
    \end{align*}
    By~\cite[Lemma 23]{chewi2021optimal}, this quantity is finite if $h \lesssim 1$, which completes the proof.
\end{proof}

\section{Conclusion}\label{scn:conclusion}

In this work, we provided the first convergence guarantees for SGMs which hold under realistic assumptions (namely, $L^2$-accurate score estimation and arbitrarily non-log-concave data distributions) and which scale polynomially in the problem parameters. Our results take a step towards explaining the remarkable empirical success of SGMs, at least under the assumption that the score function is learned with small $L^2$ error.

The main limitation of this work is that we did not address the question of when the score function can be learned well. In general, studying the non-convex training dynamics of learning the score function via neural networks is challenging, but we believe that the resolution of this problem, even for simple learning tasks, would shed considerable light on SGMs. Together with the results in this paper, it would yield the first end-to-end guarantees for SGMs.

In another direction, and in light of the interpretation of our result as a reduction of the task of sampling to the task of score function estimation, we ask whether there are situations of interest in which it is easier to algorithmically learn the score function (not necessarily via a neural network) than it is to (directly) sample.

\medskip{}

\textbf{Acknowledgments}.
We thank S\'ebastien Bubeck, Yongxin Chen, Tarun Kathuria, Holden Lee, Ruoqi Shen, and Kevin Tian for helpful discussions.
S.\ Chen was supported by NSF Award 2103300.
S.\ Chewi was supported by the Department of Defense (DoD) through the National Defense Science \& Engineering Graduate Fellowship (NDSEG) Program, as well as the NSF TRIPODS program (award DMS-2022448).
A.\ Zhang was supported in part by NSF CAREER-2203741.

\newpage
\appendix

\section{Derivation of the score matching objective}\label{app:score_matching_derivation}

In this section, we present a self-contained derivation of the score matching objective~\eqref{eq:score_matching_obj} for the reader's convenience. See also~\cite{hyv2005scorematching, vin2011scorematching, sonerm2019estimatinggradients}.

Recall that the problem is to solve
\begin{align*}
    \operatorname*{minimize}_{s_t \in \ms F}\quad \E_{q_t}[\norm{s_t - \nabla \ln q_t}^2]\,.
\end{align*}
This objective cannot be evaluated, even if we replace the expectation over $q_t$ with an empirical average over samples from $q_t$.
The trick is to use an integration by parts identity to reformulate the objective.
Here, $C$ will denote any constant that does not depend on the optimization variable $s_t$.
Expanding the square,
\begin{align*}
    \E_{q_t}[\norm{s_t-\nabla \ln q_t}^2]
    &= \E_{q_t}[\norm{s_t}^2 - 2 \, \langle s_t, \nabla \ln q_t \rangle] + C\,.
\end{align*}
We can rewrite the second term using integration by parts:
\begin{align*}
    \int \langle s_t, \nabla \ln q_t \rangle \, \D q_t
    &= \int \langle s_t, \nabla q_t \rangle
    = -\int (\divergence s_t) \, \D q_t \\
    & = -\iint (\divergence s_t)\bigl(\exp(-t) \, x_0 + \sqrt{1-\exp(-2t)} \, z_t\bigr) \, \D q(x_0) \, \D \gamd(z_t)\,,
\end{align*}
where $\gamd = \normal(0, I_d)$ and we used the explicit form of the law of the OU process at time $t$.
Recall the Gaussian integration by parts identity: for any vector field $v : \R^d\to\R^d$,
\begin{align*}
    \int (\divergence v) \, \D \gamd
    &= \int \langle x, v(x) \rangle \, \D \gamd(x)\,.
\end{align*}
Applying this identity,
\begin{align*}
    \int \langle s_t, \nabla \ln q_t \rangle \, \D q_t
    &= - \frac{1}{\sqrt{1-\exp(-2t)}} \int \langle z_t, s_t(x_t) \rangle \, \D q(x_0) \, \D \gamd(z_t)
\end{align*}
where $x_t = \exp(-t) \, x_0 + \sqrt{1-\exp(-2t)} \, z_t$.
Substituting this in,
\begin{align*}
    \E_{q_t}[\norm{s_t-\nabla \ln q_t}^2]
    &= \E\Bigl[\norm{s_t(X_t)}^2 + \frac{2}{\sqrt{1-\exp(-2t)}} \, \langle Z_t, s_t(X_t) \rangle\Bigr] + C \\
    &= \E\Bigl[ \Bigl\lVert s(X_t) + \frac{1}{\sqrt{1-\exp(-2t)}} \, Z_t \Bigr\rVert^2\Bigr] + C\,,
\end{align*}
where $X_0 \sim q$ and $Z_t \sim\gamd$ are independent, and $X_t \deq \exp(-t) \, X_0 + \sqrt{1-\exp(-2t)} \, Z_t$.

\section{Regularization}

\begin{lemma}\label{lem:cpt_supp_conv}
    Suppose that $\supp q \subseteq \msf B(0,R)$ where $R \geq 1$, and let $q_t$ denote the law of the OU process at time $t$, started at $q$. Let $\varepsilon > 0$ be such that $\varepsilon \ll \sqrt d$ and set $t \asymp \varepsilon^2/(\sqrt d \, (R\vee \sqrt d))$. Then,
    \begin{enumerate}
        \item $W_2(q_t,q) \leq \varepsilon$.
        \item $q_t$ satisfies
        \begin{align*}
            \KL(q_t \mmid \gamd)
            &\lesssim \frac{\sqrt d \, {(R \vee \sqrt d)}^3}{\varepsilon^2}\,.
        \end{align*}
        \item For every $t'\geq t$, $q_{t'}$ satisfies Assumption~\ref{ass:lip_score} with
        \begin{align*}
            L
            &\lesssim \frac{dR^2 \, {(R \vee \sqrt d)}^2}{\varepsilon^4}\,.
        \end{align*}
    \end{enumerate}
\end{lemma}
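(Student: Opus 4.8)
The three claims all concern the action of the Ornstein--Uhlenbeck flow on the compactly supported measure $q$, so the plan is to exploit the explicit representation $q_t = {(e^{-t})}_{\#} q * \normal(0,(1-e^{-2t})\,I_d)$, i.e.\ $\Z_t \eqdist e^{-t}\,\Z_0 + \sqrt{1-e^{-2t}}\,\xi$ with $\Z_0 \sim q$ and $\xi \sim \normal(0,I_d)$ independent. Since $\varepsilon \ll \sqrt d$ we have $t \asymp \varepsilon^2/(\sqrt d\,(R\vee\sqrt d)) \ll 1$, and I will use freely the elementary inequalities $1-e^{-t}\le t$, $1-e^{-2t}\le 2t$, and $1-e^{-2t}\ge t$ (the last valid for $t\le 1/2$). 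For part~(1), couple $q_t$ with $q$ through the common initial point $\Z_0$, giving $W_2^2(q_t,q) \le \E[\norm{\Z_t-\Z_0}^2] = {(1-e^{-t})}^2\,\E_q[\norm\cdot^2] + (1-e^{-2t})\,d \lesssim t^2 R^2 + t\,d$. Substituting the value of $t$, the term $t\,d \asymp \varepsilon^2\sqrt d/(R\vee\sqrt d) \le \varepsilon^2$ and $t^2 R^2 \asymp \varepsilon^4 R^2/(d\,{(R\vee\sqrt d)}^2) \le \varepsilon^4/d \le \varepsilon^2$, so $W_2(q_t,q)\le\varepsilon$ provided the implied constant in the definition of $t$ is taken small enough.

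For part~(2), the crux is a pointwise bound on the likelihood ratio. Completing the square inside the Gaussian mixture gives, for every $x$,
\[
  \frac{q_t(x)}{\gamd(x)} = {(1-e^{-2t})}^{-d/2}\int \exp\Bigl(-\frac{e^{-2t}}{2\,(1-e^{-2t})}\,\norm{x-e^t x_0}^2 + \frac{1}{2}\,\norm{x_0}^2\Bigr)\,q(\D x_0) \le {(1-e^{-2t})}^{-d/2}\,e^{R^2/2},
\]
using $\supp q \subseteq \msf B(0,R)$ and that the first term in the exponent is nonpositive. Integrating $\ln(q_t/\gamd)$ against $q_t$ then yields $\KL(q_t\mmid\gamd) \le \frac{d}{2}\ln\frac{1}{1-e^{-2t}} + \frac{R^2}{2} \le \frac{d}{2}\ln\frac1t + \frac{R^2}{2} \le \frac{d}{2t} + \frac{R^2}{2}$, where the last step uses $\ln u \le u$. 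Finally $d/t \asymp d^{3/2}(R\vee\sqrt d)/\varepsilon^2 \le \sqrt d\,{(R\vee\sqrt d)}^3/\varepsilon^2$ since $d\le{(R\vee\sqrt d)}^2$, and $R^2 = R^2\varepsilon^2/\varepsilon^2 \le \sqrt d\,{(R\vee\sqrt d)}^3/\varepsilon^2$ because $R^2\varepsilon^2 \le R^2 d \le \sqrt d\,{(R\vee\sqrt d)}^3$ (check the two cases $R\le\sqrt d$ and $R>\sqrt d$), which is the claimed estimate.

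For part~(3), fix $t'\ge t$ and write $q_{t'} = \law(\widehat Y + \sigma'\xi)$ with $\widehat Y \deq e^{-t'}\Z_0$ supported in $\msf B(0,e^{-t'}R)\subseteq\msf B(0,R)$ and $\sigma'^2 \deq 1-e^{-2t'} \ge 1-e^{-2t} \ge t$. By Tweedie's formula, $\nabla\ln q_{t'}(x) = -\sigma'^{-2}\,(x - \E[\widehat Y\mid X=x])$ for $X\sim q_{t'}$, and combining this with the identity $\nabla_x\,\E[\widehat Y\mid X=x] = \sigma'^{-2}\operatorname{Cov}(\widehat Y\mid X=x)$ (differentiating the log-partition function of the exponential family $y\mapsto\exp(\langle x,y\rangle/\sigma'^2)$) gives $\nabla^2\ln q_{t'}(x) = -\sigma'^{-2}\,I_d + \sigma'^{-4}\operatorname{Cov}(\widehat Y\mid X=x)$. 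Since the conditional law of $\widehat Y$ is again supported in $\msf B(0,R)$, we have $0\preceq\operatorname{Cov}(\widehat Y\mid X=x)\preceq R^2 I_d$, so $\norm{\nabla^2\ln q_{t'}(x)}_{\rm op} \le \sigma'^{-2} + R^2\sigma'^{-4} \le 2R^2/t^2$ (using $R\ge 1$, $t\le 1$), uniformly in $x$ and in $t'\ge t$. Substituting the value of $t$ gives Assumption~\ref{ass:lip_score} for every $q_{t'}$ with $L \lesssim R^2/t^2 \asymp dR^2\,{(R\vee\sqrt d)}^2/\varepsilon^4$.

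I do not anticipate a real obstacle: the two substantive inputs are the closed form of the Ornstein--Uhlenbeck likelihood ratio (part~2) and Tweedie's formula together with the posterior-covariance identity (part~3); everything else is elementary manipulation of $1-e^{-2t}$ and bookkeeping with the factor $R\vee\sqrt d$ under the standing assumption $\varepsilon\ll\sqrt d$. The only point that needs care is that all three bounds must hold for a single admissible choice of $t$, so part~(1) is what forces the implied constant in $t \asymp \varepsilon^2/(\sqrt d\,(R\vee\sqrt d))$ to be sufficiently small.
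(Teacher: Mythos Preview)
Your proposal is correct. Part~(1) is identical to the paper's argument, and in part~(3) you directly derive (via Tweedie's formula and the posterior--covariance identity) the Hessian bound that the paper quotes from \cite{mikulincer2022lipschitz}; these are the same proof up to whether the computation is done on the page or cited.

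The genuine difference is part~(2). The paper invokes the short--time regularization estimate of Otto--Villani,
\[
  \KL(q_t \mmid \gamd) \le \frac{W_2^2(q,\gamd)}{4t} \lesssim \frac{R^2+d}{t}\,,
\]
whereas you go through a pointwise bound on the likelihood ratio, obtaining
\[
  \KL(q_t\mmid\gamd) \le \frac{d}{2}\ln\frac{1}{1-e^{-2t}} + \frac{R^2}{2} \le \frac{d}{2t} + \frac{R^2}{2}\,.
\]
Your route is more elementary and self--contained (no external regularization lemma), and in fact slightly sharper in its $R$--dependence since the $R^2$ term does not carry the $1/t$ blow--up; both bounds collapse to the same final expression after substituting $t \asymp \varepsilon^2/(\sqrt d\,(R\vee\sqrt d))$. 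The paper's approach has the virtue of applying verbatim to any target $\gamd$ for which the underlying Wasserstein--to--KL regularization holds, while yours exploits the explicit Gaussian structure.
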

\begin{proof}
    \begin{enumerate}
    \item For the OU process~\eqref{eq:forward}, we have $\Z_t \deq \exp(-t) \, \Z_0 + \sqrt{1-\exp(-2t)} \, Z$, where $Z \sim \normal(0, I_d)$ is independent of $\Z_0$. Hence, for $t \lesssim 1$,
    \begin{align*}
        W_2^2(q,q_t)
        &\le \E\bigl[\bigl\lVert \bigl(1-\exp(-t)\bigr) \, \Z_0 + \sqrt{1-\exp(-2t)} \, Z\bigr\rVert^2\bigr] \\
        &= \bigl(1-\exp(-t)\bigr)^2 \E[\norm{\Z_0}^2] + \bigl(1-\exp(-2t)\bigr) \, d
        \lesssim R^2 t^2 + dt\,.
    \end{align*}
    We now take $t \lesssim \min\{\varepsilon/R, \varepsilon^2/d\}$ to ensure that $W_2^2(q,q_t) \le \varepsilon^2$.
    Since $\varepsilon\ll \sqrt d$, it suffices to take $t \asymp \varepsilon^2/(\sqrt d \, (R\vee \sqrt d))$.
        \item For this, we use the short-time regularization result in~\cite[Corollary 2]{ottvil2001comment}, which implies that
        \begin{align*}
            \KL(q_t \mmid \gamd)
            &\le \frac{W_2^2(q,\gamd)}{4t} \lesssim \frac{W_2^2(q,\delta_0) + W_2^2(\gamd,\delta_0)}{t}
            \lesssim \frac{\sqrt d \, {(R \vee \sqrt d)}^3}{\varepsilon^2}\,.
        \end{align*}
        \item Using \cite[Lemma 4]{mikulincer2022lipschitz}, along the OU process,
    \begin{align}
        \frac{1}{1-\exp(-2t)}\,I_d - \frac{\exp(-2t)\,R^2}{(1-\exp(-2t))^2}\,I_d \preccurlyeq -\nabla^2 \ln q_t(x) \preccurlyeq \frac{1}{1-\exp(-2t)}\,I_d\,. 
    \end{align}
    With our choice of $t$, it implies
    \begin{align*}
        \norm{\nabla^2 \ln q_{t'}}_{\rm op}
        &\lesssim \frac{1}{1-\exp(-2t')} \vee \frac{\exp(-2t')\,R^2}{(1-\exp(-2t'))^2}
        \lesssim \frac{1}{t} \vee \frac{R^2}{t^2}
        \lesssim \frac{dR^2 \, {(R \vee \sqrt d)}^2}{\varepsilon^4}\,.
    \end{align*}
    \end{enumerate}
\end{proof}

\printbibliography

\end{document}